\newtheorem{definition}{Definition}
\def\ie{{\em i.e.,}\xspace}
\def\eg{{\em e.g.,}\xspace}
\def\cf{{\em cf.}\xspace}
\def\wrt{{\em w.r.t.}\xspace}
\DeclareMathOperator*{\argmax}{\arg\,\max}
\definecolor{mygreylight}{rgb}{0.7,0.7,0.7}
\definecolor{mygreyverylight}{rgb}{0.9,0.9,0.9}
\definecolor{mybluelight}{rgb}{ 0.7137,0.8666,0.9098}
\definecolor{mybluedark}{rgb}{ 0.1921,0.5176,0.6078}
\definecolor{mygreen}{rgb}{ 0.1,0.8,0.1}
\definecolor{cwtitleblue}{rgb}{0.2,0.4,0.6}
\definecolor{cwblue1}{rgb}{0.27,0.427,0.623}
\definecolor{cwblue2}{rgb}{0.286,0.454,0.658}
\definecolor{cwblue3}{rgb}{0.6,0.8,0.99}
\definecolor{wp1color}{rgb}{0.64, 0.76, 0.68} 
\definecolor{wp1tcolor}{rgb}{0.74, 0.86, 0.78} 
\definecolor{wp2color}{rgb}{1.0, 0.99, 0.82} 
\definecolor{wp2tcolor}{rgb}{1.0, 0.99, 0.92} 
\definecolor{wp3color}{rgb}{0.74, 0.72, 0.42} 
\definecolor{wp4color}{rgb}{0.52, 0.73, 0.4} 
\definecolor{ganttdrawcolor}{rgb}{0.64, 0.76, 0.68} 
	\definecolor{sthlmLightBlue}{RGB}{214,237,252} 
	\definecolor{sthlmBlue}{RGB}{0,110,191} 
	\definecolor{sthlmLightGreen}{RGB}{213,247,244} 
	\definecolor{sthlmGreen}{RGB}{0,134,127} 
	\definecolor{sthlmLightGrey}{RGB}{213,217,225} 
	\definecolor{sthlmGrey}{RGB}{245,243,238} 
	\definecolor{sthlmDarkGrey}{RGB}{51,51,51} 
	\definecolor{sthlmLightOrange}{RGB}{255,215,210} 
	\definecolor{sthlmOrange}{RGB}{221,74,44} 
	\definecolor{sthlmLightPurple}{RGB}{241,230,252} 
	\definecolor{sthlmPurple}{RGB}{93,35,125} 
	\definecolor{sthlmLightRed}{RGB}{254,222,237} 
	\definecolor{sthlmRed}{RGB}{196,0,100} 
	\definecolor{sthlmYellow}{RGB}{252,191,10} 
\definecolor{pink}{rgb}{0.858, 0.188, 0.478}
\newcommand{\persComment}[3]{
  \ifmmode
  \text{\textcolor{#3}{[#2] #1}}
  \else
  \textcolor{#3}{[#2] \em #1}
  \fi
}
\newcommand{\draftVersion}{x}
\newcommand{\Jilles}[1]{\persComment{#1}{jsd}{violet}}
\newcommand{\Johan}[1]{\persComment{#1}{jp}{blue}}
\newcommand{\Rafael}[1]{\persComment{#1}{rc}{brown}}
\newcommand{\Aurelien}[1]{\persComment{#1}{ad}{pink}}
\newcommand{\Jilles}[1]{} 
\newcommand{\Rafael}[1]{} 
\newcommand{\Johan}[1]{} 
\newcommand{\Aurelien}[1]{} 
  \newcommand{\myrowcolour}{\rowcolor[gray]{0.925}}
\newcommand{\highest}[1]{\textcolor{sthlmRed}{\textbf{#1}}}
\title{Optimally Solving Simultaneous-Move Dec-POMDPs: \\ The Sequential Central Planning Approach}
\author {
    Johan Peralez\textsuperscript{\rm 1},
    Aurelien Delage\textsuperscript{\rm 1},
    Jacopo Castellini\textsuperscript{\rm 2},
    Rafael F. Cunha\textsuperscript{\rm 3},
    Jilles S. Dibangoye\textsuperscript{\rm 3}
}
\begin{document}

\maketitle

\begin{abstract}
The centralized training for decentralized execution paradigm emerged as the state-of-the-art approach to $\epsilon$-optimally solving decentralized partially observable Markov decision processes. However, scalability remains a significant issue.
This paper presents a novel and more scalable alternative, namely the sequential-move centralized training for decentralized execution.
This paradigm further pushes the applicability of the \citeauthor{bellman}'s principle of optimality, raising three new properties. 
First, it allows a central planner to reason upon sufficient sequential-move statistics instead of prior simultaneous-move ones.
%
Next, it proves that $\epsilon$-optimal value functions are piecewise linear and convex in such sufficient sequential-move statistics.
Finally, it drops the complexity of the backup operators from double exponential to polynomial at the expense of longer planning horizons.  
%
Besides, it makes it easy to use single-agent methods, \eg SARSA algorithm enhanced with these findings, while still preserving convergence guarantees.
Experiments on two- as well as many-agent domains from the literature against $\epsilon$-optimal simultaneous-move solvers confirm the superiority of our novel approach.
This paradigm opens the door for efficient planning and reinforcement learning methods for multi-agent systems.
\end{abstract}

\section{Introduction}
\label{sec:introdiction}

The problem of designing a system with multiple agents that cooperate to control a hidden Markov chain is of interest in artificial intelligence, optimal decentralized and stochastic control, and game theory \citep{Yoshikawa1978623,radner1962,Ooi96,639686,Shoham:2008:MSA:1483085}. 
Simultaneous-move decentralized partially observable Markov decision processes (Dec-POMDPs) emerged as the standard framework for formalizing and solving such problems when agents act simultaneously. 
Given the growth of multi-agent intelligence systems, \eg online interactive services, the Internet of Things, and intelligent transportation systems, the importance of scalable multi-agent solutions is clear.
It also highlights the impetus for methods with theoretical guarantees to find safe and secure solutions.
However, as the situation currently stands, $\epsilon$-optimally solving Dec-POMDPs is widely believed to be out of reach: finite-horizon cases are NEXP-hard, infinite-horizon cases are undecidable \citep{639686}, and approximations remain intractable \citep{860816}. 

There are two distinct but interdependent explanations for the limited scalability of existing algorithms. 
The more widely known reason is that agents in such a setting can neither see the state of the world nor explicitly communicate with one another due to communication costs, latency, or noise. 
However, what one agent sees and does directly affects what the others see and do.
That \textbf{silent coordination problem} makes it hard to define a right notion of state necessary to apply dynamic programming theory \citep{HansenBZ04}. 
A standard solution method suggests reducing the original multi-agent problem into a single-agent one, which is then solved using single-agent algorithms based on recent advances in Markov decision processes.
This simultaneous-move centralized training for decentralized execution paradigm applies successfully to planning \citep{Szer05,OliehoekSV08,OliehoekSAW13,Dibangoye:OMDP:2016,sokota2021solving}
and reinforcement learning \citep{FoersterFANW18,RashidSWFFW18,BonoDMP018,DibangoyeBICML18}. 
The less well-known reason for the poor scaling behavior is the requirement of current methods for simultaneous-move decision-making.
In such methods, the simultaneous-move greedy selection of joint decision rules, \ie simultaneous-move \citeauthor{bellman}'s backup, is performed collectively for agents---all at once.
Unfortunately,  this \textbf{decision entanglement problem} is double exponential with agents and time, rendering even a single simultaneous-move \citeauthor{bellman}'s backup prohibitively expensive when facing realistic problems with long planning horizons---let alone finding an optimal solution.

In contrast, this paper builds upon the assumption that decision rules to be executed simultaneously can nonetheless be selected sequentially. 
A paradigm we refer to as sequential-move centralized training for simultaneous-move decentralized execution.
This paradigm pushes further the applicability of \citeauthor{bellman}'s principle of optimality so that a central planner can still select joint decision rules, but now one private decision rule at a time.
Doing so disentangles the decision variables with mutual influence and raises three new properties. 
First, it allows a sequential-move central planner to reason upon sufficient sequential-move statistics instead of prior simultaneous-move ones.
Next, it proves that $\epsilon$-optimal value functions are still piecewise linear and convex in sufficient sequential-move statistics.
Finally, it drops the complexity of \citeauthor{bellman}'s backups from double exponential to polynomial at the expense of longer planning horizons.
We show that a SARSA algorithm enhanced by these findings applies while preserving its convergence guarantees. 
Experiments conducted in standard two- and multi-agent domains from existing literature, particularly against $\epsilon$-optimal solvers such as feature-based heuristic search value iteration and local methods \citep{Tan:1997:MRL:284860.284934,Konda:NIPS00}, demonstrate the clear advantages of our sequential-move variant of the oSARSA algorithm \citep{DibangoyeBICML18}.

This remainder of this paper is organized as follows. Section 2 examines the dominant paradigm for optimally solving simultaneous-move Dec-POMDPs: simultaneous-move centralized training for decentralized execution. Although this paradigm has demonstrated efficiency in small to medium-sized teams of cooperative agents, it faces scalability issues when applied to larger teams. In Section 3, we advocate for a paradigm shift from simultaneous-move to sequential-move centralized training for decentralized execution. We establish that simultaneous-move Dec-POMDPs can be reduced to sequential-move ones, enabling knowledge transfer from the former to the latter. Section 4 illustrates that all existing structural results from the dominant paradigm are transferable to this new approach. However, this shift also uncovers an exponential drop in the complexity of backup operators. Finally, Section 5 adapts the oSARSA algorithm from the original paradigm to the new framework. The experiments in Section 6 provide substantial evidence for the superiority of the proposed paradigm.

\section{Simultaneous-Move Central Planning}
\label{sec:background}

This section presents an overview of the dominant paradigm for solving cooperative multi-agent problems: simultaneous-move centralized training for decentralized control. We start by distinguishing between the multi-agent and single-agent formulations of simultaneous-move Dec-POMDPs. The multi-agent framework, while being the original design, faces significant challenges in dynamic programming applications. In contrast, the single-agent formulation enables a central planner to orchestrate the actions of all agents concurrently, thereby illustrating simultaneous-move centralized training for decentralized control. This paradigm leverages single-agent theories and algorithms and has emerged as the dominant approach for (optimally) solving simultaneous-move Dec-POMDPs. 


\subsection{Simultaneous-Move Multi-Agent Formulation}
\begin{definition}
\label{def:decpomdp}
A $n$-agent simultaneous-move Dec-POMDP is given by tuple $\mathtt{M} \doteq (\mathtt{X}, \mathtt{U}, \mathtt{Z}, \mathtt{p},\mathtt{r})$. , where: $\mathtt{X}$ is a finite set of states, denoted $\mathtt{x}$ or $\mathtt{y}$;  $\mathtt{U} \doteq \mathtt{U}^1\times \cdots \times \mathtt{U}^n$ is a finite set of actions, denoted $\mathtt{u}=(\mathtt{u}^1,\cdots,\mathtt{u}^n)$;  $\mathtt{Z} \doteq \mathtt{Z}^1\times \cdots \times \mathtt{Z}^n$ is a finite set of observations, denoted $\mathtt{z}=(\mathtt{z}^1,\cdots,\mathtt{z}^n)$;  $\mathtt{p}$ is the transition function that specifies the probability  $\mathtt{p}_{\mathtt{x},\mathtt{y}}^{\mathtt{u},\mathtt{z}}$ of the process state being $\mathtt{y}$ and having observation $\mathtt{z}$ after taking action $\mathtt{u}$ in state $\mathtt{x}$;  and
 $\mathtt{r}$ is the reward function specifying the reward $\mathtt{r}_{\mathtt{x},\mathtt{u}}$ received after taking action $\mathtt{u}$ in state $\mathtt{x}$.
\end{definition}

Throughout the paper, we make the following assumptions: (1) rewards are two-side bounded, \ie there exists some $c>0$, such that $\|\mathtt{r}_{\cdot,\cdot}\|_\infty \leq c$;
and planning horizon $\ell$ is finite. This is commonly achieved in infinite-horizon problems with a discount factor $\gamma \in [0,1)$ by searching for an $\epsilon$-optimal solution (for some $\epsilon>0$) and setting $\ell = \lceil \log_\gamma{(1-\gamma)\epsilon}/{c} \rceil$.
Optimally solving $\mathtt{M}$ aims at finding (simultaneous-move) joint policy $\xi  = (\mathtt{a}^1_{0:\ell-1}, \cdots, \mathtt{a}^{n}_{0:\ell-1})$, \ie an $n$-tuple of sequences of private decision rules, 
one sequence of private decision rules $\mathtt{a}^i_{0:\ell-1} = (\mathtt{a}^i_0,\ldots,\mathtt{a}^i_{\ell-1})$ for each agent $i$. 
An optimal joint policy maximizes the expected $\gamma$-discounted cumulative reward starting from initial state distribution $b_0$ onward. This value can be expressed as: 
%
\begin{align*}
\upsilon_{\mathtt{M},\gamma,0}^\xi(b_0) \doteq \mathbb{E}\{ \textstyle{\sum_{t=0}^{\ell-1}}~\gamma^t \mathtt{r}_{\mathtt{x}_t,\mathtt{u}_t} ~|~b_0, \xi\}.
\end{align*}
Let $\mathbb{N}_{\leq m} \doteq \{0, 1,\ldots, m\}$ and $\mathbb{N}^*_{\leq m} \doteq \mathbb{N}_{\leq m}\backslash \{0\}$.
For each agent $i\in \mathbb{N}^*_{\leq n}$, private decision rule $\mathtt{a}^i_t\colon \mathtt{o}^i_t \mapsto \mathtt{u}^i_t$ depends on $t$-th private history $\mathtt{o}^i_t \doteq (\mathtt{u}^i_{0:t-1},\mathtt{z}^i_{1:t})$, with $0$-th private history being $\mathtt{o}^i_0 \doteq \emptyset$.
At each step $t\in \mathbb{N}_{\leq \ell-1}$, we denote $\mathtt{O}^i_t$ and $\mathtt{A}^i_t$  the finite set of agent $i$' $t$-th private histories $\mathtt{o}^i_t\in \mathtt{O}^i_t$  and decision rules $\mathtt{a}^i_t\in \mathtt{A}^i_t$, respectively.

Optimally solving $\mathtt{M}$ using dynamic programming theory in its simultaneous-move multi-agent formulation is non-trivial, since it is not clear how to define a right notion of state \citep{HansenBZ04}.
To better understand this, notice that every agent acts simultaneously, but can neither see the actual state of the world nor explicitly communicate its actions and observations with others.
At the same time, what one agent sees and does directly affects what the others see and do, which explains the mutual influence of all decision variables $\mathtt{a}_t = (\mathtt{a}^1_t, \cdots, \mathtt{a}^{n}_t)$ at each time step $t\in \mathbb{N}_{\leq \ell-1}$, \ie a mapping called $t$-th joint decision rule $\mathtt{a}_t\colon \mathtt{o}_t \mapsto \mathtt{u}_t$ from joint histories $\mathtt{o}_t \doteq (\mathtt{o}^1_t,\ldots,\mathtt{o}^{n}_t)$ to joint actions.
At each step $t\in \mathbb{N}_{\leq \ell-1}$, we denote $\mathtt{O}_t$ and $\mathtt{A}_t$  the finite set of $t$-th joint histories $\mathtt{o}_t\in \mathtt{O}_t$ and joint decision rules $\mathtt{a}_t\in \mathtt{A}_t$, respectively.
The motivation for a single-agent reformulation is two-fold. 
The primary reason is that it allows us to reason simultaneously about all mutually dependent decision variables $\mathtt{a}_t= (\mathtt{a}^1_t, \cdots, \mathtt{a}^{n}_t)$, \ie a $t$-th joint decision rule.
Besides, the single-agent reformulation eases the transfer of theory and algorithms from single- to multi-agent systems.

\subsection{Single-Agent Reformulation}

This equivalent single-agent reformulation aims at recasting $\mathtt{M}$ from the perspective of an offline simultaneous-move central planner.
At every step, this planner simultaneously acts on behalf of all agents, taking a joint decision rule, but receives no feedback in the worst-case scenario. 
A statistic of the history of selected joint decision rules $\mathtt{a}_{0:t-1} = (\mathtt{a}_0,\ldots,\mathtt{a}_{t-1})$, \ie $t$-th simultaneous-move occupancy state $\mathtt{s}_t \doteq (\Pr\{\mathtt{x}_t,\mathtt{o}_t\mid b_0, \mathtt{a}_{0:t-1}\})_{\mathtt{x}_t\in \mathtt{X},\mathtt{o}_t\in \mathtt{O}_t}$, is a conditional probability distribution over states and $t$-th joint histories. It describes a non-observable and deterministic Markov decision process, namely a simultaneous-move occupancy-state MDP ($o$MDP) \citep{Dibangoye:OMDP:2016}.

\begin{definition}
A simultaneous-move $o$MDP \wrt $\mathtt{M}$ is given by a tuple $\mathtt{M}' \doteq ( \mathtt{S}, \mathtt{A}, \mathtt{T}, \mathtt{R} )$, where 
$\mathtt{S} = \cup_{t\in \mathbb{N}_{\leq \ell-1}} \mathtt{S}_t$ is a collection of simultaneous-move occupancy states, denoted $\mathtt{s}_t\in \mathtt{S}_t$;
$\mathtt{A}$ is the space of actions describing joint decision rules;
$\mathtt{T}\colon \mathtt{S} \times \mathtt{A} \to \mathtt{S}$ is the deterministic transition rule, where next occupancy state is given by $\mathtt{s}_{t+1} \doteq \mathtt{T}(\mathtt{s}_t,\mathtt{a}_t)$, 
\begin{align*}
\mathtt{s}_{t+1} \colon (\mathtt{y}, (\mathtt{o},\mathtt{u},\mathtt{z})) & \mapsto {\textstyle \sum_{\mathtt{x}\in \mathtt{X}}}~ \mathtt{s}_t(\mathtt{x},\mathtt{o}) \cdot \mathtt{p}_{\mathtt{x},\mathtt{y}}^{\mathtt{a}_t(\mathtt{o}) ,\mathtt{z}},
\end{align*}
and $\mathtt{R}\colon \mathtt{S} \times \mathtt{A} \to \mathbb{R}$ describes the linear reward function---\ie
\begin{align*}
\mathtt{R}\colon (\mathtt{s}_t,\mathtt{a}_t) & \mapsto {\textstyle\sum_{\mathtt{x}\in \mathtt{X}}\sum_{\mathtt{o}\in \mathtt{O}_t}}~ \mathtt{s}_t(\mathtt{x},\mathtt{o}) \cdot \mathtt{r}_{\mathtt{x},\mathtt{a}_t(\mathtt{o})}.
\end{align*}
\end{definition}

Optimally solving $\mathtt{M}'$ aims at finding a simultaneous-move joint policy $\xi$ via an optimal value function $(\upsilon_{\mathtt{M}',\gamma,t}^*)_{t\in \mathbb{N}_{\leq \ell}}$, \ie mappings from states to reals, 
$
\upsilon_{\mathtt{M}',\gamma,t}^*(\mathtt{s}_t) \doteq \max_{\mathtt{a}_{t:\ell-1}\in \mathtt{A}_{t:\ell-1}}~\upsilon_{\mathtt{M}',\gamma,t}^{\mathtt{a}_{t:\ell-1}}(\mathtt{s}_t),
$
 and 
solution of \citeauthor{bellman}'s optimality equations, \ie
\begin{align}
\upsilon_{\mathtt{M}',\gamma,t}^*(\mathtt{s}_t) &= {\textstyle\max_{\mathtt{a}_t\in \mathtt{A}_t} ~\mathtt{q}_{\mathtt{M}',\gamma,t}^*(\mathtt{s}_t,\mathtt{a}_t)}, \label{eq:bellman}\\
\mathtt{q}_{\mathtt{M}',\gamma,t}^*(\mathtt{s}_t,\mathtt{a}_t) &\doteq \mathtt{R}(\mathtt{s}_t,\mathtt{a}_t) + \gamma \upsilon_{\mathtt{M}',\gamma,t+1}^*(\mathtt{T}(\mathtt{s}_t,\mathtt{a}_t)),\label{eq:q:bellman}
\end{align} 
with boundary condition $\upsilon_{\mathtt{M}',\gamma,\ell}^*(\cdot)=\mathtt{q}_{\mathtt{M}',\gamma,\ell}^*(\cdot,\cdot)=0$. 
One can greedily select an optimal joint policy given the optimal value function. 
Unfortunately, solving \citeauthor{bellman}'s optimality equations (\ref{eq:bellman}) is not trivial since simultaneous-move occupancy-state space $\mathtt{S}$ grows exponentially with time and number of agents.
Instead, \citet{Dibangoye:OMDP:2016} build on the piecewise linearity and convexity in the simultaneous-move occupancy-state space of the $\epsilon$-optimal value function.
Also, they introduced simultaneous-move backup operators that can circumvent the exhaustive enumeration of all joint decision rules using mixed-integer linear programs. 
They provided equivalence relations among private histories to enhance value generalization.
Altogether, these operations made it possible to use single-agent algorithms, \eg oSARSA \citep{DibangoyeBICML18} and point-based value iteration \citep{Dibangoye:OMDP:2016} algorithms, to solve $\mathtt{M}$ while preserving the convergence guarantees.
However, scalability remains a major issue. The following lemma suggests there is room for greater scalability by breaking down the decision-making process one step and one agent at a time. 

\begin{restatable}[]{lem}{lemsequenceapproach}[Proof in Appendix C.1]
\label{lem:sequence:approach}
Let $\sigma\colon \mathbb{N}^*_{\leq n} \mapsto \mathbb{N}^*_{\leq n}$ be a permutation over agents.  \citeauthor{bellman}'s optimality equations (\ref{eq:bellman}) can be re-written in sequential form with no loss in optimality, \ie   for all $t\in \mathbb{N}_{\leq \ell}$ and any $t$-th simultaneous-move occupancy state $\mathtt{s}_t$,
\begin{align*}
\upsilon_{\mathtt{M}',\gamma,t}^*(\mathtt{s}_t) = \max_{\mathtt{a}_t^{\sigma(1)}\in \mathtt{A}_t^{\sigma(1)}}  \cdots \max_{\mathtt{a}_t^{\sigma(n)}\in \mathtt{A}_t^{\sigma(n)}} ~\mathtt{q}_{\mathtt{M}',\gamma,t}^*(\mathtt{s}_t,\mathtt{a}_t).
\end{align*} 
%
%
\end{restatable}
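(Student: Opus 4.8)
The plan is to reduce the statement to the elementary fact that maximizing a function over a Cartesian product factorizes into nested maxima over the individual factors, taken in \emph{any} order. The only structural ingredient I need is that the space of joint decision rules $\mathtt{A}_t$ is exactly the product of the private decision-rule spaces $\mathtt{A}_t^1 \times \cdots \times \mathtt{A}_t^n$; everything else is a routine decomposition of a single maximum. Throughout I fix an arbitrary step $t\in\mathbb{N}_{\leq\ell}$ and an arbitrary occupancy state $\mathtt{s}_t$, and I treat $\mathtt{q}_{\mathtt{M}',\gamma,t}^*(\mathtt{s}_t,\cdot)$ simply as a well-defined real-valued function of the joint decision rule.

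First I would make the product structure of $\mathtt{A}_t$ precise, since this is where the decentralized nature of the problem actually enters. By Definition~\ref{def:decpomdp} and the surrounding discussion, a joint decision rule is by construction an $n$-tuple $\mathtt{a}_t = (\mathtt{a}^1_t,\ldots,\mathtt{a}^n_t)$ acting componentwise on joint histories, i.e. $\mathtt{a}_t(\mathtt{o}_t) = (\mathtt{a}^1_t(\mathtt{o}^1_t),\ldots,\mathtt{a}^n_t(\mathtt{o}^n_t))$, where each $\mathtt{a}^i_t$ reads only agent $i$'s own private history. Because no coupling constraint links the private rules, every tuple of private decision rules yields a valid joint decision rule and distinct tuples yield distinct joint rules; hence the identification $\mathtt{a}_t \leftrightarrow (\mathtt{a}^1_t,\ldots,\mathtt{a}^n_t)$ is a bijection and $\mathtt{A}_t = \mathtt{A}_t^1 \times \cdots \times \mathtt{A}_t^n$.

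Next I would invoke the standard one-step decomposition $\max_{(y_1,y')} f(y_1,y') = \max_{y_1}\max_{y'} f(y_1,y')$ for a real-valued function on a finite product, and iterate it to obtain, for $Y_i = \mathtt{A}_t^{\sigma(i)}$ and $f = \mathtt{q}_{\mathtt{M}',\gamma,t}^*(\mathtt{s}_t,\cdot)$,
\[
\max_{\mathtt{a}_t \in \mathtt{A}_t} \mathtt{q}_{\mathtt{M}',\gamma,t}^*(\mathtt{s}_t,\mathtt{a}_t) = \max_{\mathtt{a}_t^{\sigma(1)}\in \mathtt{A}_t^{\sigma(1)}}\cdots\max_{\mathtt{a}_t^{\sigma(n)}\in \mathtt{A}_t^{\sigma(n)}} \mathtt{q}_{\mathtt{M}',\gamma,t}^*(\mathtt{s}_t,\mathtt{a}_t).
\]
Combining this with \citeauthor{bellman}'s equation~(\ref{eq:bellman}), whose right-hand side is precisely $\max_{\mathtt{a}_t\in\mathtt{A}_t}\mathtt{q}_{\mathtt{M}',\gamma,t}^*(\mathtt{s}_t,\mathtt{a}_t)$, yields the claimed identity. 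Permutation invariance is then immediate: both the left-hand side and every ordering on the right collapse to the single maximum over the full product $\mathtt{A}_t$, which references no ordering of the factors whatsoever.

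I do not expect a genuine obstacle here, as the argument never uses any property of $\mathtt{q}^*$ beyond being a function of $\mathtt{a}_t$ and goes through verbatim. The one point I would state explicitly, rather than gloss over, is the product structure of $\mathtt{A}_t$: this is the substantive content, since it is exactly decentralization (each private rule depending only on its own history, with no joint constraint) that legitimizes maximizing \emph{one agent at a time} even though the agents act simultaneously. The force of the lemma lies not in its difficulty but in what this componentwise reformulation enables downstream.
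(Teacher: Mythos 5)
Your proposal is correct and follows essentially the same route as the paper's proof: both reduce the claim to decomposing the maximum over the product space $\mathtt{A}_t = \mathtt{A}_t^1 \times \cdots \times \mathtt{A}_t^n$ into nested maxima over private decision-rule spaces, whose order is immaterial. If anything, you are slightly more careful than the paper in making the bijection $\mathtt{a}_t \leftrightarrow (\mathtt{a}_t^1,\ldots,\mathtt{a}_t^n)$ explicit, whereas the paper passes from $\max_{\mathtt{a}_t \in \mathtt{A}_t}$ to $\max_{\mathtt{a}_t^1,\ldots,\mathtt{a}_t^n}$ without comment (and needlessly expands $\mathtt{q}^*$ into reward plus next-step value, a step it never actually uses).
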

The main issue with the dominant paradigm is the necessity for simultaneous decision-making rather than the double exponential growth of joint policies over time and the number of agents.
If simultaneous planning is conducted, then backups are performed collectively for all agents, all at once.
In the much larger problems that are our long-term objective, the double exponential growth of joint policies would render even a single simultaneous-move backup, \cf \citeauthor{bellman}'s optimality equations (\ref{eq:bellman}), prohibitively expensive.
Instead, Lemma \ref{lem:sequence:approach} suggests that planning can take place in an offline sequential central manner while preserving optimality, even though agents (online) execute actions in a decentralized and simultaneous fashion.
Doing so will disentangle the decision variables.
Even more importantly, it will make it possible to implement efficient basic operations, \eg occupancy-state estimation, value generalization, and backup, of exact single-agent algorithms. 
The following section introduces the sequential-move central planner approach to optimally solving Dec-POMDPs, assuming for the sake of clarity that the permutation over agents $\sigma$ is fixed throughout the planning process.
We shall demonstrate the sequential-move centralized training for decentralized execution paradigm opens the doors for more scalable single-agent algorithms to apply to multi-agent problems.

\section{Sequential-Move Central Planning}
\label{sec:scpe}

This section presents an alternative paradigm to (optimally) solving simultaneous-move Dec-POMDPs aimed at overcoming the limitations inherent to the dominant approach. We will establish that simultaneous-move Dec-POMDPs reduce to sequential-move Dec-POMDPs. This reduction enables us to apply techniques designed for sequential-move settings to their simultaneous counterparts, thereby introducing the sequential-move centralized training for decentralized control. For clarity, we will distinguish between multi-agent and single-agent formulations of sequential-move Dec-POMDPs, while also establishing their equivalence, which enables the transfer of theories and algorithms from single-agent to sequential-move multi-agent scenarios. 


\subsection{Sequential-Move Multi-Agent Formulation}

This subsection starts with a fairly general definition of $n$-agent sequential-move Dec-POMDPs. Theorem \ref{thm:simultaneous:sequential} provides the formal proof that one can convert any simultaneous-move Dec-POMDP into an equivalent sequential-move one, allowing us to solve the former using the latter.

\begin{definition}
\label{def:seq:decpomdp}
A $n$-agent sequential-move Dec-POMDP is given by a tuple $M\doteq (X, U, Z, \rho, p, r, \ell')$, where: $X$ is a finite set of states, denoted $x$ or $y$; $U(\tau)$ is a finite set of individual actions available to any arbitrary agent at decision epoch $\tau$, denoted $u$; $Z^i$ is a finite set of individual observations of agent $i$, denoted $z^i$, with $Z \doteq Z^1\times Z^2\times \cdots\times Z^n$ and $z \doteq (z^1,\cdots,z^n)$; $\rho\colon \mathbb{N}_{\leq \ell'-1} \to \mathbb{N}^*_{\leq n}$ is the agent function\footnote{It is worth noticing that agent function $\rho$ prescribes $(H^1,\ldots,H^n)$, with $H^i$ being the set of points in time agent $i$ takes actions, so that $\cap_{i\in \mathbb{N}^*_{\leq n}} H^i = \emptyset$, with $\cup_{i\in \mathbb{N}^*_{\leq n}} H^i =\mathbb{N}_{\leq\ell'-1}$.}, which assigns to each decision epoch $\tau\in \mathbb{N}_{\leq \ell'-1}$ an agent $i\in \mathbb{N}^*_{\leq n}$ who chooses an action at that step;  $p_{xy}^{uz}(\tau)$ is the $\tau$-th dynamics function that specifies the probability of the  state being $y$ and having observation $z$ after taking action $u$ in state $x$;  $r_{x,u}(\tau)$ is the $\tau$-th reward function that specifies the reward received after agent $\rho(\tau)$ takes action $u$ in state $x$; and $\ell'$ is the planning horizon.
\end{definition}

In sequential-move multi-agent formulation $M$, agents may start with noisy information regarding the state of the world, specified by the initial state distribution $b_0$. At each decision epoch $\tau$, the world is in some state $x_\tau$. Agent $\rho(\tau)$ chooses any action $u_\tau$ available in action space $U(\tau)$. Upon the execution of action $u_\tau$ in state $x_\tau$, the world immediately responds with the corresponding reward $r_{x_\tau,u_\tau}(\tau)$. It further responds at the subsequent decision epoch by randomly moving into a new state $x_{\tau+1}$ and provides each agent $i$ with sequential observation $z^i_{\tau+1}$, according to the dynamics function $p_{x_\tau,x_{\tau+1}}^{u_\tau,z_{\tau+1}}(\tau)$. The process repeats iteratively until the planning horizon is exhausted. Optimally solving $M$ aims at finding sequential-move joint policy $\pi$, \ie a sequence of private decision rules, $\pi \doteq (a_\tau)_{\tau\in \mathbb{N}_{\leq \ell'-1}}$, which maximizes the expected cumulative $\gamma$-discounted reward starting from initial state distribution $b_0$ onward, \ie 
\begin{align}
\upsilon_{M,\lambda,0}^\pi(b_0) &\doteq {\textstyle\mathbb{E}\{ \sum_{\tau=0}^{\ell'-1} \lambda(\tau)\cdot r_{x_\tau,u_\tau}(\tau)\mid b_0, \pi \}},
\label{eq:sequential:perf:criterion}
\end{align}
where $\lambda\colon \mathbb{N}_{\leq \ell'-1} \to (0,1)$ is the discount function. At every decision epoch $\tau$, agent $i = \rho(\tau)$ acts upon private decision rule $a_\tau\colon o^i_\tau\mapsto u_\tau$, which depends on sequential private histories. Sequential private histories are defined as $o^i_{\tau+1} = (o^i_\tau, u_\tau, z^i_{\tau+1})$ if agent $i= \rho(\tau)$; and $o^i_{\tau+1} = (o^i_\tau, z^i_{\tau+1})$ otherwise, with initial sequential private history being $o^i_0 \doteq \emptyset$. 
At each step $\tau\in \mathbb{N}_{\leq \ell'-1}$, we denote $O^i_\tau$ and $A_\tau$  the finite set of agent $i$'s $\tau$-th private histories $o^i_\tau\in O^i_\tau$  and decision rules $a_\tau\in A_\tau$, respectively.

Upon executing a sequential joint policy, agents collectively build a sequential joint history satisfying the following recursive formula: $o_{\tau+1} = (o_\tau, u_\tau, z_{\tau+1})$. %
At each step $\tau\in \mathbb{N}_{\leq \ell'-1}$, we denote $O_\tau$  the finite set of sequential joint histories $o_\tau\in O_\tau$.
Furthermore, each sequential joint history $o_\tau$ includes the sequential private history $o^i_\tau$ of each agent $i$. We denote the sequential private policy of agent $i$ extracted from sequential joint policy $\pi$ as $\pi^i \doteq (a_\tau)_{\tau\in H^i}$. Perhaps surprisingly, we establish that one can always convert any simultaneous-move Dec-POMDP into a sequential-move one with no loss in optimality. 

\begin{restatable}[]{thm}{thmsimultaneoussequential}[Proof in Appendix C.2]
\label{thm:simultaneous:sequential}
For every simultaneous-move Dec-POMDP $\mathtt{M}$, there exists a corresponding sequential-move Dec-POMDP $M$ along with the appropriate performance criterion (\ref{eq:sequential:perf:criterion}), whose optimal solution is also optimal for $\mathtt{M}$.
\end{restatable}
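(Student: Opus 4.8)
The plan is to build $M$ by \emph{unrolling} every simultaneous stage $t\in\mathbb{N}_{\leq\ell-1}$ of $\mathtt{M}$ into a block of $n$ consecutive sequential decision epochs, during which the agents commit to their individual actions one at a time in the order fixed by the permutation $\sigma$ from Lemma \ref{lem:sequence:approach}. Concretely I would set $\ell'=n\ell$, let the agent function cycle through agents within each block via $\rho(\tau)=\sigma\!\left((\tau\bmod n)+1\right)$, and keep the original stochastic dynamics \emph{dormant} until a full joint action has been assembled. The whole construction hinges on augmenting the state so that a block's partial joint action is recorded: I would take
\[
X \doteq \mathtt{X}\times{\textstyle\bigcup_{k=0}^{n-1}}\left(\mathtt{U}^{\sigma(1)}\times\cdots\times\mathtt{U}^{\sigma(k)}\right),
\]
a pair $(\mathtt{x},\text{partial action})$ that remembers the current underlying state together with the choices of the agents who have already moved in the block.

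Next I would specify the sequential dynamics, rewards and observations so that the block faithfully simulates one simultaneous step. At an \emph{intra-block} epoch ($\rho(\tau)=\sigma(k)$ with $k<n$) the transition $p(\tau)$ is deterministic: it appends the chosen individual action to the partial-action record, emits a fixed dummy observation, and yields zero reward $r(\tau)=0$. At the \emph{block-closing} epoch ($k=n$) the full joint action $\mathtt{u}_t$ is readable from the (state, action) pair, so I define $p(\tau)$ and $r(\tau)$ to reproduce exactly $\mathtt{p}_{\mathtt{x},\mathtt{y}}^{\mathtt{u}_t,\mathtt{z}}$ and $\mathtt{r}_{\mathtt{x},\mathtt{u}_t}$, reset the partial-action record, and emit the genuine joint observation $\mathtt{z}_{t+1}=(z^1,\dots,z^n)$. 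For the performance criterion I would choose $\lambda$ so that the block-closing epoch $\tau=tn+n-1$ carries weight $c\,\gamma^{t}$ for a fixed constant $c\in(0,1)$ (the constant is needed only because the codomain is the open interval $(0,1)$ while $\gamma^0=1$), and let $\lambda$ take any value in $(0,1)$ on the reward-free epochs, where it is irrelevant since $r(\tau)=0$ there.

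The core of the argument is a value-preserving correspondence between sequential joint policies of $M$ and simultaneous joint policies $\xi$ of $\mathtt{M}$. First I would prove an \emph{informational equivalence}: agent $i$'s sequential private history $o^i_\tau$ at its block-$t$ decision epoch $\tau=tn+\sigma^{-1}(i)-1$ is informationally equivalent to its simultaneous private history $\mathtt{o}^i_t=(\mathtt{u}^i_{0:t-1},\mathtt{z}^i_{1:t})$. Indeed, by the sequential history-update rule $o^i_{\tau+1}=(o^i_\tau,u_\tau,z^i_{\tau+1})$ when $i=\rho(\tau)$ and $o^i_{\tau+1}=(o^i_\tau,z^i_{\tau+1})$ otherwise, agent $i$ records its own action exactly once per block (recovering $\mathtt{u}^i_{0:t-1}$) and records only the constant dummy symbols at intra-block transitions together with the genuine components $\mathtt{z}^i_{1:t}$ at block boundaries; after quotienting out the constant dummies, its information is exactly $\mathtt{o}^i_t$. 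This yields a bijection between decision rules, hence between joint policies, and—crucially—shows no agent ever learns the within-block choices of agents moving earlier, so the decentralized information structure is preserved.

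Finally I would transfer optimality through this bijection. Because intra-block transitions are deterministic and the block-closing transition applies exactly $\mathtt{p}$, under corresponding policies the induced laws of the processes $(\mathtt{x}_t,\mathtt{u}_t,\mathtt{z}_{t+1})_{t}$ in $M$ and in $\mathtt{M}$ coincide; comparing (\ref{eq:sequential:perf:criterion}) with $\upsilon_{\mathtt{M},\gamma,0}^\xi(b_0)$ then gives $\upsilon_{M,\lambda,0}^{\pi}(b_0)=c\,\upsilon_{\mathtt{M},\gamma,0}^{\xi}(b_0)$, a strictly positive rescaling that preserves the $\arg\max$. Hence an optimal $\pi$ for $M$ maps to an optimal $\xi$ for $\mathtt{M}$. \textbf{The main obstacle} is the informational-equivalence step: one must argue rigorously that the dummy intra-block observations leak no information about other agents' already-committed actions, so that the sequential model is a genuine decentralized Dec-POMDP whose policy space matches that of $\mathtt{M}$ rather than a centralized relaxation; the discounting technicality (absorbing $\gamma^0=1$ into the constant $c$) is a minor but necessary bookkeeping detail.
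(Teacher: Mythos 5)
Your construction is essentially identical to the paper's: both unroll each simultaneous stage of $\mathtt{M}$ into a block of $n$ sequential epochs, augment the state with the partially assembled joint action, emit dummy (empty) observations and zero reward at intra-block epochs, apply the true dynamics $\mathtt{p}$ and reward $\mathtt{r}$ only at the block-closing epoch, and conclude via a policy bijection (your informational-equivalence step, the paper's policy-class equivalence) plus matched discounting. The only difference is cosmetic: the paper sets $\lambda(\tau)=\gamma^{t}$ directly, giving exact equality of values (incidentally violating its own stated codomain $(0,1)$ at $t=0$), whereas your rescaling by $c\in(0,1)$ handles that technicality while still preserving the $\argmax$, so optimality transfers identically.
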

Theorem \ref{thm:simultaneous:sequential} establishes that any simultaneous-move Dec-POMDP can be solved as a sequential-move Dec-POMDP. One can similarly recast any simultaneous-move Dec-POMDP into an extensive-form game \citep{KOVARIK2022103645}. 
The main similarity is that both transformations allow the sequencing of decision-making one agent at a time. Yet, these transformations remain fundamentally different, mainly as sequential-move Dec-POMDPs make applying theory and algorithms derived from Markov decision processes easy.

\subsection{Sequential-Move Single-Agent Reformulation}
 
In the remainder, we assume a sequential central planner who knows model $M$ and selects what to do on behalf of all agents. Even though agents act and receive feedback about the state of the world simultaneously, as in $\mathtt{M}$, our sequential central planner can select the simultaneous private decision rule of one agent at a time by relying on the equivalent model $M$. To this end, it essentially traverses the space of joint decision rules by performing expansions of joint decision rules, one private decision rule, or one agent, at a time. This sequential expansion is illustrated in Figure 1, \cf Appendix. 
The selection of a private decision rule of an agent depends on the choice of private decision rules from previous points in time and the initial belief state about the process. We call sequential exhaustive data the overall data available to the sequential central planner at each sequential decision epoch.
 
\begin{definition}
\label{def:sequential:exhaustive:data}
For all sequential decision epoch $\tau\in \mathbb{N}_{\leq \ell'-1}$, sequential exhaustive data $\iota_\tau \doteq (b_0, a_{0:\tau-1})$.
\end{definition}

Sequential exhaustive data satisfy recursion $\iota_{\tau+1} = (\iota_\tau, a_\tau)$, with boundary condition $\iota_0 \doteq (b_0)$.
It is worth noticing that variables $(\iota_\tau)_{\tau\in \mathbb{N}_{\leq \ell'-1}}$ taking values in sequential exhaustive data set $\pmb{S}$ define a sequential-move data-driven Markov decision process $\pmb{M}$. In such a process, states are sequential exhaustive data and actions are private decision rules, as illustrated in the influence diagram in Figure 1, \cf Appendix. This process is deterministic since the next sequential exhaustive data is given by concatenating the previous one $\iota_\tau$ along with the sequential action choice $a_\tau$, \ie $\pmb{p}\colon (\iota_\tau, a_\tau) \mapsto \iota_{\tau+1}$. Furthermore, upon taking sequential action $a_\tau$ at sequential exhaustive data $\iota_\tau$, the expected reward is $\pmb{r}\colon (\iota_\tau, a_\tau) \mapsto \mathbb{E}\{\lambda(\tau) \cdot r_{x_\tau,u_\tau}(\tau)\mid \iota_\tau, a_\tau\}$.

\begin{definition}
\label{def:sequential:move:data:driven:mdp}
A sequential-move data-driven Markov decision process \wrt $M$ is given by $\pmb{M} \doteq \langle \pmb{S}, A,\pmb{p},\pmb{r} \rangle$ where: $\pmb{S}\doteq \cup_{\tau\in \mathbb{N}_{\leq \ell'-1}} \pmb{S}_\tau$ defines the state space, where $\pmb{S}_\tau$ is the set of all sequential exhaustive data at sequential decision epoch $\tau\in \mathbb{N}_{\leq \ell'-1}$; $A \doteq \cup_{\tau\in \mathbb{N}_{\leq \ell'-1}} A_\tau$ denotes the action space, where $A_\tau$ is the action space available at sequential decision epoch $\tau\in \mathbb{N}_{\leq \ell'-1}$; $\pmb{p}\colon \pmb{S}\times A \to \pmb{S}$ prescribes the next state $\iota_{\tau+1} = \pmb{p} (\iota_\tau, a_\tau)$ after taking action $a_\tau$ at state $\iota_\tau$;  $\pmb{r}\colon \pmb{S}\times A \to \mathbb{R}$ specifies the immediate reward $\pmb{r} (\iota_\tau, a_\tau)$ upon taking action $a_\tau$ in state $\iota_\tau$; and $\iota_0 \doteq (b_0)$. 
\end{definition}

Similarly to $M$, optimally solving $\pmb{M}$ aims at finding sequential joint policy $\pi$, which maximizes the cumulative $\gamma$-discounted reward starting from $\iota_0$ onward, and given by $\upsilon^\pi_{\pmb{M},\gamma, 0}(\iota_0) \doteq \mathbb{E}\{ \sum_{\tau=0}^{\ell'-1} \pmb{r} (\iota_\tau, a_\tau)\mid \iota_0, \pi \}$. Building upon theory and algorithms for MDPs, one can extract an optimal sequential joint policy $\pi^*$ from the solution of \citeauthor{bellman}'s optimality equations, \ie
\begin{align*}
    \upsilon^*_{\pmb{M}, \lambda, \tau}(\iota_\tau) &={\textstyle \max_{a_\tau\in A_\tau} q^*_{\pmb{M}, \lambda, \tau}(\iota_\tau, a_\tau)}\\
    q^*_{\pmb{M}, \lambda, \tau}(\iota_\tau, a_\tau) &= \pmb{r} (\iota_\tau, a_\tau) + \upsilon^*_{\pmb{M}, \lambda, \tau+1}( \pmb{p} (\iota_\tau, a_\tau))
\end{align*}
with boundary condition $\upsilon^*_{\pmb{M}, \lambda, \ell'}(\cdot) = q^*_{\pmb{M}, \lambda, \ell'}(\cdot,\cdot) \doteq 0$. To transfer knowledge from sequential-move data-driven MDP $\pmb{M}$ to corresponding sequential-move Dec-POMDP $M$, it will prove useful to show $\pmb{M}$ and $M$ are equivalent.

\begin{restatable}[]{lem}{lemequivalencesequential}[Proof in Appendix C.3]
\label{lem:equivalence:sequential}
If we let $\pmb{M}$ be a sequential-move data-driven MDP \wrt original problem $M$, then any optimal sequential joint policy $\pi^*_{\pmb{M}}$ for $\pmb{M}$ is also an optimal solution for  $M$. 
\end{restatable}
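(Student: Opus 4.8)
The plan is to prove the stronger fact that $M$ and $\pmb{M}$ assign the \emph{same} value to \emph{every} sequential joint policy, and then read off the optimality transfer for free. The first observation is that the two models share one and the same policy space: a policy for either is a sequence of private decision rules $\pi = (a_\tau)_{\tau\in \mathbb{N}_{\leq \ell'-1}}$, with $a_\tau \in A_\tau$. Consequently it suffices to establish $\upsilon^\pi_{\pmb{M},\lambda,0}(\iota_0) = \upsilon^\pi_{M,\lambda,0}(b_0)$ for each such $\pi$; once this holds, the two objectives are identical functions on a common domain, so their sets of maximizers coincide and any $\pi^*_{\pmb{M}}$ optimal for $\pmb{M}$ is optimal for $M$.

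To establish the per-policy equality I would exploit the determinism of $\pmb{M}$. Since $\pmb{p}$ is deterministic and $\iota_{\tau+1} = (\iota_\tau, a_\tau)$, a fixed $\pi$ induces a single deterministic trajectory $\iota_0, \iota_1, \ldots, \iota_{\ell'-1}$ with $\iota_\tau = (b_0, a_{0:\tau-1})$, so the outer expectation defining $\upsilon^\pi_{\pmb{M},\lambda,0}(\iota_0)$ collapses and leaves $\upsilon^\pi_{\pmb{M},\lambda,0}(\iota_0) = \sum_{\tau=0}^{\ell'-1} \pmb{r}(\iota_\tau, a_\tau)$. On the $M$ side, linearity of expectation splits $\upsilon^\pi_{M,\lambda,0}(b_0)$ into $\sum_{\tau=0}^{\ell'-1} \mathbb{E}\{\lambda(\tau)\, r_{x_\tau,u_\tau}(\tau) \mid b_0, \pi\}$. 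Matching the two sums term by term reduces the lemma to the single identity $\pmb{r}(\iota_\tau,a_\tau) = \mathbb{E}\{\lambda(\tau)\, r_{x_\tau,u_\tau}(\tau) \mid b_0, \pi\}$ for every $\tau$, \ie $\mathbb{E}\{\lambda(\tau)\, r_{x_\tau,u_\tau}(\tau) \mid \iota_\tau, a_\tau\} = \mathbb{E}\{\lambda(\tau)\, r_{x_\tau,u_\tau}(\tau) \mid b_0, \pi\}$.

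The crux, and the step I expect to be the main obstacle, is this last identity: it is a \emph{sufficiency} statement asserting that the expected $\tau$-th reward does not depend on the future decision rules $a_{\tau+1:\ell'-1}$. I would argue it by tracking the joint law of $(x_\tau, o_\tau)$ and showing it is a deterministic function of $b_0$ and the rules $a_{0:\tau-1}$ in force up to epoch $\tau$, \ie a function of $\iota_\tau$, since states and histories evolve only through the dynamics $p(\cdot)$ driven by the actions those rules prescribe along each realized history. The current action is $u_\tau = a_\tau(o^{\rho(\tau)}_\tau)$, so the pair $(x_\tau, u_\tau)$ is pinned down in distribution by $(\iota_\tau, a_\tau)$ alone and the later rules are irrelevant to epoch-$\tau$ quantities; this is precisely $\pmb{r}(\iota_\tau, a_\tau)$. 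I would formalize the ``deterministic function of $\iota_\tau$'' claim by a short forward induction on $\tau$ over the distribution $\Pr\{x_\tau, o_\tau \mid b_0, a_{0:\tau-1}\}$, mirroring the occupancy-state construction of the simultaneous-move case. The delicate part is that the induction step must correctly account for the agent function $\rho$, which singles out the acting agent $\rho(\tau)$ whose private history is extended by $u_\tau$ while the remaining agents' histories grow only by their observations, exactly as in the sequential private-history recursion $o^i_{\tau+1} = (o^i_\tau, u_\tau, z^i_{\tau+1})$ if $i=\rho(\tau)$ and $o^i_{\tau+1} = (o^i_\tau, z^i_{\tau+1})$ otherwise.
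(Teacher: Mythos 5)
Your proposal is correct and follows essentially the same route as the paper's proof: both reduce the claim to the definition of $\pmb{r}(\iota_\tau,a_\tau)$ as a per-epoch conditional expectation, invoke linearity of expectation to identify $\sum_\tau \pmb{r}(\iota_\tau,a_\tau)$ with the expected cumulative reward of $M$ under the same policy, and conclude that the maximizer sets coincide. Your treatment is in fact somewhat more careful than the paper's: the ``crux'' identity you isolate --- that conditioning on $(\iota_\tau,a_\tau)$ agrees with conditioning on $(b_0,\pi)$ because the law of $(x_\tau,o_\tau)$ is determined by $\iota_\tau$ alone and future rules are irrelevant --- is passed over silently in the paper's argmax manipulation, and your proposed forward induction is exactly the content of the paper's occupancy-state recursion (Lemma \ref{lem:markov}).
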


$\pmb{M}$ differs from $M$ in that the search space of $\pmb{M}$ is made explicit, \ie a state in $\pmb{M}$ (or a sequential exhaustive data) is a point in the search space. Consequently, space $\pmb{S}$ is too large to be generated and stored in memory. It is not useful to remember all the states the sequential central planner experienced. Instead, one can rely on more concise representations, often referred to as statistics. Analogously to \citet{Dibangoye:OMDP:2016}, we introduce a statistic that summarizes the sequential exhaustive data, and name that statistic sequential-move occupancy state (SOC). 

\begin{definition}
\label{def:soc}
A SOC $s_\tau\in \triangle(X \times O_\tau)$ at step $\tau\in \mathbb{N}_{\leq \ell'-1}$ is defined as a posterior probability distribution of hidden states and sequential joint histories given sequential exhaustive data $\iota_\tau$, \ie 
$s_\tau\colon (x_\tau,o_\tau) \mapsto \Pr\{x_\tau,o_\tau| \iota_\tau\}$.
\end{definition}

Previously, \citet{Dibangoye:OMDP:2016} introduced the concept of (simultaneous-move) occupancy state to select a simultaneous joint decision rule on behalf of all agents, all at once. Instead, SOCs make it possible to select a private decision rule for each agent, one agent at a time. However, not all statistics can replace the sequential exhaustive data while preserving the ability to find an optimal solution. Statistics that exhibit such property include sufficient statistics of sequential exhaustive data for optimal decision-making in sequential-move Dec-POMDP $M$. Before further proving the sufficiency of SOCs, we start with two simple yet important preliminary lemmas.

\begin{restatable}[]{lem}{lemmarkov}[Proof in Appendix C.4]
\label{lem:markov}
SOCs $(s_\tau)_{\tau\in \mathbb{N}_{\leq \ell'-1}}$ describe a Markov process. In other words, there exists $T\colon (s_\tau, a_\tau) \mapsto s_{\tau+1}$, \ie for every state-history pair $\theta =(x_{\tau+1}, \langle o_\tau, u_\tau, z_{\tau+1}\rangle)$,
\begin{align*}
s_{\tau+1}(\theta) &= \textstyle
 \sum_{x_\tau\in X} p_{x_\tau,x_{\tau+1}}^{a_\tau( o^{\rho(\tau)}_\tau), z_{\tau+1}}(\tau)  \cdot 
 s_\tau(x_\tau,o_\tau).
\end{align*}
\end{restatable}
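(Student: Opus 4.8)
The plan is to read off the transition map $T$ directly from the probabilistic definition of a SOC, by unrolling a single conditioning step and then invoking (i) the fact that fixing the decision rule $a_\tau$ carries no information about the already-realized past, and (ii) the Markov dynamics of the underlying sequential-move Dec-POMDP $M$. Writing $o_{\tau+1} = \langle o_\tau, u_\tau, z_{\tau+1}\rangle$ and recalling that $\iota_{\tau+1} = (\iota_\tau, a_\tau)$, I would begin from $s_{\tau+1}(\theta) = \Pr\{x_{\tau+1}, o_{\tau+1} \mid \iota_\tau, a_\tau\}$ and marginalize over the hidden predecessor state $x_\tau$, using the chain rule to factor the joint probability as a sum over $x_\tau$ of a \emph{past} term $\Pr\{x_\tau, o_\tau \mid \iota_\tau, a_\tau\}$ multiplied by a \emph{one-step} term $\Pr\{x_{\tau+1}, u_\tau, z_{\tau+1} \mid x_\tau, o_\tau, \iota_\tau, a_\tau\}$.

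The next two steps simplify these factors. For the past term, I would argue that the decision rule $a_\tau$ is selected on the basis of $\iota_\tau$ alone and is applied only from epoch $\tau$ onward; since $a_\tau$ is an action variable appended to $\iota_\tau$ rather than an observation, it reveals nothing about the realized pair $(x_\tau, o_\tau)$, so $\Pr\{x_\tau, o_\tau \mid \iota_\tau, a_\tau\} = \Pr\{x_\tau, o_\tau \mid \iota_\tau\} = s_\tau(x_\tau, o_\tau)$. For the one-step term, note that $o_\tau$ already contains the acting agent's private history $o^{\rho(\tau)}_\tau$, so conditioning on the fixed rule $a_\tau$ forces the executed action to $u_\tau = a_\tau(o^{\rho(\tau)}_\tau)$; this contributes an indicator $\mathbf{1}\{u_\tau = a_\tau(o^{\rho(\tau)}_\tau)\}$. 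Conditional on $(x_\tau, u_\tau)$, the Markov property of $M$ makes the next state and the joint observation independent of the remaining history, giving $\Pr\{x_{\tau+1}, z_{\tau+1} \mid x_\tau, u_\tau, \ldots\} = p_{x_\tau, x_{\tau+1}}^{u_\tau, z_{\tau+1}}(\tau)$.

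Combining the two factors and summing over $x_\tau$ yields $s_{\tau+1}(\theta) = \mathbf{1}\{u_\tau = a_\tau(o^{\rho(\tau)}_\tau)\}\cdot \sum_{x_\tau\in X} p_{x_\tau, x_{\tau+1}}^{u_\tau, z_{\tau+1}}(\tau)\cdot s_\tau(x_\tau, o_\tau)$; on the support where the action component is consistent with $a_\tau$ the indicator is $1$ and this reproduces exactly the claimed recursion, while any $\theta$ whose action component disagrees with $a_\tau$ receives zero mass. Because the right-hand side is expressed solely through $s_\tau$, $a_\tau$, and the fixed model $M$, the map $T\colon (s_\tau, a_\tau)\mapsto s_{\tau+1}$ is well-defined, which is precisely the Markov property asserted.

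The main obstacle is rigorously justifying the two conditional-independence reductions rather than the algebra. The delicate one is the "past is unaffected by $a_\tau$" claim: it relies on the structural fact that $\iota_\tau = (b_0, a_{0:\tau-1})$ records only actions and that $a_\tau$ is chosen causally from $\iota_\tau$, so it cannot correlate with $(x_\tau, o_\tau)$ beyond what $\iota_\tau$ already determines. The second reduction, factoring the one-step term through the dynamics function $p(\tau)$, must explicitly appeal to the Markov assumption of $M$, namely that given the current state and executed action the next state and joint observation are conditionally independent of the full joint history $o_\tau$.
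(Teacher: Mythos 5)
Your proposal is correct and follows essentially the same route as the paper's proof: both start from $s_{\tau+1}(\theta) = \Pr\{x_{\tau+1}, o_{\tau+1} \mid \iota_\tau, a_\tau\}$, marginalize over the hidden state $x_\tau$, factor via the chain rule into a past term, an action-selection term, and a dynamics term, and then identify these with $s_\tau$, the decision rule, and $p(\tau)$ respectively. Your explicit justification of the two conditional-independence reductions (that conditioning on $a_\tau$ does not change the law of $(x_\tau, o_\tau)$, and that the dynamics screen off the rest of the history) is in fact slightly more careful than the paper, which applies both silently.
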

Lemma \ref{lem:markov} describes the transition rule of a continuous-state deterministic Markov decision process $M'$, in which states and actions are SOCs and private decision rules, respectively. Next, we define the expected immediate reward model of $M'$ that SOCs and private decision rules induce.

\begin{restatable}[]{lem}{lemreward}[Proof in Appendix C.5]
\label{lem:reward}
SOCs $(s_\tau)_{\tau\in \mathbb{N}_{\leq \ell'-1}}$ are sufficient statistics of sequential exhaustive data $(\iota_\tau)_{\tau\in \mathbb{N}_{\leq \ell'-1}}$ for estimating expected immediate reward. In other words, there exists a mapping $R\colon (s_\tau, a_\tau) \mapsto \mathbb{E}\{\lambda(\tau)\cdot r_{x_\tau, u_\tau}(\tau)\mid s_\tau, a_\tau\}$ such that for every sequential exhaustive data $\iota_\tau$, corresponding SOC $s_\tau$, and any private decision rule $a_\tau$, we have:
\begin{align*}
R(s_\tau, a_\tau) \doteq \pmb{r}(\iota_\tau, a_\tau).   
\end{align*}
\end{restatable}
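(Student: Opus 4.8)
The plan is to establish the identity by a direct marginalization argument: I would unroll the conditional expectation that defines $\pmb{r}(\iota_\tau, a_\tau)$ and observe that it depends on the exhaustive data $\iota_\tau$ only through the posterior $\Pr\{x_\tau, o_\tau \mid \iota_\tau\}$, which is exactly the SOC $s_\tau$ by Definition \ref{def:soc}. The whole content of the lemma is that this posterior is the only summary of $\iota_\tau$ that the immediate reward can ever ``see''.

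First I would expand the definition $\pmb{r}(\iota_\tau, a_\tau) = \mathbb{E}\{\lambda(\tau)\cdot r_{x_\tau,u_\tau}(\tau)\mid \iota_\tau, a_\tau\}$ as a sum over the hidden state $x_\tau \in X$ and the sequential joint history $o_\tau \in O_\tau$, weighting each pair by its posterior $\Pr\{x_\tau,o_\tau\mid\iota_\tau\}$. The only remaining randomness in the summand is the action $u_\tau$; but conditioning on the private decision rule $a_\tau$ eliminates it, since the acting agent $i=\rho(\tau)$ selects $u_\tau = a_\tau(o^{\rho(\tau)}_\tau)$ deterministically from the private-history component $o^{\rho(\tau)}_\tau$ of the joint history $o_\tau$. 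Substituting this and replacing $\Pr\{x_\tau,o_\tau\mid\iota_\tau\}$ by $s_\tau(x_\tau,o_\tau)$ yields
\[
\pmb{r}(\iota_\tau,a_\tau) = \lambda(\tau)\,{\textstyle\sum_{x_\tau\in X}\sum_{o_\tau\in O_\tau}}\, s_\tau(x_\tau,o_\tau)\cdot r_{x_\tau,\,a_\tau(o^{\rho(\tau)}_\tau)}(\tau),
\]
whose right-hand side is manifestly a function of the pair $(s_\tau, a_\tau)$ alone. I would then simply take this right-hand side as the definition of $R(s_\tau, a_\tau)$, so that $R(s_\tau, a_\tau) = \pmb{r}(\iota_\tau, a_\tau)$ holds by construction for every $\iota_\tau$ with associated SOC $s_\tau$, which is precisely the sufficiency claim.

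There is no deep obstacle here; the result is essentially a change-of-variables that collapses the conditioning on $\iota_\tau$ onto the conditioning on $s_\tau$. The one point that warrants care is the bookkeeping between joint and private histories: the reward $r_{x_\tau,u_\tau}(\tau)$ depends on the action of the acting agent $\rho(\tau)$ only, yet the summation ranges over full joint histories $o_\tau\in O_\tau$, so I must make sure that $a_\tau$ is applied to the correct private component $o^{\rho(\tau)}_\tau$ extracted from $o_\tau$, and that this marginalization is consistent with $s_\tau$ being a distribution over $X\times O_\tau$. Because each joint history determines its private components unambiguously and both $X$ and $O_\tau$ are finite, this is routine and raises no well-definedness issue. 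I would also note in passing that the same expansion is exactly the reward counterpart of the transition identity established in Lemma \ref{lem:markov}, so the two preliminary lemmas together equip the continuous-state deterministic MDP $M'$ with both its transition and reward models expressed purely in terms of SOCs.
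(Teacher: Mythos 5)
Your proof is correct and takes essentially the same route as the paper's: both expand the conditional expectation defining $\pmb{r}(\iota_\tau,a_\tau)$ as a finite sum over $(x_\tau,o_\tau)$, observe that the dependence on $\iota_\tau$ enters only through the posterior $\Pr\{x_\tau,o_\tau\mid\iota_\tau\}=s_\tau(x_\tau,o_\tau)$, and then take the resulting expression as the definition of $R(s_\tau,a_\tau)$. The only cosmetic difference is that the paper's appendix writes the decision rule in stochastic form, weighting by $a_\tau(u_\tau\mid o^{\rho(\tau)}_\tau)$ and summing over $u_\tau$, whereas you substitute the deterministic choice $u_\tau=a_\tau(o^{\rho(\tau)}_\tau)$ as in the main text's definition; the argument is identical either way.
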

We are now ready to define sequential-move occupancy Markov decision process ($so$MDP) $M'$ the Markov decision process that SOCs describe via dynamics and reward models $T$ (\cf Lemma \ref{lem:markov}) and $R$ (\cf Lemma \ref{lem:reward}), respectively.
\begin{definition}
A $so$MDP \wrt $M$ is given by a tuple $M'\doteq ( S, A, T, R,\ell' )$ where: $S \subset \triangle(X\times O)$ is the space of SOCs; $A$ is as in $M$; $T\colon S\times A\to S$ is the transition rule; and $R\colon S\times A\to \mathbb{R}$ is the immediate reward model.
\end{definition}

The goal of $so$MDP $M'$ is to find an optimal sequential joint policy $\pi$ which maximizes the cumulative $\gamma$-discounted reward starting from $s_0$ onward, \ie
$
\upsilon^\pi_{M',\lambda, 0}(s_0) \doteq {\textstyle \mathbb{E}\{\sum_{\tau=0}^{\ell'-1}  R(s_\tau, a_\tau) \mid s_0,\pi\}}.
$
The optimal state- and action-value functions $(\upsilon^*_{M',\lambda, \tau}, q^*_{M',\lambda, \tau})_{\tau\in \mathbb{N}_{\leq \ell'-1}}$ of $M'$ are solutions of \citeauthor{bellman}'s optimality equations: for every $s_\tau\in S$, 
\begin{align*}
\upsilon^*_{M', \lambda, \tau}(s_\tau) &={\textstyle \max_{a_\tau\in A_\tau} q^*_{M', \lambda, \tau}(s_\tau, a_\tau)},\\
q^*_{M', \lambda, \tau}(s_\tau, a_\tau) &= R(s_\tau, a_\tau) + \upsilon^*_{M', \lambda, \tau+1}( T (s_\tau, a_\tau)),
\end{align*}
with boundary condition $\upsilon^*_{M', \lambda, \ell'}(\cdot) = q^*_{M, \lambda, \ell'}(\cdot,\cdot) \doteq 0$. Given the optimal value functions, one can extract an optimal sequential joint policy $\pi^*$ as follows: for every decision epoch $\tau\in \mathbb{N}_{\leq \ell}$, and SOC $s^*_\tau \doteq (\Pr\{x_\tau, o_\tau | s_0,\pi^*\})_{x_\tau\in X, o_\tau\in O}$,  we have $a_\tau^* = \argmax_{a_\tau\in A_\tau} q^*_{M', \lambda, \tau}(s^*_\tau, a_\tau)$. Interestingly, an optimal sequential joint policy for $so$MDP $M'$ is also an optimal joint policy for the original sequential-move Dec-POMDP $M$.

\begin{restatable}[]{thm}{thmsufficiencycondition}[Proof in Appendix C.6]
\label{thm:sufficiency:condition}
Let $M$ be a sequential-move Dec-POMDP. Let $\pmb{M}$ be a sequential-move data-driven MDP \wrt $M$. SOCs $(s_\tau)_{\tau\in \mathbb{N}_{\leq \ell'-1}}$ are statistics sufficient to replace the sequential exhaustive data $(\iota_\tau)_{\tau\in \mathbb{N}_{\leq \ell'-1}}$ via $so$MDP $M'$ \wrt $M$ while still preserving the ability to optimally solve $\pmb{M}$ (resp. $M$).
\end{restatable}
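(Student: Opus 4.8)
The plan is to establish a pointwise equivalence between the optimal value functions of the data-driven MDP $\pmb{M}$ and the $so$MDP $M'$, transported along the SOC map, and then to read off the statement about optimal policies. Write $f_\tau\colon \iota_\tau \mapsto s_\tau$ for the map sending each sequential exhaustive data to its associated SOC, as given by Definition \ref{def:soc}. The two facts I would lean on are already in hand: Lemma \ref{lem:markov} says that $f_\tau$ intertwines the two transition rules, i.e. $f_{\tau+1}(\pmb{p}(\iota_\tau, a_\tau)) = T(f_\tau(\iota_\tau), a_\tau)$; and Lemma \ref{lem:reward} says that rewards agree along $f_\tau$, i.e. $R(f_\tau(\iota_\tau), a_\tau) = \pmb{r}(\iota_\tau, a_\tau)$. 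Since both MDPs share the same action sets $A_\tau$ at every epoch, these two identities are exactly what is needed to transport the \citeauthor{bellman} recursion.

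First I would prove, by backward induction on $\tau$ from $\ell'$ down to $0$, the claim
\begin{align*}
\upsilon^*_{\pmb{M}, \lambda, \tau}(\iota_\tau) = \upsilon^*_{M', \lambda, \tau}(f_\tau(\iota_\tau)) \quad \text{for all } \iota_\tau.
\end{align*}
The base case $\tau = \ell'$ is immediate from the shared boundary condition $\upsilon^*_{\pmb{M}, \lambda, \ell'} = \upsilon^*_{M', \lambda, \ell'} = 0$. For the inductive step, I would expand the Bellman equation for $\pmb{M}$, substitute $\pmb{r}(\iota_\tau, a_\tau) = R(s_\tau, a_\tau)$ via Lemma \ref{lem:reward} and $f_{\tau+1}(\pmb{p}(\iota_\tau, a_\tau)) = T(s_\tau, a_\tau)$ via Lemma \ref{lem:markov}, apply the induction hypothesis to rewrite $\upsilon^*_{\pmb{M}, \lambda, \tau+1}(\pmb{p}(\iota_\tau, a_\tau))$ as $\upsilon^*_{M', \lambda, \tau+1}(T(s_\tau, a_\tau))$, and recognize the resulting maximand as the Bellman equation for $M'$ evaluated at $s_\tau = f_\tau(\iota_\tau)$. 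The same substitutions show, term by term, that the action-value functions coincide: $q^*_{\pmb{M}, \lambda, \tau}(\iota_\tau, a_\tau) = q^*_{M', \lambda, \tau}(f_\tau(\iota_\tau), a_\tau)$ for every $a_\tau \in A_\tau$.

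From this per-action equality I would deduce the policy claim. Fix any policy $\pi^*_{M'}$ optimal for $M'$, which at epoch $\tau$ selects $a_\tau^* \in \argmax_{a_\tau \in A_\tau} q^*_{M', \lambda, \tau}(s_\tau, a_\tau)$. Composing with $f_\tau$ yields the policy $\iota_\tau \mapsto \pi^*_{M'}(f_\tau(\iota_\tau))$ on $\pmb{M}$; since $\pmb{M}$ is deterministic, the SOC reached after any prefix of actions is pinned down, so this composition is a well-defined executable policy. Because $q^*_{\pmb{M}, \lambda, \tau}(\iota_\tau, \cdot) = q^*_{M', \lambda, \tau}(f_\tau(\iota_\tau), \cdot)$, the set of maximizing actions is identical in both problems, so the induced policy attains $\upsilon^*_{\pmb{M}, \lambda, \tau}(\iota_\tau)$ at every epoch and is therefore optimal for $\pmb{M}$. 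Invoking Lemma \ref{lem:equivalence:sequential} then upgrades optimality for $\pmb{M}$ to optimality for $M$, establishing the sufficiency of the SOCs.

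The main obstacle, and the point I would be most careful about, is the commutation $f_{\tau+1} \circ \pmb{p} = T \circ (f_\tau \times \mathrm{id})$: the value recursion only transports cleanly if applying the $\pmb{M}$-transition and then summarizing gives the same SOC as summarizing first and then applying $T$. This is precisely the content of Lemma \ref{lem:markov}, so the genuinely technical work has already been isolated there; what remains is to check that $f_\tau$ is defined coherently across epochs, so that the induction hypothesis applies to $f_{\tau+1}(\pmb{p}(\iota_\tau,a_\tau))$ and not to some other representative, and that the many-to-one nature of $f_\tau$ causes no difficulty. The latter is harmless here, because I am \emph{proving} that $\upsilon^*_{\pmb{M}}$ factors through $f_\tau$ rather than assuming it.
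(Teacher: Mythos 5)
Your proof follows essentially the same route as the paper's: a backward induction on $\tau$ that transports \citeauthor{bellman}'s recursion along the SOC map, using Lemma \ref{lem:reward} for the reward substitution and Lemma \ref{lem:markov} for the commutation $f_{\tau+1}(\pmb{p}(\iota_\tau,a_\tau)) = T(f_\tau(\iota_\tau),a_\tau)$, then invoking Lemma \ref{lem:equivalence:sequential} to carry optimality from $\pmb{M}$ to $M$. Your version is, if anything, slightly more explicit than the paper's on the final step — stating the per-action equality $q^*_{\pmb{M},\lambda,\tau}(\iota_\tau,\cdot) = q^*_{M',\lambda,\tau}(f_\tau(\iota_\tau),\cdot)$ and arguing that the argmax sets coincide — where the paper leaves the policy-extraction consequence implicit.
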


Theorem \ref{thm:sufficiency:condition} demonstrates that---by optimally solving any problem $M'$, $\pmb{M}$ or $M$ along with the appropriate performance criteria---we are guaranteed to find an optimal solution for the others. 

\section{Exploiting Structural Results} \label{sec:structural_properties}

The previous section shows that---to solve simultaneous-move Dec-POMDP $M$---one can equivalently solve the corresponding $so$MDP $M'$. This section shows optimal sequential state- and action-value functions $(\upsilon^*_{M',\lambda, \tau}, q^*_{M',\lambda, \tau})_{\tau\in \mathbb{N}_{\leq \ell'-1}}$ for $M'$ are piecewise linear and convex functions of states and actions in $M'$. These convexity properties enable us to generalize values experienced in a few states and actions over the entire state and action spaces. Doing so speeds up convergence towards an optimal solution for $M'$ (respectively, $M$). Similar properties exist for simultaneous-move Dec-POMDPs as well. However, maintaining sequential action-value functions will prove more tractable---\ie point-based simultaneous-move backups are double exponential, whereas sequential-move ones exhibit only polynomial-time complexity in the worst case. 

Before we state one of the main results of this paper---the piecewise linearity and convexity of optimal sequential state- and action-value functions over state and action spaces---we introduce the following preliminary definition of sequential state-action occupancy measures and related properties.
Let $\hat{s}_\tau$ and $\hat{a}_\tau$ be extended variants of SOCs $s_\tau$ and private decision rules $a_\tau$, respectively. In other words, for SOC $s_\tau$ and private decision rule $a_\tau$, we have $\hat{s}_\tau\colon (x_\tau,o_\tau, u_\tau) \mapsto s_\tau(x_\tau,o_\tau)$ and $\hat{a}_\tau\colon (x_\tau,o_\tau, u_\tau) \mapsto \Pr\{u_\tau | a_\tau, o^{\rho(\tau)}_\tau\}$. We denote $\hat{S}$ and $\hat{A}$ the extended spaces of SOCs and private decision rules, respectively. We further let $\hat{S}\odot \hat{A}$ be the space of all Hadamard products $\hat{s}_\tau\odot \hat{a}_\tau$ between extended SOC $\hat{s}_\tau$ and private decision rule $\hat{a}_\tau$, that is sequential state-action occupancy measure, where $[\hat{s}_\tau\odot \hat{a}_\tau](x_\tau,o_\tau, u_\tau) = \hat{s}_\tau (x_\tau,o_\tau, u_\tau)\cdot \hat{a}_\tau (x_\tau,o_\tau, u_\tau)$.

\begin{restatable}[]{thm}{thmpwlc}[Proof in Appendix D.5]
\label{thm:pwlc}
For any $so$MDP $M'$, optimal state-value functions $\upsilon^*_{M',\lambda, 0:\ell'}$ are piecewise linear and convex \wrt SOCs. Furthermore, optimal action-value functions $q^*_{M',\lambda, 0:\ell'}$ are also piecewise linear and convex \wrt sequential state-action occupancy measures.
\end{restatable}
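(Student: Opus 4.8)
The plan is to proceed by backward induction on the decision epoch $\tau$, exploiting the linearity of the reward model $R$ and the transition rule $T$ established in Lemmas \ref{lem:reward} and \ref{lem:markov}. First I would record the base case: at $\tau=\ell'$ the boundary condition gives $\upsilon^*_{M',\lambda,\ell'}\equiv 0$ and $q^*_{M',\lambda,\ell'}\equiv 0$, which are trivially maxima of a singleton family of linear functionals, hence piecewise linear and convex (PWLC). The induction hypothesis is that $\upsilon^*_{M',\lambda,\tau+1}$ is PWLC in SOCs, i.e., there is a finite set $\Lambda_{\tau+1}$ of linear functionals with $\upsilon^*_{M',\lambda,\tau+1}(s_{\tau+1}) = \max_{\alpha\in\Lambda_{\tau+1}} \langle s_{\tau+1},\alpha\rangle$.

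The key algebraic step, and the heart of the new result, is to show that both $R$ and $T$ factor \emph{linearly} through the sequential state-action occupancy measure $\hat{s}_\tau\odot\hat{a}_\tau$. Using the extended variables, I would rewrite the reward as $R(s_\tau,a_\tau) = \sum_{x_\tau,o_\tau,u_\tau} [\hat{s}_\tau\odot\hat{a}_\tau](x_\tau,o_\tau,u_\tau)\cdot\lambda(\tau)\,r_{x_\tau,u_\tau}(\tau)$ and, writing the successor $\theta=(x_{\tau+1},\langle o_\tau,u_\tau,z_{\tau+1}\rangle)$, recast the transition of Lemma \ref{lem:markov} as $[T(s_\tau,a_\tau)](\theta) = \sum_{x_\tau} p^{u_\tau,z_{\tau+1}}_{x_\tau,x_{\tau+1}}(\tau)\,[\hat{s}_\tau\odot\hat{a}_\tau](x_\tau,o_\tau,u_\tau)$. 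Both expressions are linear in $\hat{s}_\tau\odot\hat{a}_\tau$; the point is that replacing the deterministic action selection by the (here degenerate) distribution $\hat{a}_\tau$ is exactly what lets the otherwise discrete decision variable enter linearly.

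Granting this factorization, the composition argument is routine. For fixed $a_\tau$, the term $\upsilon^*_{M',\lambda,\tau+1}(T(s_\tau,a_\tau))$ is a maximum of linear functionals composed with the linear map $T$, hence itself a maximum of finitely many linear functionals of $\hat{s}_\tau\odot\hat{a}_\tau$; adding the linear term $R$ keeps it a maximum of linear functionals, so $q^*_{M',\lambda,\tau}$ is PWLC in the sequential state-action occupancy measure, which proves the second claim. For the first claim, I observe that fixing $a_\tau$ makes $\hat{s}_\tau\odot\hat{a}_\tau$ a linear image of $s_\tau$, so $q^*_{M',\lambda,\tau}(\cdot,a_\tau)$ is PWLC in $s_\tau$; then $\upsilon^*_{M',\lambda,\tau}(s_\tau) = \max_{a_\tau\in A_\tau} q^*_{M',\lambda,\tau}(s_\tau,a_\tau)$ is a maximum over the finite set $A_\tau$ of PWLC functions, hence PWLC, and collecting the generating functionals into a finite family $\Lambda_\tau$ closes the induction.

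The main obstacle I anticipate is not the composition bookkeeping but establishing the linear factorization cleanly: one must verify that the successor history $\theta$ pins down a unique $u_\tau$, so that summing over the new joint history does not reintroduce a nonlinearity, and that the extended objects $\hat{s}_\tau$ and $\hat{a}_\tau$ are set up so their Hadamard product exactly reproduces the joint law over $(x_\tau,o_\tau,u_\tau)$. A secondary point to check is finiteness of $\Lambda_\tau$ at each step, which follows because $A_\tau$, $X$, and the observation sets are finite, so each backup generates only finitely many alpha-vectors.
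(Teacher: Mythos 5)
Your proposal is correct, but it takes a genuinely different route from the paper. The paper does \emph{not} run a backward induction on the optimality equations: it first proves (Lemmas~\ref{lem:linear:reward}, \ref{lem:linear:transition}, \ref{lem:linear:policy:value:fct}, and \ref{lem:linear:policyvalue:qfct}) that the reward, the transition rule, and the state- and action-value functions \emph{under any fixed policy} are linear \wrt SOCs (resp.\ sequential state-action occupancy measures), the induction being confined to the fixed-policy lemma; it then invokes the existence of an optimal \emph{deterministic} policy (citing Bellman/Puterman) and the finiteness of the deterministic policy space to write $\upsilon^*_{M',\lambda,\tau}$ and $Q^*_{M',\lambda,\tau}$ as maxima of \emph{finitely many linear} fixed-policy value functions, which gives PWLC in one stroke. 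Your argument instead propagates an alpha-vector representation through one Bellman backup at a time, in the classic Smallwood--Sondik style, using exactly the same bilinearity facts (your ``key algebraic step'' is the content of Lemmas~\ref{lem:linear:reward} and~\ref{lem:linear:transition}) plus the validity of the optimality equations for the $so$MDP, which the paper states in the main text. The trade-off: the paper's policy-enumeration argument is shorter once the linearity lemmas are in place and sidesteps any induction on the \emph{optimal} value function, but it needs the deterministic-policy sufficiency result; your induction is more constructive---it yields an explicit finite bound $|\Lambda_\tau| \leq |A_\tau|\cdot|\Lambda_{\tau+1}|$ on the number of generating linear functionals and effectively derives the max-plane backup of Corollary~\ref{cor:lower:bound} and Theorem~\ref{thm:lower:bound} as a by-product. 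Your two flagged obstacles are indeed the right ones to check, and both resolve as you expect: the successor joint history $o_{\tau+1}=(o_\tau,u_\tau,z_{\tau+1})$ carries $u_\tau$ explicitly, so no nonlinear marginalization occurs, and $A_\tau$ is finite because private histories and actions are finite.
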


Theorem \ref{thm:pwlc} proved the optimal state-value function for $M'$ is piecewise linear and convex. We now introduce a practical representation of piecewise linear and convex optimal state- and action-value functions $\upsilon^*_{M',\lambda, 0:\ell'}$ and $q^*_{M',\lambda, 0:\ell'}$, respectively. We show they can be represented as the upper envelope of a finite collection of linear functions. Such a representation, suitable only for lower-bound state- and action-value functions, is referred to as the max-plane representation \citep{smith07:thesis}. In addition, we describe operations for initializing, updating and selecting greedy decisions. 

\begin{restatable}[]{cor}{corlowerbound}[Proof in Appendix F.1]
\label{cor:lower:bound}
For any $so$MDP $M'$, let $\upsilon^*_{M',\lambda, 0:\ell'}$ be the optimal state-value functions \wrt SOCs. Then, for every $\tau\in \mathbb{N}_{\leq \ell'-1}$, there exists a finite collection of linear functions  \wrt SOCs, $V_\tau \doteq \{\alpha_\tau^{(\kappa)} \colon \kappa \in \mathbb{N}_{\leq k} \}$  such that:
\begin{align*}
    \upsilon^*_{M',\lambda, \tau}\colon s_\tau &\mapsto\textstyle  \max_{\alpha_\tau^{(\kappa)}\in V_\tau} \langle s_\tau, \alpha_\tau^{(\kappa)}\rangle
\end{align*}
where $\langle \cdot, \cdot\rangle$ denotes the inner product.
\end{restatable}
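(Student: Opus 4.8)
The plan is to proceed by backward induction on the decision epoch $\tau$, turning the piecewise linearity and convexity already granted by Theorem \ref{thm:pwlc} into an explicit, constructive upper-envelope representation. The two structural facts that make the induction go through are the linearity of the reward model $R$ in the SOC (Lemma \ref{lem:reward}) and the linearity of the transition rule $T(\cdot, a_\tau)$ in the SOC for each fixed private decision rule $a_\tau$ (Lemma \ref{lem:markov}). Concretely, I would first rewrite $R(s_\tau, a_\tau)$ as an inner product $\langle s_\tau, r_{a_\tau}\rangle$ for a suitable coefficient vector $r_{a_\tau}$ (reading its entries off the expectation defining $R$), and observe that $s_\tau \mapsto T(s_\tau, a_\tau)$ is a linear map, so that for any fixed vector $\alpha$ the composition $s_\tau \mapsto \langle T(s_\tau, a_\tau), \alpha\rangle$ equals $\langle s_\tau, T_{a_\tau}^{\top}\alpha\rangle$, where $T_{a_\tau}^{\top}$ denotes the adjoint (transpose) of that linear map. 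This is the single ingredient that keeps every intermediate quantity linear in $s_\tau$.

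For the induction itself, the base case is $\tau = \ell'$, where $\upsilon^*_{M',\lambda, \ell'} \equiv 0$ is represented by the singleton $V_{\ell'} = \{0\}$. For the inductive step, I would assume $\upsilon^*_{M',\lambda, \tau+1}(s_{\tau+1}) = \max_{\alpha \in V_{\tau+1}} \langle s_{\tau+1}, \alpha\rangle$ with $V_{\tau+1}$ finite, and substitute this into \citeauthor{bellman}'s optimality equation $\upsilon^*_{M',\lambda, \tau}(s_\tau) = \max_{a_\tau \in A_\tau} \bigl[ R(s_\tau, a_\tau) + \upsilon^*_{M',\lambda, \tau+1}(T(s_\tau, a_\tau)) \bigr]$. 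Using the two linearity facts above, the bracketed term becomes $\max_{\alpha \in V_{\tau+1}} \langle s_\tau, r_{a_\tau} + T_{a_\tau}^{\top}\alpha\rangle$, since the reward contribution $\langle s_\tau, r_{a_\tau}\rangle$ does not depend on $\alpha$ and can be absorbed inside the inner maximum. The outer maximum over $a_\tau$ then merges with this inner maximum, because a pointwise maximum of pointwise maxima of linear functions is again the pointwise maximum over the union of all linear functions involved. This yields the claimed representation with $V_\tau \doteq \{\, r_{a_\tau} + T_{a_\tau}^{\top}\alpha : a_\tau \in A_\tau,\ \alpha \in V_{\tau+1}\,\}$.

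Finiteness of $V_\tau$ then follows immediately: $A_\tau$ is finite and $V_{\tau+1}$ is finite by the induction hypothesis, so $|V_\tau| \leq |A_\tau|\cdot|V_{\tau+1}|$, and each element is a genuine linear function of $s_\tau$. I expect the main obstacle to lie purely in the bookkeeping that guarantees linearity is preserved through the Bellman recursion, rather than in any single estimate: I would make the adjoint $T_{a_\tau}^{\top}$ precise by reading off its coordinates from the explicit transition formula of Lemma \ref{lem:markov}, and check that $r_{a_\tau} + T_{a_\tau}^{\top}\alpha$ is a well-defined vector indexed by the pairs $(x_\tau, o_\tau)$. Merging the nested maxima is then routine, since $\max_{a_\tau}\max_{\alpha}$ of a family of functions is the maximum over the product index set; the content of the step is solely that every member of that family is linear in $s_\tau$, which is exactly what Lemmas \ref{lem:markov} and \ref{lem:reward} deliver and what makes the upper-envelope (max-plane) form an exact representation of $\upsilon^*_{M',\lambda, \tau}$.
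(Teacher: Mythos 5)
Your proof is correct, but it follows a genuinely different route from the paper's. The paper disposes of Corollary \ref{cor:lower:bound} in one line, as an immediate consequence of Theorem \ref{thm:pwlc}; the real work there is a \emph{policy-enumeration} argument: the optimal value function is the pointwise maximum, over the finitely many deterministic sequential policies-to-go $a^{\mathtt{det}}_{\tau:\ell'-1}$, of the per-policy value functions $\upsilon^{a^{\mathtt{det}}_{\tau:\ell'-1}}_{M',\lambda,\tau}$, each of which is linear in the SOC (Lemma \ref{lem:linear:policy:value:fct}), and this step invokes the classical fact that MDPs admit optimal deterministic policies \citep{bellman,puterman}. You instead run a backward induction directly on \citeauthor{bellman}'s optimality equations, constructing $V_\tau \doteq \{\, r_{a_\tau} + T_{a_\tau}^{\top}\alpha \colon a_\tau \in A_\tau,\ \alpha \in V_{\tau+1}\,\}$ from $V_{\tau+1}$ — the Smallwood--Sondik-style exact backup — using only the linearity of $R(\cdot,a_\tau)$ and $T(\cdot,a_\tau)$ for fixed $a_\tau$ (which you correctly read off Lemmas \ref{lem:reward} and \ref{lem:markov}) and the finiteness of $A_\tau$. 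The two constructions produce essentially the same family (unrolling your recursion indexes $V_\tau$ by policies-to-go), but they buy different things: the paper's argument is shorter given Theorem \ref{thm:pwlc} and simultaneously yields the PWLC property of the action-value functions over state-action occupancy measures; yours is constructive, gives the explicit cardinality bound $|V_\tau| \leq |A_\tau|\cdot|V_{\tau+1}|$, avoids invoking the existence of optimal deterministic policies as a black box (the Bellman maximum is already over the finite set of deterministic rules $A_\tau$), and directly exhibits the backup operator that the paper only introduces later for algorithmic purposes in Theorem \ref{thm:lower:bound}. One small point of care: your argument presupposes that $\upsilon^*_{M',\lambda,\tau}$ satisfies the stated optimality equations with the maximum attained over $A_\tau$, which is unproblematic here precisely because the paper defines the optimal value functions as solutions of those equations and $A_\tau$ is finite.
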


Next,  it will prove useful to define a constant private decision rule $a^{u_\tau}_\tau(\cdot) = u_\tau$, which prescribes $u_\tau$ for any private history, and the action-value linear function $\beta_\tau^{(\kappa)}$ that state-value linear function $\alpha_{\tau+1}^{(\kappa)}$ induces, \ie for any $(x_\tau, o_\tau, u_\tau) $,
\begin{align*} 
\beta_\tau^{(\kappa)}  (x_\tau, o_\tau, u_\tau) = \mathbb{E}\{ \lambda(\tau) r_{x_\tau,u_\tau}(\tau) +  \alpha_{\tau+1}^{(\kappa)}(x_{\tau+1}, o_{\tau+1})\}.
\end{align*} 
We are now ready to describe how to greedily select a private decision rule from a given state-value function.

\begin{restatable}[]{thm}{thmlowerbound}[Proof in Appendix F.2]
\label{thm:lower:bound}
For any $so$MDP $M'$, let $V_{\tau+1}$ be a finite collection of linear functions \wrt SOCs providing the max-plane representation of state-value function $\upsilon_{M',\lambda, \tau+1}$. Then, it follows that greedy private decision rule $a_\tau^*$ at SOC $s_\tau$ is:
\begin{align*}
&(a_\tau^{s_\tau}, \beta_\tau^{s_\tau}) \in \textstyle { \argmax_{(a_\tau^{(\kappa)}, \beta_\tau^{(\kappa)})\colon \kappa \in \mathbb{N}_{\leq k}} \langle \hat{s}_\tau \odot \hat{a}_\tau^{(\kappa)}, \beta_\tau^{(\kappa)}\rangle},\\
&a_\tau^{(\kappa)}(o^{\rho(\tau)}_\tau) \in \textstyle \argmax_{u_\tau\in U(\tau)} \langle \hat{s}_\tau \odot \hat{a}^{u_\tau}_\tau,\beta_\tau^{(\kappa)} \rangle.
\end{align*} 
Also, $\alpha_\tau^{s_\tau}\colon (x_\tau, o_\tau) \mapsto \beta_\tau^{s_\tau}(x_\tau, o_\tau, a_\tau^{s_\tau}(o^{\rho(\tau)}_\tau))$ is the greedy linear function induced by $ \beta_\tau^{s_\tau}$. The update of finite collection $V_\tau$ is as follows: $V_\tau \gets V_\tau \cup \{\alpha_\tau^{s_\tau}\}$.
\end{restatable}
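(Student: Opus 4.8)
The plan is to show that the one-step greedy backup at $s_\tau$ against the value function represented by $V_{\tau+1}$ can be rewritten as a maximization of inner products against the induced action-value linear functions $\beta_\tau^{(\kappa)}$, and that this maximization separates: first over the generators $\kappa$, then, for each $\kappa$, independently over the acting agent's private histories. First I would write the backed-up action-value $q(s_\tau,a_\tau)=R(s_\tau,a_\tau)+\upsilon_{M',\lambda,\tau+1}(T(s_\tau,a_\tau))$ and substitute the max-plane representation of Corollary~\ref{cor:lower:bound}, namely $\upsilon_{M',\lambda,\tau+1}(s_{\tau+1})=\max_\kappa\langle s_{\tau+1},\alpha_{\tau+1}^{(\kappa)}\rangle$. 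Pulling the finite maximum outside the linear reward term yields $q(s_\tau,a_\tau)=\max_\kappa\bigl[R(s_\tau,a_\tau)+\langle T(s_\tau,a_\tau),\alpha_{\tau+1}^{(\kappa)}\rangle\bigr]$.

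Next I would establish the collapse identity $R(s_\tau,a_\tau)+\langle T(s_\tau,a_\tau),\alpha_{\tau+1}^{(\kappa)}\rangle=\langle \hat{s}_\tau\odot\hat{a}_\tau,\beta_\tau^{(\kappa)}\rangle$. Expanding $R$ via Lemma~\ref{lem:reward} and $T$ via the transition rule of Lemma~\ref{lem:markov}, then reordering the sums so that the mass $s_\tau(x_\tau,o_\tau)$ is pulled outside and the expectation over next states and observations is taken inside, reproduces exactly the definition of $\beta_\tau^{(\kappa)}$ evaluated at $(x_\tau,o_\tau,a_\tau(o^{\rho(\tau)}_\tau))$; because $\hat{a}_\tau$ places all its mass on $u_\tau=a_\tau(o^{\rho(\tau)}_\tau)$ for a deterministic rule, this sum is precisely $\langle \hat{s}_\tau\odot\hat{a}_\tau,\beta_\tau^{(\kappa)}\rangle$. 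This is the action-value instance of Theorem~\ref{thm:pwlc}, giving $q(s_\tau,a_\tau)=\max_\kappa\langle \hat{s}_\tau\odot\hat{a}_\tau,\beta_\tau^{(\kappa)}\rangle$, so by \citeauthor{bellman}'s optimality equations the backed-up value is $\max_{a_\tau}\max_\kappa\langle \hat{s}_\tau\odot\hat{a}_\tau,\beta_\tau^{(\kappa)}\rangle$.

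I would then interchange the two finite maxima and, for each fixed $\kappa$, perform the inner maximization over $a_\tau$. Writing $o_\tau=(o^{\rho(\tau)}_\tau,o^{-\rho(\tau)}_\tau)$ with $o^{-\rho(\tau)}_\tau$ the non-acting agents' histories, one has $\langle \hat{s}_\tau\odot\hat{a}_\tau,\beta_\tau^{(\kappa)}\rangle=\sum_{o^{\rho(\tau)}_\tau}\sum_{x_\tau,o^{-\rho(\tau)}_\tau} s_\tau(x_\tau,o_\tau)\,\beta_\tau^{(\kappa)}(x_\tau,o_\tau,a_\tau(o^{\rho(\tau)}_\tau))$, which is additively separable across the acting agent's histories, and $a_\tau$ may be chosen independently at each such history. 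Hence the inner maximum is attained by selecting, per history, the action maximizing its local contribution, which is the per-history $\argmax$ of the statement; taking the outer maximum over $\kappa$ of the resulting per-$\kappa$ optimized rules yields the pair $(a_\tau^{s_\tau},\beta_\tau^{s_\tau})$ of the first $\argmax$. Finally I would set $\alpha_\tau^{s_\tau}(x_\tau,o_\tau)=\beta_\tau^{s_\tau}(x_\tau,o_\tau,a_\tau^{s_\tau}(o^{\rho(\tau)}_\tau))$ and verify that $\langle s_\tau,\alpha_\tau^{s_\tau}\rangle$ equals the backed-up value at $s_\tau$, while $\langle s'_\tau,\alpha_\tau^{s_\tau}\rangle$ lower-bounds the backup at any other $s'_\tau$ (fixing the pair $(a_\tau^{s_\tau},\beta_\tau^{s_\tau})$ can only underestimate the joint maximum), which justifies the update $V_\tau\gets V_\tau\cup\{\alpha_\tau^{s_\tau}\}$.

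The main obstacle is the per-history separation and the correct reading of its compact notation. The objective is linear in $\hat{a}_\tau$ and separable over $o^{\rho(\tau)}_\tau$, so the choice at one private history is decoupled from the choices at all others; the subtle point is that the constant rule $a^{u_\tau}_\tau$ applies $u_\tau$ at \emph{every} history, so the expression $\langle \hat{s}_\tau\odot\hat{a}^{u_\tau}_\tau,\beta_\tau^{(\kappa)}\rangle$ in the per-history $\argmax$ must be read as the contribution of the single slice of histories matching $o^{\rho(\tau)}_\tau$, not as the full inner product. I would make this restriction explicit so that the per-history maximizer coincides with $\argmax_{u_\tau}\sum_{x_\tau,o^{-\rho(\tau)}_\tau} s_\tau(x_\tau,o_\tau)\,\beta_\tau^{(\kappa)}(x_\tau,o_\tau,u_\tau)$. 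Everything else is a routine reindexing of sums resting on Lemmas~\ref{lem:markov} and~\ref{lem:reward} and Corollary~\ref{cor:lower:bound}.
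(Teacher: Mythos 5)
Your proposal is correct and follows essentially the same route as the paper's own proof: substitute the max-plane representation from Corollary~\ref{cor:lower:bound} into the \citeauthor{bellman} backup, use the bilinearity of reward and transition (Lemmas~\ref{lem:reward} and~\ref{lem:markov}) to collapse the objective into $\langle \hat{s}_\tau \odot \hat{a}_\tau, \beta_\tau^{(\kappa)}\rangle$, swap the maxima, and exploit additive separability over the acting agent's private histories to decouple the per-history action choices. Your added remarks --- the explicit verification that $\alpha_\tau^{s_\tau}$ lower-bounds the backup at every other SOC (justifying the update of $V_\tau$), and the clarification that $\langle \hat{s}_\tau \odot \hat{a}^{u_\tau}_\tau, \beta_\tau^{(\kappa)}\rangle$ must be read as the slice restricted to the fixed history $o^{\rho(\tau)}_\tau$ rather than the full inner product --- are sound refinements of points the paper leaves implicit, not a departure from its argument.
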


Theorem \ref{thm:lower:bound} greedily selects a decision rule $a_\tau^{s_\tau}$ according to a lower-bound value function at a given SOC $s_\tau$ and a linear action-value function $\beta_\tau^{s_\tau}$ to update the lower-bound value function $\upsilon_{M',\lambda, \tau}$. It then computes a linear state-value function $\alpha_\tau^{s_\tau}$ over SOCs to be added in the collection of linear functions $V_\tau$. These update rules exhibit a complexity that is polynomial only \wrt the sizes of SOC $s_\tau$ and set $V_\tau$, \cf Lemma 5 in the appendix. Similar rules have been previously introduced for the simultaneous-move setting. Unfortunately, these rules relied on mixed-integer linear programs in the best case. In the worst case, they enumerate all joint decision rules, thus creating a double exponential time complexity, \cf Lemma 5 in the appendix.

\section{The $o$SARSA Algorithm} \label{sec:oSARSA}

This section extends a reinforcement learning algorithm for $\epsilon$-optimally solving simultaneous-move Dec-POMDPs, known as $o$SARSA  \citep{DibangoyeBICML18}, to sequential centralized training for decentralized execution. 

$o$SARSA generalizes SARSA from MDPs to $o$MDPs. It iteratively improves a (sequential) policy and its corresponding linear action-value functions, one per time step. The improved (sequential) policies are constructed by generating trajectories of OCs, one per episode, guided by an $\epsilon$-greedy exploration strategy. A mixed-integer linear program (MILP) is used to select greedy joint decision rules to avoid the enumeration of double-exponential joint decision rules at every step. Once a trajectory resumes, it updates the linear action-value functions in the reversed order in which OCs were visited in it. We chose $o$SARSA among all point-based algorithms because it leverages the piecewise linearity and convexity properties of optimal value functions in Dec-POMDPs, yet maintains only a single linear value function per time step. Besides, it is guaranteed to find $\epsilon$-optimal policy asymptotically. Note that algorithms that do not restrict to the max-plane representations for value functions, \eg the feature-based value iteration (FB-HSVI) algorithm \citep{Dibangoye:OMDP:2016}, may not fully exploit our findings.

\begin{algorithm}[t]
    \caption{$o$SARSA$(\epsilon)$.}
    \label{algo:oSARSA}

        \SetKwFor{ForEachInit}{foreach}{initialize}{}
        
        \BlankLine
        Init $a_{0:\ell'-1}\gets$ blind policy, $\alpha_{\cdot}(\cdot) \gets 0$, $g_{\cdot} \gets -\infty$. \\
        
    \ForEach{episode}{
            Init SOC $s_0 \gets b_0$ and step $\tau_0\gets 0$.\\ 
            \For{$\tau = 0$ \textbf{to} $\ell'-1$}{
                Select\label{algo:line:eps-greedy} $\epsilon$-greedily $a_\tau^{s_\tau}$ \wrt $\alpha_{\tau+1}$.\\
                Compute SOC $s_{\tau+1} \gets     T(s_\tau, a^{s_\tau}_\tau)$. \\
                \label{algo:line:accept}\If{ $\pmb{\mathtt{Accept}}(a_\tau^{s_\tau},  g_{\tau+1}, \alpha_{\tau+1}(s_{\tau+1}))$   }{
                    $a_{0:\ell'-1} \gets \langle a_{0:\tau-1},a_\tau^{s_\tau}, a_{\tau+1:\ell'-1} \rangle$.  \\
                    $g_{\tau+1} \gets \alpha_{\tau+1}(s_{\tau+1})$.\\
                    $\tau_0 \gets \tau$. \\
                }
            }
            \For{$\tau = \tau_0$ \textbf{to} $0$}{
                Update $\alpha_\tau$ backward.\\
            }
        }
        
\end{algorithm}

To leverage the sequential centralized training for the decentralized execution paradigm, we apply the $o$SARSA algorithm in $M'$ corresponding to $\mathtt{M}$. Hence, Algorithm \ref{algo:oSARSA} proceeds by generating trajectories of SOCs instead of OCs, one trajectory per episode, guided by sequential $\epsilon$-greedy exploration steps. The sequential greedy selection rule requires only a polynomial time complexity, \cf Theorem \ref{thm:lower:bound}. We use a portfolio of heuristic policies to guide the selection of the next sequential action to enhance exploration. This portfolio includes the random policy, the underlying MDP policy, and the blind policy \citep{Hauskrecht00}. We introduce an acceptance rule based on Simulated Annealing (SA) \citep{rutenbar1989simulated} to avoid the algorithm being stuck in a local optimum. Therein, a modification of the current sequential policy is kept not only if the performance improves but also in the opposite case, with a probability depending on the loss and on a temperature coefficient decreasing with time. While the point-based greedy selection rules in the sequential centralized training for decentralized execution paradigm are faster than those in its simultaneous counterpart, the length of trajectories increases by order of $n$, \ie the number of agents. In most domains, however, the complexity of a single point-based greedy selection affects $o$SARSA far more strongly than the length of trajectories. The complexity of point-based greedy selection is a far better predictor of the ability to solve a simultaneous-move Dec-POMDP using $o$SARSA than the length of trajectories.

\section{Experiments}
\label{sec:oSarsa_results}

This section presents the results of our experiments, which were carried out to juxtapose the sequential planning approach with its simultaneous counterpart employed in many leading-edge global multi-agent planning and reinforcement learning algorithms, encompassing the utilization of $o$SARSA, \cf \Cref{algo:oSARSA}, as a standard algorithmic scheme. Our empirical analysis involves two variants of the $o$SARSA algorithm, namely $o$SARSA$^\mathtt{sim}$ and $o$SARSA$^\mathtt{seq}$, each employing a distinct reformulation of the original simultaneous-move Dec-POMDP $\mathtt{M}$ and point-based backup method. $o$SARSA$^\mathtt{sim}$ relies on $o$MDP $\mathtt{M}'$ \wrt $\mathtt{M}$ and utilizes mixed-integer linear programs (MILPs) for implicit enumeration of joint decision rules. We used ILOG CPLEX Optimization Studio to solve the MILPs. $o$SARSA$^\mathtt{seq}$, instead, relies on $so$MDP $M'$ \wrt $\mathtt{M}$ and utilizes point-based sequential backup operator introduced in Theorem \ref{thm:lower:bound}. We used the same heuristics (random, blind, and MDP policies) for $o$SARSA$^\mathtt{sim}$ and $o$SARSA$^\mathtt{seq}$. For reference, we also reported, when available, the performance of state-of-the-art $\epsilon$-optimal solver for two-agent simultaneous-move Dec-POMDP $\mathtt{M}$, \ie FB-HSVI. Unfortunately, most global methods are not geared to scale up with the number of agents. To present a comprehensive view, we have also compared our results against local policy- and value-based reinforcement learning methods, \ie advantage actor-critic (A2C) \citep{Konda:NIPS00} and independent $Q$-learning (IQL) \citep{Tan:1997:MRL:284860.284934}.  All experiments were run on an Ubuntu machine with 16 GB of available RAM and an 1.8 GHz processor, using only one core and a time limit of $1$ hour. The source code is available at \url{https://git.lwp.rug.nl/ml-rug/osarsa-aaai-25}.

We have comprehensively assessed various algorithms using several two-agent benchmarks from academic literature, available at \url{masplan.org}. These benchmarks include mabc, recycling, grid3x3, boxpushing, mars, and tiger. To enable the comparison of multiple agents, we have also used the multi-agent variants of these benchmarks, \cf \citep{peralez2024solving}.
Please refer to the appendix for a detailed description of these benchmarks. 
Our empirical study aimed to assess the superiority of the sequential planning approach through the drop in the complexity of the point-based backup operators. Each experience in reinforcement learning algorithms, \ie A2C, IQL and $o$SARSA, was repeated with three different seeds. The values reported in \Cref{table:results_2players,table:results_multi}, are the best solution obtained among the three trials for two- and many-agent domains, respectively.


\begin{table}[ht!]
\scriptsize

\begin{tabular}{c c c c c c}

\toprule%
 \centering%
 \bfseries $\ell$
 & \bfseries $o$SARSA$^\mathtt{seq}$
 & \bfseries $o$SARSA$^\mathtt{sim}$
 & \bfseries A2C
 & \bfseries IQL
 & \bfseries HSVI
 \\

\midrule
\multicolumn{6}{c}{tiger} \\
\midrule
$10$ & $\highest{15.18}$ & $\highest{15.18}$ &$-0.78$ &$-1.30$ &$\highest{15.18}$\\ 
\myrowcolour
$20$ &$\highest{30.37}$ &$\highest{30.37}$ &$-25.08$ &$-14.61$ &$28.75$\\ 
$40$ &$\highest{67.09}$ &$\highest{67.09}$ &$-62.23$ &$-50.72$ &$\highest{67.09}$\\ 
\myrowcolour
$100$ &$\highest{170.91}$ &$169.30$ &$-184.71$ &$-165.42$ &$\highest{170.90}$\\
\midrule
\multicolumn{6}{c}{recycling} \\
\midrule
$10$ &$\highest{31.86}$ &$\highest{31.86}$ &$31.48$ &$31.48$ &$\highest{31.86}$\\ 
\myrowcolour
$20$ &$\highest{62.63}$ &$\highest{62.63}$ &$62.25$ &$\highest{62.63}$ &n.a.\\ 
$40$ &$\highest{124.17}$ &$\highest{124.17}$ &$123.79$ &$123.79$ &n.a.\\ 
\myrowcolour
$100$ &$\highest{308.79}$ &$\highest{308.79}$ &$308.40$ &$308.72$ &$\highest{308.78}$\\ 
\midrule
\multicolumn{6}{c}{gridsmall} \\
\midrule
$10$ &$\highest{6.03}$ &$\highest{6.03}$ &$4.99$ &$5.31$ &$\highest{6.03}$\\ 
\myrowcolour
$20$ &$\highest{13.96}$ &$13.90$ &$11.16$ &$11.49$ &$13.93$\\ 
$40$ &$\highest{30.93}$ &$29.89$ &$22.56$ &$23.54$ &$28.55$\\ 
\myrowcolour
$100$ &$\highest{78.37}$ &$78.31$ &$56.92$ &$47.79$ &$75.92$\\ 
\midrule
\multicolumn{6}{c}{grid3x3} \\
\midrule
$10$ &$\highest{4.68}$ &$\highest{4.68}$ &$\highest{4.68}$ &$\highest{4.68}$ &$\highest{4.68}$\\ 
\myrowcolour
$20$ &$\highest{14.37}$ &$\highest{14.37}$ &$13.37$ &$\highest{14.36}$ &$\highest{14.35}$\\ 
$40$ &$\highest{34.35}$ &$\highest{34.35}$ &$\highest{32.34}$ &$\highest{34.35}$ &$\highest{34.33}$\\ 
\myrowcolour
$100$ &$\highest{94.35}$ & \sc oot & $92.34$ &$94.32$ &$94.24$\\ 
\midrule
\multicolumn{6}{c}{boxpushing} \\
\midrule
$10$ &$\highest{224.26}$ &$219.19$ &$54.69$ &$223.48$ &$223.74$\\ 
\myrowcolour
$20$ &$\highest{470.43}$ &$441.98$ &$123.59$ &$254.41$ &$458.10$\\ 
$40$ &$\highest{941.07}$ &$918.62$ &$236.79$ &$283.79$ &$636.28$\\ 
\myrowcolour
$100$ &$\highest{2366.21}$ &$1895.16$ &$599.97$ &$560.16$ &n.a.\\ 
\midrule
\multicolumn{6}{c}{mars} \\
\midrule
$10$ &$\highest{26.31}$ &$24.47$ &$17.90$ &$17.63$ &$\highest{26.31}$\\ 
\myrowcolour
$20$ &$\highest{52.32}$ &$52.20$ &$34.43$ &$35.27$ &$52.13$\\ 
$40$ &$\highest{104.07}$ &$103.25$ &$67.74$ &$65.94$ &$103.52$\\ 
\myrowcolour
$100$ &$\highest{255.18}$ & \sc oot & $152.52$ &$124.73$ &$249.92$\\ 
\midrule
\multicolumn{6}{c}{mabc} \\
\midrule
$10$ &$\highest{9.29}$ &$\highest{9.29}$ & $9.20$ & $9.20$ & $\highest{9.29}$\\ 
\myrowcolour
$20$ &$\highest{18.31}$ & $\highest{18.31}$ & $18.10$ & $18.20$ & n.a.\\ 
$40$ &$\highest{36.46}$ & $\highest{36.46}$ & $36.10$ & $36.20$ & n.a.\\ 
\myrowcolour
$100$ &$\highest{90.76}$ & $\highest{90.76}$ & $90.20$ & $90.20$  &$\highest{90.76}$\\ 

\bottomrule
\end{tabular}
\caption{For each two-agent domain and planning horizon $\ell$, we report the best value per algorithm. {\sc oot} means time limit of 1 hour has been exceeded and 'n.a.' is not available.}
\label{table:results_2players}
\end{table}

\Cref{table:results_2players} shows that $o$SARSA$^\mathtt{seq}$ outperforms  $o$SARSA$^\mathtt{sim}$ on all tested two-agent domains, sometimes by a significant margin. On boxpushing at planning horizon $\ell=100$, for instance, $o$SARSA$^\mathtt{seq}$ achieves value $2366.21$ against $1895.16$ for $o$SARSA$^\mathtt{sim}$ . It improves the best known values provided by FB-HVI in all tested domains, except for recycling and mabc, where it achieves equivalent values. Moreover, the margin between $o$SARSA$^\mathtt{seq}$ and FB-HSVI increases as the planning horizon increases. In boxpushing, the margin between $o$SARSA$^\mathtt{seq}$ and FB-HSVI increases from $0.52$ at planning horizon $\ell=10$ to $304.79$ at planning horizon $\ell=40$. It is worth noticing that $o$SARSA$^\mathtt{sim}$ achieves competitive performances against FB-HSVI in most tested two-agent domains, except mars and grid3x3, where it runs out of time. A2C and IQL are competitive on two weakly-coupled domains, \eg recycling and grid3x3, but get stuck in local optima in the other domains. \Cref{table:results_multi}  shows that $o$SARSA$^\mathtt{seq}$ outperforms $o$SARSA$^\mathtt{sim}$, A2C and IQL on all tested many-agent domains over different planning horizons. For medium to large teams and planning horizons,  $o$SARSA$^\mathtt{sim}$ runs out of time. Even though A2C and IQL scale up with larger teams, they achieve poor performances against $o$SARSA$^\mathtt{seq}$.

\begin{table}[t]
\scriptsize

\begin{tabular}{c c c c c c c}
\toprule%
 \centering%
 \bfseries $n$
 & \bfseries $\ell$
 & \bfseries $o$SARSA$^\mathtt{seq}$
 & \bfseries $o$SARSA$^\mathtt{sim}$
 & \bfseries A2C
 & \bfseries IQL
 \\


\midrule
\multicolumn{6}{c}{tiger} \\
\midrule
$3$ & $10$ &$\highest{11.29}$ & \sc oot & $-19.26$ &$-9.82$\\ 
\myrowcolour
$4$ & $10$ &$\highest{6.80}$ & \sc oot & $-110.08$ &$-18.48$ \\ 
$5$ & $2$ &$\highest{-4.00}$ &$\highest{-4.00}$ &$-30.00$ &$\highest{-4.00}$\\ 
\myrowcolour
$5$ & $4$ &$\highest{3.84}$ & \sc oot & $-50.00$ &$-7.01$ \\ 
$5$ & $6$ &$\highest{-0.16}$ & \sc oot & $-93.83$ &$-11.56$ \\ 
\myrowcolour
$5$ & $8$ &$\highest{-5.43}$ & \sc oot & $-128.98$ &$-14.97$ \\ 
$5$ & $10$ &$\highest{2.41}$ & \sc oot & $-126.62$ &$-131.91$ \\ 
\midrule
\multicolumn{6}{c}{recycling} \\
\midrule
$3$ & $10$ &$\highest{85.23}$ &$\highest{85.23}$ &$45.83$ &$47.51$\\ 
\myrowcolour
$4$ & $10$ &$\highest{108.92}$ &$105.96$ &$57.70$ &$55.49$ \\ 
$5$ & $10$ &$\highest{133.84}$ &$\highest{133.84}$ &$74.50$ &$72.88$ \\ 
\myrowcolour
$6$ & $10$ &$\highest{159.00}$ & \sc oot & $86.37$ &$79.83$ \\ 
$7$ & $2$ &$\highest{45.50}$ &\sc oot & $18.20$ &$20.80$\\ 
\myrowcolour
$7$ & $4$ &$\highest{80.50}$ & \sc oot & $57.11$ &$40.84$\\ 
$7$ & $6$ &$\highest{115.50}$ & \sc oot & $64.49$ &$59.06$\\ 
\myrowcolour
$7$ & $8$ &$\highest{150.50}$ & \sc oot & $72.97$ &$71.84$\\ 
$7$ & $10$ &$\highest{185.50}$ & \sc oot & $85.07$ &$91.07$\\ 
\midrule
\multicolumn{6}{c}{gridsmall} \\
\midrule
$3$ & $10$ &$\highest{5.62}$ & \sc oot & $0.34$ &$0.57$ \\ 
\myrowcolour
$4$ & $2$ &$\highest{0.13}$ &$\highest{0.13}$ &$0.02$ &$\highest{0.13}$\\ 
$4$ & $4$ &$\highest{0.78}$ & \sc oot & $0.22$ &$0.55$ \\ 
\myrowcolour
$4$ & $6$ &$\highest{1.75}$ & \sc oot & $0.39$ &$0.77$ \\ 
$4$ & $8$ &$\highest{2.85}$ & \sc oot & $0.65$ &$1.11$ \\ 
\myrowcolour
$4$ & $10$ &$\highest{4.09}$ & \sc oot & $1.50$ &$1.84$ \\ 

\bottomrule
\end{tabular}
\caption{For each many-agent domain and planning horizon $\ell$, we report the best value per algorithm. {\sc oot} means time limit of 1 hour has been exceeded.}
\label{table:results_multi}
\end{table}

The empirical results of $o$SARSA$^\mathtt{seq}$ are due to two interdependent reasons: (1) the piecewise linearity and convexity property of the value function over (sequential occupancy-) states and (2) the polynomial-time update rule.
The uniform continuity property (1) generalizes values from seen states to unseen ones. Doing so guides algorithms toward the most promising parts of the search space while discarding others. Although uniform continuity holds for both $o$SARSA$^\mathtt{sim}$ and $o$SARSA$^\mathtt{seq}$ (albeit in different state spaces), its efficiency differs. In particular, $o$SARSA$^\mathtt{seq}$ can discard parts of the search space at each decision step, but $o$SARSA$^\mathtt{sim}$ can only do it after $n$ decision steps. In other words, even if the search space is the same, $o$SARSA$^\mathtt{seq}$ will often explore fewer states than $o$SARSA$^\mathtt{sim}$.
The exponential complexity drop enables $o$SARSA$^\mathtt{seq}$ to perform exponentially faster episodes than $o$SARSA$^\mathtt{sim}$, thus visiting many states that will, in return, help discard larger parts of the search space. However, each update in a single state in $o$SARSA$^\mathtt{sim}$ is double exponential in the worst case. So, as the number of agents and actions per agent increases, performing even a single update (not to mention an episode) in $o$SARSA$^\mathtt{sim}$ is prohibitive. This insight hinders its ability to exploit the uniform continuity property in larger instances.

\section{Discussion}
\label{sec:conclusion}

This paper highlights the need for a paradigm shift in $\epsilon$-optimally solving Dec-POMDPs, transitioning from simultaneous- to sequential-move centralized training for decentralized execution. This shift addresses the silent coordination dilemma and the decision entanglement problems, which have hindered the scalability of $\epsilon$-optimal methods. Specifically, it leads to an exponential drop in the complexity of backups, although it comes at the cost of longer planning horizons. It facilitates the application of efficient single-agent methods while preserving their theoretical guarantees, as evidenced by the superior performance of the $o$SARSA$^\mathtt{seq}$ algorithm compared to all other tested baselines across domains involving two or more agents. Interestingly, this novel paradigm also addresses the multi-agent credit assignment problem, breaking down the optimal value functions among agents. This insight is important because this problem has recently attracted much attention in the multi-agent reinforcement learning community \cite{FoersterFANW18,RashidSWFFW18,mac2019macca,NEURIPS2019_f816dc0a,qtran2019,qplex2020,denton2020shapley,zhang2023stas}. Yet, with the exception of \citet{kuba2022trust}, all contributions suggest approximate solutions. \citet{kuba2022trust} introduces a clever way of factorizing the joint value function leveraging advantage functions but limits its applicability to trust-region learning algorithms. The sequential-move centralized for decentralized control paradigm applies to all value- and policy-based algorithms. 

Studies conducted by \citet{Cazenave10} and subsequent research by \citet{f63dc22521f74ed399c1f5a52d864d82} demonstrated that one can effectively solve both multi-agent path planning and multi-agent cooperative reinforcement learning problems, one agent at a time. However, \citet{f63dc22521f74ed399c1f5a52d864d82} limited their approach to sequential policies, thereby hindering the ability to derive optimal decentralized policies in general cases. Our research advances this field by applying a broader framework of Dec-POMDP while offering an optimal dynamic programming theory. This paradigm was applied successfully in subclasses of simultaneous-move Dec-POMDPs, namely two and many-player hierarchical information-sharing Dec-POMDPs \cite{xie2020optimally,peralez2024solving}. Similarly, \citet{koops2023recursive} employs a sequential-move centralized training paradigm for decentralized control but confines the analysis to a tree-search algorithm, specifically the multi-agent A* (MAA*) \cite{SzerC06,OliehoekSV08,OliehoekSAW13}. Additionally, \citet{KOVARIK2022103645} demonstrated that simultaneous-move games could be treated as sequential-move problems through extensive-form games, although they did not provide a practical computational theory or algorithms. We posit that transitioning from simultaneous- to sequential-move centralized training for decentralized execution establishes a foundation for enhancing exact and approximate planning and reinforcement learning algorithms applicable to cooperative, competitive, and ultimately mixed-motive partially observable stochastic games. 


\section*{Acknowledgements}
This work was supported by ANR project Planning and Learning to Act in Systems of Multiple Agents under Grant ANR-19-CE23-0018, and ANR project Data and Prior, Machine Learning and Control under Grant ANR-19-CE23-0006, and ANR project Multi-Agent Trust Decision Process for the Internet of Things under Grant ANR-21-CE23-0016, all funded by French Agency ANR. J.C. was supported by SEFRI in the context of the HORIZON Europe Hyper-AI project (grant agreement No. 101135982).


\medskip

{
\bibliography{sample}
}

\newpage
\onecolumn
\appendix

\part*{Supplementary Material}



\section{Preliminaries}

\paragraph{Notation.} 
We denote random variables and their realizations by lowercase letters $y$; their domains are the corresponding uppercase letters $Y$. 
For integers $t_0\leq t_1$ and $i_0\leq i_1$, $y_{t_0:t_1}$ is a short-hand notation for the vector $(y_{t_0}, y_{t_0+1},\ldots,y_{t_1})$ while $y^{i_0:i_1}$ is a short-hand notation for $(y^{i_0}, y^{i_0+1},\ldots,y^{i_1})$, also $y^{i_1:i_0} \doteq \emptyset$. 
Generally, the subscripts are used as control interval indexes, while we use superscripts to index agents. $\Pr\{\cdot\}$ is the probability of an event, and $\mathbb{E}\{\cdot\}$ is the expectation of a random variable. 
For any finite set $\mathcal{Y}$, $|\mathcal{Y}|$ denotes the cardinality of $\mathcal{Y}$, $\mathbb{N}_{\leq |\mathcal{Y}|} \doteq \{0, 1,\ldots, |\mathcal{Y}|\}$,  $\Pr(\mathcal{Y})$ is the $(|\mathcal{Y}|-1)$-dimensional real simplex.
For any function $f\colon \mathcal{Y} \mapsto \mathbb{R}$, $\mathtt{supp}(f)$ denotes the set-theoretic support of $f$, \ie the set of points in $\mathcal{Y}$ where $f$ is non-zero. 
If we let $i_0,i_1 \in \mathbb{N}_{\leq n}$, then we shall use short-hand notations $y^{i_0:}$ for the vector $y^{i_0:n}$ while $y^{:i_1}$ for $y^{0:i_1}$. Finally, we use the Kronecker delta $\delta_{\cdot}(\cdot) \colon \mathcal{Y}_0\times \mathcal{Y}_1 \to \{0,1\}$.

\begin{center}
\begin{table}[!ht]
\caption{Notation for each formulation of Dec-POMDPs used throughout the manuscript.}
\label{table:notation}
\resizebox{.98\textwidth}{!}{
\begin{tabular}{@{}ll r r r r r}

\toprule%
 \centering%
 \bfseries Model &  
 & \bfseries $\mathtt{M}$
 & \bfseries $\mathtt{M}'$
 & \bfseries $M$
 & \bfseries $\pmb{M}$
 & \bfseries $M'$
 \\

\cmidrule[0.4pt](r{0.125em}){1-2}%
\cmidrule[0.4pt](lr{0.125em}){3-3}%
\cmidrule[0.4pt](lr{0.125em}){4-4}%
\cmidrule[0.4pt](l{0.125em}){5-5}%
\cmidrule[0.4pt](l{0.125em}){6-6}%
\cmidrule[0.4pt](l{0.125em}){7-7}%

team size 			&  & 		$n$ 			& 		$n$ 			& 		$n$ 		& 		$n$ 			&		 $n$\\ 
\myrowcolour
state	space		&  & 		$\mathtt{X}$	& 		$\mathtt{S}$	& 		$X$		& 		$\pmb{S}$		&		 $S$\\	
private action space		&  & 		$\mathtt{U}^i$	& 		$\mathtt{A}^i$	& 		$U^i$		& 		$\pmb{A}^i$		&		 $A^i$\\	
\myrowcolour
joint action space		&  & 		$\mathtt{U}$	& 		$\mathtt{A}$	& 		$U$		& 		$\pmb{A}$		&		 $A$\\	
private observation space	&  & 		$\mathtt{Z}^i$	& 		$\emptyset$	& 		$Z^i$		& 		$\emptyset$	&		 $\emptyset$\\	
\myrowcolour
reward model	&  & 		$\mathtt{r}\colon \mathtt{X}\times\mathtt{U}\to \mathbb{R} $	& 		$\mathtt{R}\colon \mathtt{S}\times \mathtt{A}\to \mathbb{R}$	& 		$r\colon X\times U\to \mathbb{R}$		& 		$\pmb{r}\colon \pmb{S}\times \pmb{A}\to \mathbb{R}$	&		 $R\colon S\times A\to \mathbb{R}$\\	
dynamics model	&  & 		$\mathtt{p} = \{\mathtt{p}_{\mathtt{x},\mathtt{y}}^{\mathtt{u},\mathtt{z}}\}$	& 		$\mathtt{T}\colon \mathtt{S}\times \mathtt{A}\to \mathtt{S}$	& 		$p= \{{p}_{{x},{y}}^{{u},{z}}(\tau)\}$		& 		$\pmb{p}\colon \pmb{S}\times \pmb{A}\to \pmb{S}$	&		 $T\colon S\times A\to S$\\	
\myrowcolour
joint observation space	&  & 		$\mathtt{Z}$	& 		$\emptyset$	& 		$Z$		& 		$\emptyset$	&		 $\emptyset$\\	
private history space		&  & 		$\mathtt{O}^i$	& 		$\emptyset$	& 		$O^i$		& 		$\emptyset$	&		 $\emptyset$\\	
\myrowcolour
joint history space		&  & 		$\mathtt{O}$	& 		$\emptyset$	& 		$O$		& 		$\emptyset$	&		 $\emptyset$\\	
planning horizon		&  & 		$\ell$	& 		$\ell$	& 		$\ell'$		& 		$\ell'$	&		 $\ell'$\\	
\myrowcolour
private policy 		&  & 		$\mathtt{a}^i_{0:\ell-1}$	& 		$\mathtt{a}^i_{0:\ell-1}$	& 		$a^i_{0:\ell'-1}$		& 		$a^i_{0:\ell'-1}$	&		 $a^i_{0:\ell'-1}$\\	
joint policy 		&  & 		$\mathtt{a}_{0:\ell-1}$	& 		$\mathtt{a}_{0:\ell-1}$	& 		$a_{0:\ell'-1}$		& 		$ a_{0:\ell'-1}$	&		 $a_{0:\ell'-1}$\\	
\myrowcolour
private decision rule 		&  & 		$\mathtt{a}^i_t\colon \mathtt{O}^i \to \mathtt{U}^i$	& 		$\mathtt{a}^i_t\colon \mathtt{O}^i \to \mathtt{U}^i$	& 		$a_\tau\colon {O}^{\rho(\tau)} \to {U}(\tau)$		& 		$a_\tau\colon {O}^{\rho(\tau)} \to {U}(\tau)$	&		 $a_\tau\colon {O}^{\rho(\tau)} \to {U}(\tau)$\\	
joint decision rule 		&  & 		$\mathtt{a}_t\colon \mathtt{O} \to \mathtt{U}$	& 		$\mathtt{a}_t\colon \mathtt{O} \to \mathtt{U}$	& 		n.a.		& 		n.a.	&		 n.a.\\	
\myrowcolour
state-value function 		&  & 		$\upsilon_{\mathtt{M},\gamma,0}^{\mathtt{a}_{0:\ell-1}}(b_0)$	& 		$\upsilon_{\mathtt{M}',\gamma,0}^{\mathtt{a}_{0:\ell-1}}(\mathtt{s}_0)$	& 		$\upsilon_{M,\lambda,0}^{a_{0:\ell'-1}}(b_0)$		& 		$\upsilon_{\pmb{M},\lambda,0}^{a_{0:\ell'-1}}(\iota_0)$	&		 $\upsilon_{M',\lambda,0}^{a_{0:\ell'-1}}(s_0)$\\	
action-value function 		&  & 		$q_{\mathtt{M},\gamma,0}^{\mathtt{a}_{1:\ell-1}}(b_0,\mathtt{a}_0)$	& 		$q_{\mathtt{M}',\gamma,0}^{\mathtt{a}_{1:\ell-1}}(\mathtt{s}_0,\mathtt{a}_0)$	& 		$q_{M,\lambda,0}^{a_{1:\ell'-1}}(b_0,a_0)$		& 		$q_{\pmb{M},\lambda,0}^{a_{1:\ell'-1}}(\iota_0,a_0)$	&		 $q_{M',\lambda,0}^{a_{1:\ell'-1}}(s_0,a_0)$\\	
\myrowcolour
state-value linear fct. 		&  & 		$\alpha_t\colon \mathtt{X}\times \mathtt{O}\to \mathbb{R}$	& 		$\alpha_t\colon \mathtt{X}\times \mathtt{O}\to \mathbb{R}$	& 		$\alpha_\tau\colon X_\tau\times O\to \mathbb{R}$		& 		$\alpha_\tau\colon X_\tau\times O\to \mathbb{R}$	&		 $\alpha_\tau\colon X_\tau \times O\to \mathbb{R}$\\	
action-value linear fct. 		&  & 		$\beta_t\colon \mathtt{X}\times \mathtt{O}\times \mathtt{U}\to \mathbb{R}$	& 		$\beta_t\colon \mathtt{X}\times \mathtt{O}\times \mathtt{U}\to \mathbb{R}$	& 		$\beta_\tau\colon X_\tau \times O\times U(\tau)\to \mathbb{R}$		& 		$\beta_\tau\colon X_\tau \times O\times U(\tau)\to \mathbb{R}$	&		 $\beta_\tau\colon X_\tau \times O\times U(\tau)\to \mathbb{R}$\\	

\bottomrule
\end{tabular}
}
\end{table}
\end{center}

\section{Sequential-Move Single-Agent Reformulation}

\begin{figure}[!ht]
 \centering
 \begin{tikzpicture}[->,-latex,auto,node distance=3.75cm,semithick, square/.style={regular polygon,regular polygon sides=4}] %
  \tikzstyle{every state}=[draw=black,text=black,inner color= white,outer color= white,draw= black,text=black, drop shadow]
  \tikzstyle{place}=[thick,draw=sthlmBlue,fill=blue!20,minimum size=12mm, opacity=.5]
  \tikzstyle{red place}=[square,place,draw=sthlmRed,fill=sthlmLightRed,minimum size=18mm]
  \tikzstyle{green place}=[diamond,place,draw=sthlmGreen,fill=sthlmLightGreen,minimum size=15mm]


  \node[fill=white, scale=.75] (T) at (-3.5,-3.75) {};
  \node[fill=white, scale=.75] (T0) at (0,-3.75) {$0$};
  \node[fill=white, scale=.75]     (T1) [right of=T0,node distance=3.75cm, fill=white, scale=.75] {$\tau$};
  \node[fill=white, scale=.75]     (T2) [right of=T1,node distance=3.75cm] {$\tau+1$};
  \node[fill=white, scale=.75]     (T3) [right of=T2,fill=white, node distance=3.75cm] {$\tau+2$};
  \node[fill=white, scale=.75]     (T4) [right of=T3,fill=white, node distance=1.75cm] {$\ldots$};
  \draw[->,-latex,dashed,very thin, color=black,anchor=mid] (T3) -- (T4); 
  \draw[->,-latex, very thin, color=black, anchor=mid] (T2) -- (T3); 
  \draw[->,-latex, very thin, color=black, anchor=mid] (T1) -- (T2); 
  \draw[->,-latex,dashed,very thin, color=black,anchor=mid] (T0) -- (T1); 
  \draw[->,-latex,rounded corners,very thin, color=black,anchor=mid] (T) node[fill=white, scale=.75]{Time} -- (T0); 
 %

 \node[state,place, scale=.75] (S0)            {$\iota_0$};
  \node[state,place, scale=.75]     (S1) [right of=S0] {$\iota_\tau$};
  \node[state,place, scale=.75]     (S2) [ right of=S1] {$\iota_{\tau+1}$};
  \node[state,place, scale=.75]     (S3) [ right of=S2] {$\iota_{\tau+2}$};
  \node[, scale=.75]     (S4) [ right of=S3,node distance=1.75cm] 		 {$\cdots$};

 \path[very thin] (S0) edge[dashed] node[midway,text=black,draw=none,scale=.7, above=-5pt] 		{} (S1) 
      (S1) edge node[midway,text=black,draw=none,scale=.7, above=-5pt] 		{} (S2) 
      (S2) edge node[midway,text=black,draw=none,scale=.7, above=-5pt] 		{} (S3) 
      (S3) edge[dashed] (S4);

 \node[, scale=.75]     (A_SLACK) [below of=S0,node distance=2.6cm] {};

 \node[state,red place, scale=.65]     (A0) [below right of=A_SLACK,node distance=2cm] {$a_{0}$};
 \node[state,red place, scale=.65]     (A1) [right of=A0,node distance=4.25cm]    {$a_\tau$};
 \node[state,red place, scale=.65]     (A2) [right of=A1,node distance=4.25cm]    {$a_{\tau+1}$};
 \node[, scale=.75]     (A3) [right of=A2,node distance=5.5cm]    {};

 \path[very thin] (A0) edge [out=90, in=-155, dashed] node {} (S1)
     (A1) edge [out=90, in=-155] node {} (S2)
     (A2) edge [out=90, in=-155] node {} (S3);   


 \node[state,green place, scale=.75]     (O0) [below of=S0,node distance=2cm] {$r_0$};
 \node[state,green place, scale=.75]     (O1) [below of=S1,node distance=2cm] {$r_\tau$};
 \node[state,green place, scale=.75]     (O2) [below of=S2,node distance= 2cm] {$r_{\tau+1}$};
 \node[state,green place, scale=.75]     (O3) [below of=S3,node distance= 2cm] {$r_{\tau+2}$};
  \node[, scale=.75]     (O4) [ right of=O3,node distance=1.75cm] 		 {$\cdots$};

  \node[inner sep=0pt] (prof) at (-3.35,-1.5) {\includegraphics[width=.1\textwidth]{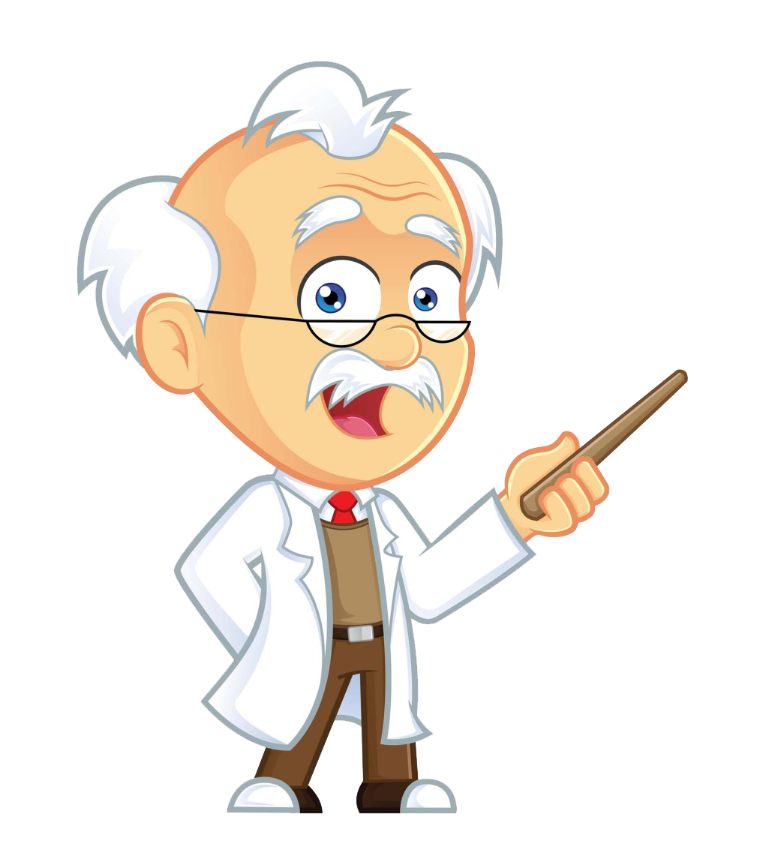}};
  \draw[->,-latex,rounded corners,very thin, color=sthlmBlue,anchor=mid] (prof) -- (-3.35,0) --node[draw=none, fill=white, scale=.75] {knows} (S0);	
  \draw[->,-latex,rounded corners,very thin, color=sthlmGreen,anchor=mid] (prof) --node[draw=none, fill=white, scale=.75] {infers} (O0); 	
  \draw[->,-latex,rounded corners,very thin, color=sthlmRed,anchor=mid] (prof) -- (-3.35,-2.85) --node[draw=none, fill=white, scale=.75] {selects} (A0);	

 \path[very thin] (S0) edge  node {} (O0)
 	 (S1) edge  node {} (O1)
      (A0) edge[out=90, in=-180, dashed] node {} (O1)
  	(S2) edge  node {} (O2)
     (A1) edge[out=90, in=-180] node {} (O2)
  	(S3) edge  node {} (O3)
     (A2) edge[out=90, in=-180] node {} (O3);
 \end{tikzpicture}
 \caption{Influence diagram for a $2$-agent sequential-move Markov decision process.}
 \label{fig:emdp}
 \end{figure}

\section{On Sufficiency of Sequential-Move Occupancy States}

\subsection{Proof of Lemma \ref{lem:sequence:approach}}
\label{prooflemsequenceapproach}

\lemsequenceapproach*
\begin{proof}
For every $t\in  \mathbb{N}_{\leq \ell-1}$ and any arbitrary $t$-th (simultaneous) occupancy state 
$\mathtt{s}_{t}$
\begin{align*}
\upsilon_{\mathtt{M'},\gamma,t}^*(\mathtt{s}_{t}) & \overset{(\ref{eq:bellman})}{=} {\textstyle\max_{\mathtt{a}_{t}\in \mathtt{{A}}_t} ~\left[\mathtt{r}(\mathtt{s}_{t},\mathtt{a}_{t}) + \gamma \upsilon_{\mathtt{M'},\gamma,t+1}^*(\mathtt{p}(\mathtt{s}_{t},\mathtt{a}_{t}))\right]}\\
&= {\textstyle\max_{\mathtt{a}_{t}^1\in \mathtt{A}_{t}^1,\cdots,\mathtt{a}_{t}^{n}\in \mathtt{A}_{t}^{n}}~\left[\mathtt{r}(\mathtt{s}_{t},\mathtt{a}_{t}) + \gamma \upsilon_{\mathtt{M'},\gamma,t+1}^*(\mathtt{p}(\mathtt{s}_{t},\mathtt{a}_{t}))\right]}\\
&= {\textstyle \max_{\mathtt{a}_{t}^1\in \mathtt{A}_t^1} \cdots \max_{ \mathtt{a}_{t}^{n} \in \mathtt{A}_{t}^{n}}~ \left[ \mathtt{r}(\mathtt{s}_{t},\mathtt{a}_{t})+ \gamma \upsilon_{\mathtt{M'} ,\lambda,t+1}^*(\mathtt{p}(\mathtt{s}_{t},\mathtt{a}_{t})) \right]} \\
&= {\textstyle \max_{ \mathtt{a}_{t}^{\sigma(1)} \in \mathtt{A}_{t}^{\sigma(1)} } \cdots \max_{ \mathtt{a}_{t}^{\sigma(n)}\in \mathtt{A}_{t}^{\sigma(n)} }~ \left[ \mathtt{r}(\mathtt{s}_{t},\mathtt{a}_{t}) + \gamma \upsilon_{\mathtt{M'},\gamma, t+1}^*(\mathtt{p}(\mathtt{s}_{t},\mathtt{a}_{t})) \right]}
\end{align*}
where $\sigma\colon \mathbb{N}^*_{\leq n} \to \mathbb{N}^*_{\leq n}$ is any permutation over agents. 
The last equality holds since switching whatsoever the positions of the $\max$ operators does not affect the outcome.
\end{proof}

\subsection{Proof of Theorem \ref{thm:simultaneous:sequential}}
\label{proofthmsimultaneoussequential}

\thmsimultaneoussequential*  
\begin{proof}
The proof of this statement is threefold. 
We first prove any simultaneous decentralized partially observable Markov decision process $\mathtt{M}$ exhibits a corresponding sequential decentralized partially observable Markov decision process $M$.
Next, we demonstrate one can always recast every sequential joint policy $\pi$ into a simultaneous one $\xi$.
Finally, we show optimal simultaneous and sequential joint policies $\xi^*$ and $\pi^*$ for $\mathtt{M}$ and $M$ yield the same value.

\paragraph{Building $M$ based upon $\mathtt{M}$.} 
The process $\mathtt{M}$ describes evolves over $\ell$ decision epochs $t\in \mathbb{N}_{\leq \ell-1}$, each of which allow all $n$ agents to act simultaneously. 
$\mathtt{M}$ can alternative be viewed as to process $M$ that evolves over $n \ell$ points in time, each of which allow only a single agent to act. 
More specifically, every decision epoch $t$ in $\mathtt{M}$ corresponds to $n$ points in time $\tau \in \{n t, n t+1, \ldots, n t + n -1 \}$, where agent $i$ acts at point in time $\tau = n t + i - 1$.
Hence, for every agent $i\in \mathbb{N}^*_{\leq n}$, the set of decision points is given by  $H^i \doteq \{n t + i -1 \mid t\in \mathbb{N}_{\leq \ell-1} \}$, thus $\ell' \doteq n\ell$. 
Every point in time $\tau = nt + i - 1$, the sequential process $M$ moves from one sequential state $x_{\tau}$ to another one $x_{\tau+1}$ at the subsequent point in time; 
yet, the simultaneous state $\mathtt{x}_t$ may be fixed depending on $i$.
If agent $i$ is the last to act in decision epoch $t$, \ie $i = n$, then sequential and simultaneous states are identical $x_{\tau+1} \doteq \mathtt{x}_{t+1}$.
Otherwise, sequential state $x_{\tau+1}$ at the subsequent point in time is the previous state $x_\tau$ along with the action agent $i$ took $u_\tau$ available in finite action set $U(\tau)$---\ie $x_{\tau+1} \doteq (x_\tau, u_\tau)$.  
Overall, sequential state and action spaces are given by $X \doteq \mathtt{X} \cup_{i\in \mathbb{N}^*_{\leq n}}~(\mathtt{X}\times \mathtt{U}^{1:i})$ and $U(\tau) \doteq \mathtt{U}^{\rho(\tau)}$, respectively.
Upon the execution of action $u_\tau$ at point in time $\tau$, process $\mathtt{M}$ responds differently according to agent $i$.
If $i=n$ then $x_\tau\doteq(\mathtt{x}_t,\mathtt{u}^{1:n-1}_t)$, $u_\tau\doteq\mathtt{u}^n_t$.
$\mathtt{M}$ immediately responds with common reward $r_{x_\tau,u_\tau}(\tau) \doteq \mathtt{r}_{\mathtt{x}_t,\mathtt{u}_t}$, and responds in the subsequent point in time by  moving into a new simultaneous state $x_{\tau+1} \doteq \mathtt{x}_{\tau+1}$ and gives sequential observation $z_{\tau+1}^j \doteq \mathtt{z}^j_{t+1}$ for each agent $j\in \mathbb{N}^*_{\leq n}$ according to probability 
\begin{align*}
    p_{x_\tau, x_{\tau+1}}^{u_\tau,z_{\tau+1}}(\tau) &\doteq \mathtt{p}_{\mathtt{x}_t,\mathtt{x}_{t+1}}^{\mathtt{u}_t,\mathtt{z}_{t+1}}.
\end{align*} 
Otherwise, \ie if $i\neq n$, $\mathtt{M}$ does not provides any reward, \ie $r_{x_\tau,u_\tau}(\tau) \doteq 0$ and it does not change the current 
simultaneous state, \ie $x_{\tau+1} \doteq (x_\tau, u_\tau)$, nor it gives observations to agents, \ie $z_{\tau+1}^j \doteq \emptyset$ for all agents $j\in \mathbb{N}^*_{\leq n}$, with probability
\begin{align*}
p^{u_\tau,z_{\tau+1}}_{x_\tau,x_{\tau+1}}(\tau) &\doteq \delta_{\emptyset}(z_{\tau+1}) \cdot \delta_{(x_\tau,u_\tau)}(x_{\tau+1}).
\end{align*}
Optimally solving the resulting $M \doteq \langle \ell',X,U,Z,\rho,p,r \rangle$ aims at finding a sequential joint policy $\pi$, \ie a sequence of private decision rules, 
$\pi \doteq (a_\tau)_{\tau\in \mathbb{N}_{\leq \ell'-1}}$,
which maximizes the expected $\lambda$-discounted cumulative reward starting at initial state distribution $b_0$ onward, 
and given by 
$
  \upsilon^\pi_{M,\lambda,0}(b_0) \doteq \mathbb{E}\{ \textstyle{\sum_{\tau=0}^{\ell'-1}}~\lambda(\tau)\cdot r_{x_{\tau},u_{\tau}}(\tau) ~|~b_0, \pi\}$,
where $\lambda(\tau) \doteq \gamma^t$.

\paragraph{Policy class equivalence.} Here, we show that by solving $M$ we do not focus onto a restricted class of policies. 
Before proceeding any further, recall that $U(\tau) \doteq \mathtt{U}^{\rho(\tau)}$, which 
means private actions of agent $i=\rho(\tau)$ lives in the same finite set.
Moreover, sequential private observations of agent $i$ live in finite set $Z^i \doteq \mathtt{Z}^i\cup \{\emptyset\}$.
If we discard empty sequential observations from private sequential observations of agent $i$, then we know private sequential history $o^i_\tau \in  \mathtt{O}_t^i$ since $o^i_\tau$ is consists of $t$ actions from $\mathtt{U}^i$ and observations from $\mathtt{Z}^i$, as does any private simultaneous history $\mathtt{o}_t^i$. 
In other words, one can always go from $\xi$ to $\pi$ and vice versa. 
To better understand this, notice that $\pi^i \doteq (a_\tau)_{\tau\in H^i}$ describes a sequence of private sequential decision rule of agent $i$, one such sequential decision rule $a_\tau$ for each decision epoch $t\in \mathbb{N}_{\leq \ell-1}$. 
Moreover, each sequential decision rule $a_\tau\colon \mathtt{O}_t^i \mapsto \mathtt{U}^i$ maps private sequential (resp. simultaneous) histories to private sequential (resp. simultaneous) actions.  
As a consequence, for every $a_\tau$ (resp. $\pi^*$) there exists an equivalent $\mathtt{a}_t^i\colon \mathtt{O}_t^i \to \mathtt{U}^i$ (resp. $\xi^*$).  

\paragraph{Performance transfer from $M$ to corresponding $\mathtt{M}$.}
We start with the formal definition of optimal sequential joint policy $\pi^*$, 
$\upsilon^{\pi^*}_{M,\lambda,0}(b_0) \doteq \mathbb{E}\{ \textstyle{\sum_{\tau=0}^{\ell'-1}}~\lambda(\tau)\cdot r_{x_{\tau},u_{\tau}}(\tau) ~|~b_0, \pi^*\}$.
Next, we substitute $\lambda(\tau)$ by $\gamma^t$, \ie $\upsilon^{\pi^*}_{M,0}(b_0) = \mathbb{E}\{ \textstyle{\sum_{\tau\in {T}^n}}~\gamma^t\cdot r_{x_{\tau},u_{\tau}}(\tau) ~|~b_0, \pi^*\}$.
We then exploit the fact that immediate reward for $i\neq n$ are zero, \ie non-zero rewards $r_{x_{\tau},u_{\tau}}(\tau) \doteq \mathtt{r}_{\mathtt{x}_t,\mathtt{u}_t}$  occur only in point set ${T}^n$, 
$\upsilon^{\pi^*}_{M,\lambda,0}(b_0) = \mathbb{E}\{ \textstyle{\sum_{\tau\in {T}^n}}~\gamma^t\cdot \mathtt{r}_{\mathtt{x}_t,\mathtt{u}_t} ~|~b_0, \pi^*\}$ where $\tau \doteq nt$.
Finally, exploiting expression $\tau \doteq nt$, we obtain the following relation since the dynamics are identical over $H^n$, \ie
\begin{align*}
\upsilon^{\pi^*}_{M,\lambda,0}(b_0) &=
\mathbb{E}\{ \textstyle{\sum_{t\in \mathbb{N}_{\leq \ell-1}}}~\gamma^t\cdot \mathtt{r}_{\mathtt{x}_t,\mathtt{u}_t} ~|~b_0, \pi^*
\} 
\doteq \upsilon^{\pi^*}_{\mathtt{M},\gamma,0}(b_0).
\end{align*}
Taking altogether the completeness of sequential joint policy space for simultaneous multi-agent problems and equivalence in performance evaluation for both policies leads to the statement.
\end{proof}

\subsection{Proof of Lemma \ref{lem:equivalence:sequential}}
\label{prooflemequivalencesequential}

\lemequivalencesequential*
\begin{proof}
In demonstrating the proof, we must show that optimal sequential joint policies $\pi^*_{\pmb{M}}$ and $\pi^*_M$ achieve the same $\lambda$-discounted cumulative reward.
The proof starts with the definition of an optimal sequential joint policy $\pi^*_{\pmb{M}}$, \ie
\begin{align*}
    \pi^*_{\pmb{M}} &\doteq {\textstyle \argmax_{(a_\tau)_{\tau\in \mathbb{N}_{\leq \ell'-1}}}~\sum_{\tau\in \mathbb{N}_{\leq \ell'-1}}}~\pmb{r}(\iota_\tau, a_\tau). 
\end{align*}
Next, we replace $\pmb{r}(\iota_\tau, a_\tau)$ by the immediate expected rewards received after taking sequential action $a_\tau$ at sequential exhaustive data $\iota_\tau$, \ie
\begin{align*}
    \pi^*_{\pmb{M}} &= {\textstyle \argmax_{(a_\tau)_{\tau\in \mathbb{N}_{\leq \ell'-1}}}~\sum_{\tau\in \mathbb{N}_{\leq \ell'-1}}}~ \mathbb{E}\{ \lambda(\tau)\cdot r_{x_\tau,u_\tau}(\tau) \mid \iota_0, (a_\tau)_{\tau\in \mathbb{N}_{\leq \ell'-1}} \}. 
\end{align*}
The linearity of the expectation enables us to rewrite the previous expression as follows, \ie  
\begin{align*}
    \pi^*_{\pmb{M}} &= {\textstyle \argmax_{(a_\tau)_{\tau\in \mathbb{N}_{\leq \ell'-1}}}~\mathbb{E}\{\sum_{\tau\in \mathbb{N}_{\leq \ell'-1}}~   \lambda(\tau)\cdot r_{x_\tau,u_\tau}(\tau) \mid \iota_0, (a_\tau)_{\tau\in \mathbb{N}_{\leq \ell'-1}} \}} \doteq \pi^*_M. 
\end{align*}
Hence, searching for an optimal sequential joint policy using sequential data-driven Markov decision process $\pmb{M}$ is sufficient to find an optimal sequential joint policy for corresponding sequential decentralized partially observable Markov decision process $M$ (and vice versa).
\end{proof}

\subsection{Proof of Lemma \ref{lem:markov}} 
\label{prooflemmarkov}

\lemmarkov*
\begin{proof}
In demonstrating this statement, we also derive a procedure for updating sequential occupancy states.
Let $\iota_{\tau+1}$ be the sequential exhaustive data available at sequential point in time $\tau+1$, which we rewrite as $\iota_{\tau+1} \doteq (\iota_\tau, a_\tau)$, that is sequential exhaustive data $\iota_\tau$ plus chosen sequential decision rule $a_\tau$ at sequential point in time $\tau$.
By Definition \ref{def:soc}, we relate both sequential occupancy state and exhaustive data as follows:
for every sequential state $x_{\tau+1}$ and joint history $o_{\tau+1}$,
\begin{align}
 s_{\tau+1}(x_{\tau+1}, o_{\tau+1}) %
 &=
 \Pr\{x_{\tau+1}, o_{\tau+1} \mid \iota_{\tau+1}\}.
 \label{eq:sequentialoccupancystate:1}
\end{align}
The substitution of $\iota_{\tau+1}$ by $(\iota_\tau, a_\tau)$ into  (\ref{eq:sequentialoccupancystate:1}) yields
\begin{align}
 &\overset{(\ref{eq:sequentialoccupancystate:1})}{=}
 \Pr\{x_{\tau+1}, o_{\tau+1} \mid \iota_\tau, a_\tau\}.
 \label{eq:sequentialoccupancystate:2}
\end{align}
The expansion of the right-hand side of (\ref{eq:sequentialoccupancystate:2}) over sequential states of $M$ at point in time $\tau$ gives 
\begin{align}
 &\overset{(\ref{eq:sequentialoccupancystate:1})}{=}
 {\textstyle \sum_{x_\tau\in X}}~
 \Pr\{x_\tau, x_{\tau+1}, o_\tau, u_\tau, z_{\tau+1} \mid \iota_\tau, a_\tau\}.
 \label{eq:sequentialoccupancystate:3}
\end{align}
The expansion of the joint probability into the product of conditional probabilities leads to 
\begin{align}
 &\overset{(\ref{eq:sequentialoccupancystate:1})}{=}
 {\textstyle \sum_{x_\tau\in X}}~
 \Pr\{x_{\tau+1},  z_{\tau+1} \mid x_\tau, o_\tau, u_\tau, \iota_\tau, a_\tau\}  
\cdot  \Pr\{u_\tau \mid x_\tau,  o_\tau, \iota_\tau, a_\tau\}
 \cdot \Pr\{x_\tau, o_\tau \mid \iota_\tau, a_\tau\}.
 \label{eq:sequentialoccupancystate:4}
\end{align}
The first factor on the right-hand side of (\ref{eq:sequentialoccupancystate:4}) is the dynamics model in $M$, \ie 
\begin{align}
 &\overset{(\ref{eq:sequentialoccupancystate:1})}{=}
 {\textstyle \sum_{x_\tau\in X}}~
  p^{u_\tau, z_{\tau+1}}_{x_\tau,x_{\tau+1}}(\tau)  
 \cdot  \Pr\{u_\tau \mid x_\tau,  o_\tau, \iota_\tau, a_\tau\}
 \cdot \Pr\{x_\tau, o_\tau \mid \iota_\tau, a_\tau\}.
 \label{eq:sequentialoccupancystate:5}
\end{align}
The second factor in the right-hand side of (\ref{eq:sequentialoccupancystate:5}) denotes the sequential action $u_\tau$ that sequential decision rule $a_\tau$ prescribe to agent $\rho(\tau)$ at sequential private history $o_\tau^{\rho(\tau)}$, \ie 
\begin{align}
 &\overset{(\ref{eq:sequentialoccupancystate:1})}{=}
 {\textstyle \sum_{x_\tau\in X}}~
  p^{u_\tau, z_{\tau+1}}_{x_\tau,x_{\tau+1}}(\tau)  
 \cdot a_\tau( u_\tau \mid o^{\rho(\tau)}_\tau )
 \cdot \Pr\{x_\tau, o_\tau \mid \iota_\tau, a_\tau\}.
 \label{eq:sequentialoccupancystate:6}
\end{align}
The last factor in the right-hand side of (\ref{eq:sequentialoccupancystate:6}) defines the prior sequential occupancy state $s_\tau$, \ie %
\begin{align}
 &\overset{(\ref{eq:sequentialoccupancystate:1})}{=}
 {\textstyle \sum_{x_\tau\in X}}~
 p^{u_\tau, z_{\tau+1}}_{x_\tau,x_{\tau+1}}(\tau) 
 \cdot a_\tau( u_\tau \mid o^{\rho(\tau)}_\tau )
 \cdot s_\tau(x_\tau, o_\tau).
 \label{eq:sequentialoccupancystate:7}
\end{align}
Therefore, the update of the current sequential occupancy state $s_{\tau+1}$ depends solely upon the prior sequential occupancy state $s_\tau$ and the chosen sequential decision rule $a_\tau$.
\end{proof}

\subsection{Proof of Lemma \ref{lem:reward}} 
\label{prooflemreward} 

\lemreward*
\begin{proof}
The proof directly builds on Lemma \ref{lem:markov}. 
In demonstrating the property, we need to show that the probability of the process $M$ being in sequential state $x_\tau$ and taken sequential action being $u_\tau$ , at the end of sequential point in time $t\in \mathbb{N}_{\leq \ell'-1}$, depends on sequential joint history $o_\tau$ only through the corresponding sequential occupancy state $s_\tau \doteq (\Pr\{x_\tau, o_\tau \mid \iota_\tau \})_{x_\tau,o_\tau}$. 
To this end, we start with the definition of expected immediate reward $\pmb{r}(\iota_\tau, a_\tau)$, \ie
\begin{align}
    \pmb{r}(\iota_\tau, a_\tau) &\doteq
    \mathbb{E}\{  \lambda(\tau)\cdot r_{x_\tau,u_\tau}(\tau) \mid \iota_\tau, a_\tau \}.
    \label{eq:lemreward:1}
\end{align}
The expansion of the expectation over sequential actions, states, and joint histories gives
\begin{align}
    \pmb{r}(\iota_\tau, a_\tau) &=
    {\textstyle\sum_{x_\tau,u_\tau, o_\tau}}~
     \lambda(\tau)\cdot r_{x_\tau,u_\tau}(\tau) \cdot
    a_\tau(u_\tau \mid o^{\rho(\tau)}_\tau) 
    \cdot  \Pr\{ x_\tau, o_\tau \mid \iota_\tau\}.
    \label{eq:lemreward:2}
\end{align}
The substitution of $s_\tau$ by $(\Pr\{x_\tau, o_\tau \mid \iota_\tau \})_{x_\tau,o_\tau}$ into the right-hand side of (\ref{eq:lemreward:2}) results in 
\begin{align}
    \pmb{r}(\iota_\tau, a_\tau) &=
    {\textstyle\sum_{x_\tau,u_\tau, o_\tau}}~
     \lambda(\tau)\cdot r_{x_\tau,u_\tau}(\tau) \cdot
    a_\tau(u_\tau \mid o^{\rho(\tau)}_\tau) 
    \cdot  s_\tau( x_\tau, o_\tau ).
    \label{eq:lemreward:3}
\end{align}
By the definition of expected immediate reward $r(s_\tau, a_\tau)$, the right-hand side of (\ref{eq:lemreward:3}) becomes
\begin{align}
    \pmb{r}(\iota_\tau, a_\tau) &=
    r(s_\tau,a_\tau).
    \label{eq:lemreward:4}
\end{align}
Which ends the proof.
\end{proof}

\subsection{Proof of Theorem \ref{thm:sufficiency:condition}}
\label{proofthmsufficiencycondition} 

\thmsufficiencycondition*
\begin{proof}
To demonstrate the statement, we only need to show that by replacing sequential exhaustive data by corresponding sequential occupancy states we achieve the same optimal value functions $(\upsilon^*_{M',\lambda,\tau})_{\tau\in \mathbb{N}_{\leq \ell'-1}}$ and $(\upsilon^*_{\pmb{M},\lambda,\tau})_{\tau\in \mathbb{N}_{\leq \ell'-1}}$  for sequential occupancy Markov decision process $M'$ and original sequential data-driven Markov decision process $\pmb{M}$ (and the corresponding sequential decentralized partially observable Markov decision process $M$), respectively. 
The proof proceeds by induction.
The statement trivially holds at the last point in time $\tau=\ell'$, \ie for any arbitrary sequential exhaustive data and corresponding sequential occupancy state $\iota_{\ell'}$ and $s_{\ell'} \doteq (\Pr\{x_{\ell'},o_{\ell'} \mid \iota_{\ell'}\})_{x_{\ell'},u_{\ell'}}$, respectively, 
\begin{align}
    \upsilon^*_{M',\lambda,\ell'}(s_{\ell'}) &\doteq 0 \doteq \upsilon^*_{\pmb{M},\lambda,\ell'}(\iota_{\ell'})  
    \label{eq:thmsufficiencycondition:1}
\end{align}
Suppose that the statement holds for point in time $\tau+1$, we can now show it also holds for point in time $\tau$.
\citeauthor{bellman}'s optimality criterion prescribes the following system of equations: for every point in time $\tau$ and every sequential exhaustive date $\iota_\tau$, 
\begin{align}
    \upsilon^*_{\pmb{M},\lambda,\tau}(\iota_\tau) &=
    {\textstyle \max_{a_\tau\in A_\tau}}~
    \left[ 
     \pmb{r}(\iota_\tau, a_\tau) + 
    \upsilon^*_{\pmb{M},\lambda,\tau+1}(\pmb{p}(\iota_\tau, a_\tau))
    \right]
    \label{eq:thmsufficiencycondition:2}
\end{align}
By the induction hypothesis, if we let sequential occupancy states  
\begin{align}
s_\tau &\doteq (\Pr\{x_\tau,o_\tau \mid \iota_\tau\})_{x_\tau,o_\tau}
\label{eq:thmsufficiencycondition:3}
\\   
p(s_\tau, a_\tau) &\doteq (\Pr\{x_{\tau+1},o_{\tau+1} \mid \pmb{p}(\iota_\tau, a_\tau) \})_{x_{\tau+1},o_{\tau+1}}   
\nonumber
\end{align} 
then we have that $\upsilon^*_{\pmb{M},\lambda,\tau+1}(\pmb{p}(\iota_\tau, a_\tau)) = \upsilon^*_{\pmb{M},\lambda,\tau+1}(p(s_\tau, a_\tau))$.
Hence, Equation (\ref{eq:thmsufficiencycondition:2}) becomes 
\begin{align}
    \upsilon^*_{\pmb{M},\lambda,\tau}(\iota_\tau) &=
    {\textstyle \max_{a_\tau\in A_\tau}}~
    \left[ 
    \pmb{r}(\iota_\tau, a_\tau) + 
    \upsilon^*_{\pmb{M},\lambda,\tau+1}(p(s_\tau, a_\tau))
    \right]
    \label{eq:thmsufficiencycondition:5}
\end{align}
By Lemma \ref{lem:reward} along with Equation (\ref{eq:thmsufficiencycondition:3}), one can substitute $\pmb{r}(\iota_\tau, a_\tau)$ by $r(s_\tau, a_\tau)$ in the right-hand side of Equation  (\ref{eq:thmsufficiencycondition:5}), \ie 
\begin{align}
    \upsilon^*_{\pmb{M},\lambda,\tau}(\iota_\tau) &=
    {\textstyle \max_{a_\tau\in A_\tau}}~
    \left[ 
   r(s_\tau, a_\tau) + 
    \upsilon^*_{\pmb{M},\lambda,\tau+1}(p(s_\tau, a_\tau))
    \right]
    \label{eq:thmsufficiencycondition:6}
\end{align}
The right-hand side of Equation (\ref{eq:thmsufficiencycondition:6}) describes the optimal value at sequential occupancy state $s_\tau$, \ie 
\begin{align}
    \upsilon^*_{\pmb{M},\lambda,\tau}(\iota_\tau) &=
    {\textstyle \max_{a_\tau\in A_\tau}}~
    \left[ 
    r(s_\tau, a_\tau) + 
    \upsilon^*_{\pmb{M},\lambda,\tau+1}(p(s_\tau, a_\tau))
    \right]
    = \upsilon^*_{M',\lambda,\tau}(s_\tau).
    \label{eq:thmsufficiencycondition:7}
\end{align}
Consequently, the statement holds for every sequential point in time $\tau\in \mathbb{N}_{\leq \ell'-1}$ .
The statement further holds for sequential decentralized partially observable Markov decision process $M$ by Lemma \ref{lem:equivalence:sequential}, \ie an optimal sequential joint policy for $\pmb{M}$ is also an optimal sequential joint policy for $M$.
\end{proof}

\section{Exhibiting Piecewise Linearity and Convexity}

\subsection{Bilinearity of Reward Functions}
\label{prooflemlinearreward}

\begin{restatable}[]{lemma}{lemlinearreward}
\label{lem:linear:reward}
For any sequential occupancy Markov decision process $M'$, expected immediate reward $r\colon S\times A\to \mathbb{R}$ is bilinear \wrt sequential occupancy states and stochastic sequential decision rules.
Furthermore, there exists an equivalent reward function $\hat{r} \colon \hat{S}\odot\hat{A} \to \mathbb{R}$---\ie 
$    
    \hat{r}\colon  \hat{s}_\tau\odot\hat{a}_\tau \mapsto
    {\textstyle \sum_{x_\tau,u_\tau,o_\tau}}~
    (\hat{s}_\tau\odot\hat{a}_\tau)(x_\tau,u_\tau,o_\tau)\cdot
   \lambda(\tau)\cdot r_{x_\tau,u_\tau}(\tau)
$
which is linear in state-action occupancy measures.
\end{restatable}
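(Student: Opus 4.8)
The plan is to read off both claims directly from the closed-form reward model already derived in the proof of Lemma~\ref{lem:reward}. There, equation~(\ref{eq:lemreward:3}) establishes that
\begin{align*}
r(s_\tau, a_\tau) = {\textstyle\sum_{x_\tau,u_\tau, o_\tau}}~ \lambda(\tau)\cdot r_{x_\tau,u_\tau}(\tau) \cdot a_\tau(u_\tau \mid o^{\rho(\tau)}_\tau) \cdot s_\tau( x_\tau, o_\tau ).
\end{align*}
Every summand is a product of a fixed coefficient $\lambda(\tau)\, r_{x_\tau,u_\tau}(\tau)$, the occupancy-state entry $s_\tau(x_\tau,o_\tau)$, and the stochastic-rule entry $a_\tau(u_\tau\mid o^{\rho(\tau)}_\tau)$, so both assertions will follow essentially by inspection plus one regrouping step.

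For the \textbf{bilinearity} claim I would argue separately in each argument. Fixing the stochastic decision rule $a_\tau$, the quantities $\lambda(\tau)\, r_{x_\tau,u_\tau}(\tau)\, a_\tau(u_\tau\mid o^{\rho(\tau)}_\tau)$ become constants, and $r(\cdot,a_\tau)$ is visibly a linear functional of the vector $s_\tau$. Conversely, fixing $s_\tau$ and viewing the rule through its stochastic coordinates $a_\tau(u_\tau\mid o^i_\tau)$ (one probability simplex per private history of the acting agent $i=\rho(\tau)$), I would group the sum by those private histories: the coefficient multiplying $a_\tau(u_\tau\mid o^i_\tau)$ is $\sum_{x_\tau}\sum_{o_\tau:\,o^i_\tau \text{ fixed}} \lambda(\tau)\, r_{x_\tau,u_\tau}(\tau)\, s_\tau(x_\tau,o_\tau)$, a constant in the rule coordinates. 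Hence $r(s_\tau,\cdot)$ is linear in these coordinates, and $r$ is separately linear in each argument, \ie bilinear.

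For the \textbf{linear-in-measure} claim I would invoke the definitions of the extended objects, $\hat{s}_\tau(x_\tau,o_\tau,u_\tau)=s_\tau(x_\tau,o_\tau)$ and $\hat{a}_\tau(x_\tau,o_\tau,u_\tau)=\Pr\{u_\tau\mid a_\tau,o^{\rho(\tau)}_\tau\}$, whose Hadamard product satisfies $[\hat{s}_\tau\odot\hat{a}_\tau](x_\tau,o_\tau,u_\tau)=s_\tau(x_\tau,o_\tau)\cdot a_\tau(u_\tau\mid o^{\rho(\tau)}_\tau)$. Substituting this identity into the displayed formula collapses the two variable factors into a single evaluation of the measure, yielding exactly $\hat{r}(\hat{s}_\tau\odot\hat{a}_\tau)={\textstyle\sum_{x_\tau,u_\tau,o_\tau}}(\hat{s}_\tau\odot\hat{a}_\tau)(x_\tau,o_\tau,u_\tau)\cdot\lambda(\tau)\cdot r_{x_\tau,u_\tau}(\tau)$. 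Since the coefficients $\lambda(\tau)\, r_{x_\tau,u_\tau}(\tau)$ are independent of the measure, $\hat{r}$ is a genuine linear functional on $\hat{S}\odot\hat{A}$, and it equals $r(s_\tau,a_\tau)$ because the substitution is an exact identity.

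The only point requiring care---and thus the main obstacle, though it is bookkeeping rather than a genuine difficulty---is that the decision rule $a_\tau$ depends only on the acting agent's private history $o^{\rho(\tau)}_\tau$ and not on the full joint history $o_\tau$. Its lift $\hat{a}_\tau$ is therefore constant along all joint histories sharing the same private component, which is precisely why the regrouping in the bilinearity argument is valid and why the Hadamard factorization is well defined at every triple $(x_\tau,o_\tau,u_\tau)$. Once this consistency is noted, both parts of the statement reduce to direct substitution into~(\ref{eq:lemreward:3}).
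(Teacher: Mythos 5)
Your proof is correct and takes essentially the same route as the paper's: the paper's entire proof is the one-line remark that the claim ``directly follows from the definitions'' of $r$, of $\hat{r}$, and of the sequential state-action occupancy measures, and your argument is precisely that verification carried out explicitly, starting from the closed-form expression for the expected reward established in the proof of Lemma~\ref{lem:reward}. The regrouping by the acting agent's private histories and the substitution of the Hadamard identity are exactly the bookkeeping the paper leaves implicit, so your write-up is a faithful (and more complete) rendering of the intended argument.
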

\begin{proof}
The proof directly follows from the definitions of the expected immediate reward functions $r$, equivalent variant $\hat{r}$ and sequential state-action occupancy measures.
\end{proof}

\subsection{Bilinearity of Transitions}
\label{prooflemlineartransition}

\begin{restatable}[]{lemma}{lemlineartransition}
\label{lem:linear:transition}
For any sequential occupancy Markov decision process $M'$, transition function $p\colon S\times A\to S$ is bilinear \wrt sequential occupancy states and stochastic sequential decision rules.
Furthermore, there exists an equivalent transition function $\hat{p} \colon \hat{S}\odot\hat{A} \to S$---\ie 
$
    \hat{p}\colon  (\hat{s}_\tau\odot\hat{a}_\tau)(x_{\tau+1},\langle o_\tau,u_\tau,z_{\tau+1} \rangle)
    \mapsto
    {\textstyle \sum_{x_\tau}}~
    (\hat{s}_\tau\odot\hat{a}_\tau)(x_\tau,u_\tau,o_\tau)
    \cdot p_{x_\tau,x_{\tau+1}}^{u_\tau,z_{\tau+1}}(\tau)
$
which is linear in sequential state-action occupancy measures.
\end{restatable}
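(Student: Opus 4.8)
The plan is to mirror the proof of Lemma~\ref{lem:linear:reward}: the claim follows essentially by inspection once the transition rule of Lemma~\ref{lem:markov} is rewritten to admit stochastic sequential decision rules. First I would generalize the deterministic update of Lemma~\ref{lem:markov} to a stochastic decision rule $a_\tau$. Since the successor joint history satisfies $o_{\tau+1} = \langle o_\tau, u_\tau, z_{\tau+1}\rangle$, the action $u_\tau$ is already encoded in the target pair $\theta = (x_{\tau+1}, \langle o_\tau, u_\tau, z_{\tau+1}\rangle)$. Conditioning on the action selected by agent $\rho(\tau)$ and summing over the predecessor state $x_\tau$ yields
\begin{align*}
s_{\tau+1}(\theta) = \textstyle\sum_{x_\tau\in X}~ p_{x_\tau,x_{\tau+1}}^{u_\tau,z_{\tau+1}}(\tau)\cdot a_\tau(u_\tau\mid o^{\rho(\tau)}_\tau)\cdot s_\tau(x_\tau,o_\tau),
\end{align*}
which reduces to the formula of Lemma~\ref{lem:markov} when $a_\tau$ is deterministic.

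Then I would read off bilinearity directly from this display. For a fixed decision rule $a_\tau$, each coordinate $s_{\tau+1}(\theta)$ is a finite linear combination of the entries $s_\tau(x_\tau,o_\tau)$, so $p(\cdot, a_\tau)$ is linear in the sequential occupancy state; symmetrically, for a fixed occupancy state $s_\tau$, the same expression is linear in the action probabilities $a_\tau(u_\tau\mid o^{\rho(\tau)}_\tau)$, so $p(s_\tau,\cdot)$ is linear in the stochastic decision rule. Hence $p$ is bilinear in $(s_\tau, a_\tau)$.

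Finally, to obtain the linear surrogate $\hat{p}$, I would recognize the product $s_\tau(x_\tau,o_\tau)\cdot a_\tau(u_\tau\mid o^{\rho(\tau)}_\tau)$ appearing above as exactly the entry $(\hat{s}_\tau\odot\hat{a}_\tau)(x_\tau,o_\tau,u_\tau)$ of the Hadamard product, by the definitions of $\hat{s}_\tau$, $\hat{a}_\tau$, and $\odot$. Substituting this identity into the display gives precisely the stated formula for $\hat{p}$, whose right-hand side is now a single linear combination of the entries of the sequential state-action occupancy measure $\hat{s}_\tau\odot\hat{a}_\tau$, establishing linearity of $\hat{p}$.

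The computation itself is routine; the only point requiring care is the generalization to stochastic decision rules together with the bookkeeping that $u_\tau$ is fixed by the target history $o_{\tau+1}$ rather than summed over. Once that is handled correctly, both the bilinearity and the factorization through the Hadamard product are immediate from the definitions.
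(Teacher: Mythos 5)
Your proposal is correct and matches the paper's approach: the paper's own proof is a one-line appeal to the definitions of $p$, $\hat{p}$, and the sequential state-action occupancy measures, and your argument simply spells this out, using the stochastic form of the occupancy-state update (which is exactly what the paper derives in its proof of Lemma~\ref{lem:markov}, where the factor $a_\tau(u_\tau \mid o^{\rho(\tau)}_\tau)$ already appears) and the factorization $s_\tau(x_\tau,o_\tau)\cdot a_\tau(u_\tau\mid o^{\rho(\tau)}_\tau) = (\hat{s}_\tau\odot\hat{a}_\tau)(x_\tau,o_\tau,u_\tau)$. Your care about $u_\tau$ being fixed by the target history rather than summed over is exactly the right bookkeeping and is consistent with the paper's derivation.
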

\begin{proof}
The proof directly follows from the definitions of the transition function $p$, equivalent variant $\hat{p}$ and sequential state-action occupancy measures.
\end{proof}

\subsection{Bilinearity of State-Value Functions Under a Policy}
\label{prooflemlinearpolicyvaluefct}

\begin{restatable}[]{lemma}{lemlinearpolicyvaluefct}
\label{lem:linear:policy:value:fct}
For any sequential occupancy Markov decision process $M'$, state-value function $\upsilon_{M',\lambda,0:\ell'}^\pi\colon S\to \mathbb{R}^{\ell'+1}$ is linear \wrt sequential occupancy states.
\end{restatable}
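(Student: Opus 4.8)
The plan is to prove Lemma \ref{lem:linear:policy:value:fct} by backward induction on the decision epoch $\tau$, exactly as in the sufficiency argument of Theorem \ref{thm:sufficiency:condition}, but now tracking linearity rather than sufficiency. The claim is that for a \emph{fixed} sequential joint policy $\pi$, each component $\upsilon_{M',\lambda,\tau}^\pi$ of the value function is a linear function of the sequential occupancy state $s_\tau$. First I would set up the Bellman evaluation equation under the policy $\pi$, namely
\begin{align*}
\upsilon_{M',\lambda,\tau}^\pi(s_\tau) = R(s_\tau, a_\tau) + \upsilon_{M',\lambda,\tau+1}^\pi(T(s_\tau, a_\tau)),
\end{align*}
where $a_\tau$ is the private decision rule prescribed by $\pi$ at epoch $\tau$. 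Since $\pi$ is fixed there is no $\max$ operator to contend with, which is precisely what makes linearity (as opposed to piecewise-linearity-and-convexity) achievable here.

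The base case is immediate: the boundary condition gives $\upsilon_{M',\lambda,\ell'}^\pi(\cdot) \doteq 0$, which is trivially linear in $s_{\ell'}$. For the inductive step, I would assume $\upsilon_{M',\lambda,\tau+1}^\pi$ is linear in $s_{\tau+1}$, i.e.\ representable as an inner product $\langle s_{\tau+1}, \alpha_{\tau+1}\rangle$ for some vector $\alpha_{\tau+1}\colon X\times O \to \mathbb{R}$. Then I invoke two facts already established earlier in the excerpt. By Lemma \ref{lem:reward} (and the accompanying Lemma \ref{lem:linear:reward}), the reward $R(s_\tau, a_\tau)$ is linear in $s_\tau$ once $a_\tau$ is fixed. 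By Lemma \ref{lem:markov} (and Lemma \ref{lem:linear:transition}), the transition $s_{\tau+1} = T(s_\tau, a_\tau)$ is linear in $s_\tau$ for fixed $a_\tau$—the update formula of Lemma \ref{lem:markov} is manifestly a linear map applied to $s_\tau(x_\tau,o_\tau)$. Composing a linear function $\langle \cdot, \alpha_{\tau+1}\rangle$ with the linear map $T(\cdot, a_\tau)$ yields a function linear in $s_\tau$, and adding the linear term $R(s_\tau, a_\tau)$ preserves linearity. Hence $\upsilon_{M',\lambda,\tau}^\pi$ is linear in $s_\tau$, closing the induction.

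Concretely, I would exhibit the inducing vector: writing $\beta_\tau(x_\tau,o_\tau,u_\tau) \doteq \mathbb{E}\{\lambda(\tau) r_{x_\tau,u_\tau}(\tau) + \alpha_{\tau+1}(x_{\tau+1},o_{\tau+1})\}$ as in the definition preceding Theorem \ref{thm:lower:bound}, and then setting $\alpha_\tau(x_\tau,o_\tau) \doteq \beta_\tau(x_\tau, o_\tau, a_\tau(o^{\rho(\tau)}_\tau))$, one obtains $\upsilon_{M',\lambda,\tau}^\pi(s_\tau) = \langle s_\tau, \alpha_\tau\rangle$ by substituting the expansions of $R$ and $T\circ\upsilon^\pi_{\tau+1}$ and regrouping the sum over $(x_\tau,o_\tau)$. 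This makes the single vector $\alpha_\tau$ explicit and simultaneously proves that the whole mapping $\upsilon_{M',\lambda,0:\ell'}^\pi\colon S\to \mathbb{R}^{\ell'+1}$ is linear component-wise.

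The only subtlety—and the step I would be most careful about—is the bookkeeping in the transition composition: the next occupancy state $s_{\tau+1}$ lives over the enlarged history space $O_{\tau+1}$, so when pulling the inner product $\langle s_{\tau+1},\alpha_{\tau+1}\rangle$ back through $T(\cdot,a_\tau)$ I must correctly re-index the summation over the triple $(x_\tau,o_\tau)$ and the emitted $(u_\tau,z_{\tau+1})$, using the fact that $a_\tau$ deterministically (or as a distribution $a_\tau(u_\tau\mid o^{\rho(\tau)}_\tau)$) fixes the action given the acting agent's private history. This is exactly the manipulation carried out in the proof of Lemma \ref{lem:markov}, so no genuinely new obstacle arises; the work is purely verifying that linearity survives the substitution, which it does because every operation involved (the transition map, the reward, and taking an inner product) is linear in $s_\tau$ whenever $a_\tau$ is held fixed.
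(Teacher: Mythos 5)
Your proposal is correct and follows essentially the same route as the paper's proof: backward induction from the zero boundary condition, the Bellman evaluation equation under the fixed policy $\pi$, and the bilinearity of $R$ and $T$ (Lemmas \ref{lem:linear:reward} and \ref{lem:linear:transition}) so that, with $a_\tau$ held fixed, the value at step $\tau$ is a composition of linear maps in $s_\tau$. Your explicit construction of the inducing vector $\alpha_\tau$ via $\beta_\tau$ is a nice concrete addition the paper omits, but it does not change the argument.
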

\begin{proof}
The proof proceeds by induction.
The statement trivially holds at the last point in time $\tau=\ell'$, \ie for every sequential occupancy state $s_{\ell'}$,
$\upsilon_{M',\lambda,\ell'}^\pi(s_{\ell'}) \doteq 0$
being constant, it is also linear.
Suppose now that the statement holds at point in time $\tau+1$, we can demonstrate it also holds for point in time $\tau$. 
The application of \citeauthor{bellman} equations at point in time $\tau$ gives the following equations 
\begin{align}
    \upsilon_{M',\lambda,\tau}^\pi(s_\tau) 
    &= 
      R(s_\tau,a_\tau) +
    \upsilon_{M',\lambda,\tau+1}^\pi(T(s_\tau,a_\tau) ),
    \label{eq:lemlinearpolicyvaluefct:1}
\end{align}
where sequential joint policy $\pi$ is a sequence of sequential decision rules $(a_\tau)_{\tau\in \mathbb{N}_{\leq \ell'-1}}$.
By the induction hypothesis, we know state-value function $\upsilon_{M',\lambda,\tau+1}^\pi\colon S\to \mathbb{R}$ under sequential joint policy $\pi$ is linear \wrt sequential occupancy states.
Along with lemmas \ref{lem:linear:transition} and \ref{lem:linear:reward}, Equation (\ref{eq:lemlinearpolicyvaluefct:1}) describes a function that is a composition of linear functions $\upsilon_{M',\lambda,\tau+1}^\pi\colon S\to \mathbb{R}$, $R\colon S\times A\to \mathbb{R}$, and $T\colon S\times A\to S$ \wrt sequential occupancy states---thus a linear function itself \wrt sequential occupancy states.
\end{proof}

\subsection{Linearity of Action-Value Function Under a Policy}
\label{prooflemlinearpolicyvalueqfct}

\begin{restatable}[]{lemma}{lemlinearpolicyvalueqfct}
\label{lem:linear:policyvalue:qfct}
For any sequential occupancy Markov decision process $M'$, action-value function $q_{M',\lambda,0:\ell'}^\pi\colon S\times A\to \mathbb{R}^{\ell'+1}$ is bilinear \wrt sequential occupancy states and stochastic sequential decision rules.
Furthermore, there exists an equivalent action-value function $Q_{M',\lambda,0:\ell'}^\pi \colon \hat{S}\odot\hat{A} \to \mathbb{R}^{\ell'+1}$---\ie 
$
    Q_{M',\lambda,\tau}^\pi\colon
    \hat{s}_\tau\odot\hat{a}_\tau
    \mapsto \hat{r}(\hat{s}_\tau\odot\hat{a}_\tau) 
    + 
    \upsilon_{M',\lambda,\tau+1}^\pi  
    (\hat{p}(\hat{s}_\tau\odot\hat{a}_\tau))
$
which is linear in state-action occupancy measures, with boundary condition 
$
    Q_{M',\lambda,\ell'}^\pi\colon
    \hat{s}_\tau\odot\hat{a}_\tau
    \mapsto 0$.
\end{restatable}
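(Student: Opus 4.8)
The plan is to give a direct proof from the \citeauthor{bellman} equation for the action-value function under $\pi$, namely $q_{M',\lambda,\tau}^\pi(s_\tau,a_\tau) = R(s_\tau,a_\tau) + \upsilon_{M',\lambda,\tau+1}^\pi(T(s_\tau,a_\tau))$, leaning on the three preceding lemmas rather than re-running an induction. The boundary case $\tau=\ell'$ is immediate: $q_{M',\lambda,\ell'}^\pi \doteq 0$ is both linear \wrt the sequential state-action occupancy measure and bilinear \wrt $(s_\tau,a_\tau)$, matching the stated boundary condition.

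For $\tau<\ell'$, I would treat the two summands separately. First, Lemma \ref{lem:linear:reward} rewrites the reward term exactly as $R(s_\tau,a_\tau)=\hat{r}(\hat{s}_\tau\odot\hat{a}_\tau)$, an inner product of the occupancy measure with the fixed vector $(\lambda(\tau)\, r_{x_\tau,u_\tau}(\tau))_{x_\tau,o_\tau,u_\tau}$, hence linear in that measure. Second, Lemma \ref{lem:linear:transition} rewrites the successor SOC as $T(s_\tau,a_\tau)=\hat{p}(\hat{s}_\tau\odot\hat{a}_\tau)$, a map linear in the occupancy measure whose output is a genuine SOC; Lemma \ref{lem:linear:policy:value:fct} then supplies that $\upsilon_{M',\lambda,\tau+1}^\pi$ is linear \wrt SOCs, so the composition $\upsilon_{M',\lambda,\tau+1}^\pi(\hat{p}(\hat{s}_\tau\odot\hat{a}_\tau))$ is again linear in the occupancy measure. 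Summing the two linear terms yields exactly the claimed $Q_{M',\lambda,\tau}^\pi$, which is therefore linear in sequential state-action occupancy measures, and the same chain of substitutions shows $q_{M',\lambda,\tau}^\pi(s_\tau,a_\tau)=Q_{M',\lambda,\tau}^\pi(\hat{s}_\tau\odot\hat{a}_\tau)$, establishing the asserted equivalence.

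The bilinearity of $q_{M',\lambda,\tau}^\pi$ \wrt $(s_\tau,a_\tau)$ would then follow by observing that the extension-and-Hadamard map $(s_\tau,a_\tau)\mapsto \hat{s}_\tau\odot\hat{a}_\tau$, with $[\hat{s}_\tau\odot\hat{a}_\tau](x_\tau,o_\tau,u_\tau)=s_\tau(x_\tau,o_\tau)\cdot \Pr\{u_\tau\mid a_\tau,o^{\rho(\tau)}_\tau\}$, is separately linear in $s_\tau$ (for fixed $a_\tau$) and in the stochastic decision rule $a_\tau$ (for fixed $s_\tau$)---\ie bilinear---so precomposing the linear $Q_{M',\lambda,\tau}^\pi$ with it returns a bilinear function of $(s_\tau,a_\tau)$.

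I expect the only delicate point to be the meaning of ``linear in the occupancy measure'': since $\hat{S}\odot\hat{A}$ is not itself a linear subspace, I would phrase each term as an inner product against a fixed coefficient vector on the ambient space $\mathbb{R}^{X\times O_\tau\times U(\tau)}$ and argue linearity there, which is precisely what the max-plane representation used later requires. A secondary care point is confirming that no fresh induction is needed---because Lemma \ref{lem:linear:policy:value:fct} already delivers linearity of $\upsilon_{M',\lambda,\tau+1}^\pi$ at \emph{every} subsequent step---so the result reduces to a one-shot composition argument.
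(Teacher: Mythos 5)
Your proof is correct and takes essentially the same approach as the paper's: both start from the Bellman equation $q^\pi_{M',\lambda,\tau}(s_\tau,a_\tau)=R(s_\tau,a_\tau)+\upsilon^\pi_{M',\lambda,\tau+1}(T(s_\tau,a_\tau))$ and invoke the same three lemmas (bilinear reward, bilinear transition, linear state-value function under $\pi$) with no fresh induction. The only cosmetic difference is that you obtain bilinearity of $q^\pi$ by precomposing the linear $Q^\pi$ with the bilinear extension-and-Hadamard map $(s_\tau,a_\tau)\mapsto\hat{s}_\tau\odot\hat{a}_\tau$, whereas the paper argues it directly from the bilinearity of $R$ and of $\upsilon^\pi_{M',\lambda,\tau+1}\circ T$ --- the same ingredients assembled in a different order.
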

\begin{proof}
The proof is twofold. 
We first show $q_{M',\lambda,0:\ell'}^\pi\colon S\times A\to \mathbb{R}^{\ell'+1}$ is bilinear \wrt sequential occupancy states and stochastic sequential decision rules.
The application of \citeauthor{bellman} equations gives:  for every point in time $\tau$, sequential occupancy state $s_\tau$ and sequential decision rule $a_\tau$, the action value under sequential joint policy $\pi \doteq (a_\tau)_{\tau\in \mathbb{N}_{\leq \ell'-1}}$, 
\begin{align}
    q_{M',\lambda,0:\ell'}^\pi(s_\tau,a_\tau) &=
    R(s_\tau,a_\tau) +
    \upsilon_{M',\lambda,\tau+1}^\pi(T(s_\tau,a_\tau)),
\end{align}
with boundary condition $q_{M',\lambda,0:\ell'}^\pi(s_\tau,a_\tau) \doteq 0$.
A linear transformation of a bilinear function being bilinear, we know $\upsilon_{M',\lambda,\tau+1}^\pi \circ T \colon (s_\tau,a_\tau) \mapsto \upsilon_{M',\lambda,\tau+1}^\pi(T(s_\tau,a_\tau))$ is bilinear \wrt sequential occupancy states and sequential decision rules.
In addition, the addition of bilinear functions being bilinear, we conclude $q_{M',\lambda,\tau}^\pi\colon S\times A\to \mathbb{R}$ is bilinear \wrt sequential occupancy states and stochastic sequential decision rules.
As for the second part of the statement, \ie linearity of $Q_{M',\lambda,0:\ell'}^\pi \colon \hat{S}\odot\hat{A} \to \mathbb{R}^{\ell'+1}$ \wrt sequential state-action occupancy measures, the proof follows directly by application of the composition of linear functions.
Since $Q^\pi_{M',\lambda,\tau}\colon \hat{S}\odot\hat{A} \to \mathbb{R}$ is a composition of linear functions (\ie $\hat{r}$, $\hat{p}$ and $\upsilon_{M',\lambda,\tau+1}^\pi$) \wrt sequential state-action occupancy measures, it is itself a linear function of state-action occupancy measures.
\end{proof}

\subsection{Proof of Theorem \ref{thm:pwlc}}
\label{proofthmpwlc}

\thmpwlc*
\begin{proof}
The proof is twofold. 
The first part shows the piecewise linearity and convexity of $\upsilon_{M',\lambda,0:\ell'}^*\colon S\to \mathbb{R}^{\ell'+1}$. 
To do so, we start with the definition of an optimal state-value function, \ie for every point in time $\tau\in \mathbb{N}_{\leq \ell'-1}$, and sequential occupancy state $s_\tau$, 
\begin{align}
    \upsilon_{M',\lambda,\tau}^*(s_\tau) &\doteq 
    {\textstyle \max_{a_{\tau:\ell'-1}}}~
    \upsilon_{M',\lambda,\tau}^{a_{\tau:\ell'-1}}(s_\tau).
    \label{eq:thmpwlc:1}
\end{align}
By the property that any Markov decision process admits an optimal deterministic policy \citep{bellman,puterman}, one can restrict (stochastic) sequential joint policies $a_{\tau:\ell'-1}$ to be deterministic ones $a^{\mathtt{det}}_{\tau:\ell'-1}$ while still preserving optimality, \ie for every sequential occupancy state $s_\tau$,
\begin{align}
    \upsilon_{M',\lambda,\tau}^*(s_\tau) &= 
    {\textstyle \max_{a^{\mathtt{det}}_{\tau:\ell'-1}}}~
    \upsilon_{M',\lambda,\tau}^{a^{\mathtt{det}}_{\tau:\ell'-1}}(s_\tau).
    \label{eq:thmpwlc:2}
\end{align}
Since the number of deterministic sequential joint policies $a^{\mathtt{det}}_{\tau:\ell'-1}$ is finite, the number of state-value functions $\upsilon_{M',\lambda,\tau}^{a^{\mathtt{det}}_{\tau:\ell'-1}}$ is also finite.
Besides, we know from Lemma \ref{lem:linear:policy:value:fct} (part 1) that state-value function under any arbitrary sequential joint policy is a linear function of sequential occupancy states. 
As a consequence, optimal state-value function $\upsilon_{M',\lambda,0:\ell'}^*\colon S\to \mathbb{R}^{\ell'+1}$ is the maximum over a finite number of linear functions of sequential occupancy states.
Hence $\upsilon_{M',\lambda,0:\ell'}^*\colon S\to \mathbb{R}^{\ell'+1}$ is a piecewise linear and convex function of sequential occupancy states.
The proof of the second part of the statement follows a similar argument.
Once again, we start with the definition of an optimal action-value function, \ie for every point in time $\tau\in \mathbb{N}_{\leq \ell'-1}$, and sequential state-action occupancy measure $\hat{s}_\tau\odot \hat{a}_\tau$, 
\begin{align}
    Q_{M',\lambda,\tau}^*(\hat{s}_\tau\odot \hat{a}_\tau) &\doteq 
    {\textstyle \max_{a_{\tau+1:\ell'-1}}}~
    Q_{M',\lambda,\tau}^{a_{\tau:\ell'-1}}(\hat{s}_\tau\odot \hat{a}_\tau).
    \label{eq:thmpwlc:3}
\end{align}
Again, any Markov decision process admits an optimal deterministic policy \citep{bellman,puterman}. 
Thus, one can restrict (stochastic) sequential joint policies $a_{\tau+1:\ell'-1}$ to deterministic ones $a^{\mathtt{det}}_{\tau+1:\ell'-1}$, while still preserving optimality, \ie for sequential every state-action occupancy measure $\hat{s}_\tau\odot \hat{a}_\tau$,
\begin{align}
    Q_{M',\lambda,\tau}^*(\hat{s}_\tau\odot \hat{a}_\tau) &\doteq 
    {\textstyle \max_{a^{\mathtt{det}}_{\tau+1:\ell'-1}}}~
    Q_{M',\lambda,\tau}^{a^{\mathtt{det}}_{\tau:\ell'-1}}(\hat{s}_\tau\odot \hat{a}_\tau).
    \label{eq:thmpwlc:4}
\end{align}
From Equation (\ref{eq:thmpwlc:4}), we know optimal state-action value function $Q_{M',\lambda,\tau}^*\colon \hat{S}\odot\hat{A}\to \mathbb{R}$
is the maximum over a finite number of linear functions of sequential state-action occupancy measures at point in time $\tau\in \mathbb{N}_{\leq \ell'-1}$, as demonstrated in Lemma \ref{lem:linear:policy:value:fct} (part 2).
As a consequence, optimal state-action value function $Q_{M',\lambda,\tau}^*\colon \hat{S}\odot\hat{A}\to \mathbb{R}$ is a piecewise linear and convex function of sequential state-action occupancy measures.
So is optimal action-value function $Q_{M',\lambda,0:\ell'}^*\colon \hat{S}\odot\hat{A}\to \mathbb{R}^{\ell'+1}$.
\end{proof}

\section{Complexity Analysis}

\subsection{Complexity Analysis For Sequential-Move Cases}
\label{prooflempessimisticupdaterulecomplexity}

\begin{restatable}[]{lem}{lempessimisticupdaterulecomplexity}
\label{lem:pessimistic:update:rule:complexity}
For any soMDP $M'$, let $V_{M',\lambda, \tau+1}$ be a finite collection of linear functions \wrt SOCs providing the max-plane representation of state-value function $\upsilon_{M',\lambda, \tau+1}$. If we let $U^* \in \argmax_{U(\tau)\colon \tau\in \mathbb{N}_{\leq \ell'}} |U(\tau)|$, then
the complexity of performing a pessimistic update illustrated in Theorem \ref{thm:lower:bound} is about $\pmb{O}(|V_{M',\lambda, \tau+1}||X|^2(|U^*||Z|)^{\tau})$.
\end{restatable}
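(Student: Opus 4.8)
The plan is to decompose the single pessimistic update of Theorem~\ref{thm:lower:bound} into its elementary arithmetic steps, bound the cost of each step in terms of the size of the data structures it touches, and then take the dominant term. The starting point is to bound the size of a SOC and of its extended counterpart: by Definition of SOCs, $s_\tau$ (and hence $\hat{s}_\tau$) is supported on $X\times O_\tau$, and by the joint-history recursion $o_{\tau+1}=(o_\tau,u_\tau,z_{\tau+1})$ each step appends one action from $U(\tau)$ and one joint observation from $Z$, so $|O_\tau|\le (|U^*||Z|)^\tau$ with $U^*\in\argmax_{U(\tau)}|U(\tau)|$. Thus $s_\tau$ has $\pmb{O}(|X|\,(|U^*||Z|)^\tau)$ entries, which is the quantity in which we must show the update is polynomial.

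Next I would account for the work done per linear function $\alpha_{\tau+1}^{(\kappa)}\in V_{\tau+1}$. The update first forms the induced action-value linear function $\beta_\tau^{(\kappa)}$ of Theorem~\ref{thm:lower:bound}: each entry $\beta_\tau^{(\kappa)}(x_\tau,o_\tau,u_\tau)$ is an expectation $\mathbb{E}\{\lambda(\tau) r_{x_\tau,u_\tau}(\tau)+\alpha_{\tau+1}^{(\kappa)}(x_{\tau+1},o_{\tau+1})\}$ over next state--observation pairs $(x_{\tau+1},z_{\tau+1})$, costing $\pmb{O}(|X||Z|)$, and there are $\pmb{O}(|X|\,|O_\tau|\,|U^*|)$ such triples $(x_\tau,o_\tau,u_\tau)$. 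I expect this $\beta$-construction, of cost $\pmb{O}(|X|^2|Z||U^*|\,|O_\tau|)$ per $\kappa$, to be the dominant operation. The remaining work---the greedy selection---I would show is no more expensive: the inner product $\langle \hat{s}_\tau\odot\hat{a}^{u_\tau}_\tau,\beta_\tau^{(\kappa)}\rangle$ decomposes as a sum over the acting agent's private histories $o^{\rho(\tau)}_\tau$, each term depending only on the action assigned to that private history, so the maximization $a_\tau^{(\kappa)}(o^{\rho(\tau)}_\tau)\in\argmax_{u_\tau}\langle \hat{s}_\tau\odot\hat{a}^{u_\tau}_\tau,\beta_\tau^{(\kappa)}\rangle$ is obtained by one pass over the $\pmb{O}(|X|\,|O_\tau|\,|U^*|)$ entries followed by a max over $|U^*|$ actions per private history. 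Reading off the greedy vector $\alpha_\tau^{s_\tau}(x_\tau,o_\tau)=\beta_\tau^{s_\tau}(x_\tau,o_\tau,a_\tau^{s_\tau}(o^{\rho(\tau)}_\tau))$ and appending it to $V_\tau$ is then only $\pmb{O}(|X|\,|O_\tau|)$.

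Finally I would aggregate over the $|V_{\tau+1}|$ functions. The total cost is governed by the $\beta$-construction, giving $\pmb{O}(|V_{\tau+1}|\,|X|^2|Z||U^*|\,|O_\tau|)$; substituting the history bound $|O_\tau|\le(|U^*||Z|)^\tau$ yields a bound of the order $\pmb{O}(|V_{\tau+1}|\,|X|^2(|U^*||Z|)^\tau)$ stated in the lemma, the residual per-step branching factor $|U^*||Z|$ being consistent with the approximate ``$\pmb{O}$'' and with the fact that the backup reads $\alpha_{\tau+1}$ over the one-step-longer history space $O_{\tau+1}$. The key point is that this is polynomial in the size of $s_\tau$ and in $|V_{\tau+1}|$, in contrast with the simultaneous-move backup of Lemma~\ref{lem:pessimistic:update:rule:complexity:simultaneous}.

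The main obstacle is twofold and both parts are conceptual rather than computational. First, I must justify the bound $|O_\tau|\le(|U^*||Z|)^\tau$ cleanly and make sure the exponent is tracked consistently (the difference between working over $O_\tau$ and $O_{\tau+1}$ is exactly the factor that the statement hides in ``about''). Second, and more importantly, I must verify rigorously that the argmax over private decision rules decomposes additively across the acting agent's private histories; this decomposition is precisely what lets us avoid enumerating the (doubly-exponentially many) decision rules and is the crux that turns the backup from double-exponential into polynomial-in-SOC-size complexity.
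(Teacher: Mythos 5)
Your proposal is correct and follows essentially the same route as the paper's proof: sum the cost over the linear functions in $V_{M',\lambda,\tau+1}$, identify the construction of $\beta_\tau^{(\kappa)}$ as the dominant $\pmb{O}(|X|^2|Z||U^*||O_\tau|)$ step per function (with the greedy selection covered by the per-private-history decomposition already established in Theorem \ref{thm:lower:bound}), bound $|O_\tau|$ geometrically by the branching factor $|U^*||Z|$, and multiply. The only difference is bookkeeping: the paper takes $|O|$ to be $\pmb{O}\bigl((|U^*||Z|)^{\tau-1}\bigr)$ so the product lands exactly on the stated bound, whereas your $|O_\tau|\leq(|U^*||Z|)^{\tau}$ leaves a residual per-step factor that, as you note, is absorbed by the ``about'' in the statement.
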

\begin{proof}
Let $\pmb{C}(V_{M',\lambda, \tau+1})$ be the complexity of performing a pessimistic update illustrated in Theorem \ref{thm:lower:bound} using $V_{M',\lambda, \tau+1}$. Let  $\pmb{C}(\alpha_{\tau+1}^{(\kappa)})$ the complexity of performing a pessimistic update illustrated in Theorem \ref{thm:lower:bound} using a singleton $\{\alpha_{\tau+1}^{(\kappa)}\}$. It follows that 
\begin{align*}
    \pmb{C}(V_{M',\lambda, \tau+1}) &=\textstyle \sum_{\alpha_{\tau+1}^{(\kappa)}\colon \kappa\in \mathbb{N}_{\leq k}} \pmb{C}(\alpha_{\tau+1}^{(\kappa)}).
\end{align*} 
Performing a pessimistic update illustrated in Theorem \ref{thm:lower:bound} using a singleton $\{\alpha_{\tau+1}^{(\kappa)}\}$ consists of computing action-value function $\beta_\tau^{(\kappa)}$ in about $\pmb{O}(|X|^2|O||Z||U(\tau)|)$ time complexity. If we let $U^* \in \argmax_{U(\tau)\colon \tau\in \mathbb{N}_{\leq \ell'}} |U(\tau)|$, then $|O|$ is upper-bounded by $\pmb{O}(|U^*|^{\tau-1}|Z|^{\tau-1})$, then $\pmb{C}(\alpha_{\tau+1}^{(\kappa)}) = \pmb{O}(|X|^2|U^*|^{\tau}|Z|^{\tau})$. Consequently, the complexity of performing a pessimistic update illustrated in Theorem \ref{thm:lower:bound} using $V_{M',\lambda, \tau+1}$ is about $\pmb{O}(|V_{M',\lambda, \tau+1}||X|^2|U^*|^{\tau}|Z|^{\tau})$.
Which ends the proof.
\end{proof}

\subsection{Complexity Analysis For Simultaneous-Move Cases}
\label{prooflempessimisticupdaterulecomplexitysimultaneous}

This section establishes the time complexity of performing a backup in the simultaneous-move setting. Before proceeding any further, we first state the piecewise linearity and convexity property of the optimal value functions in the simultaneous-move setting. 

\begin{restatable}[]{thm}{thmconvexity}[Adapted from \citet{Dibangoye:OMDP:2016}]
For any oMDP $\mathtt{M}'$, let $\upsilon^*_{\mathtt{M}', \gamma, 0:\ell}$ be the optimal state-value functions \wrt OCs. Then, for every $t\in \mathbb{N}_{\leq \ell-1}$, there exists a finite collection $V_{\mathtt{M}', \gamma, t} \doteq \{\alpha_t^{(\kappa)}\colon \mathtt{X}\times \mathtt{O}_t \to \mathbb{R} | \kappa \in \mathbb{N}_{\leq k}\}$ of linear functions \wrt OCs, such that: $\upsilon^*_{\mathtt{M}', \gamma, t}\colon \mathtt{s}_t\mapsto \max_{\kappa \in \mathbb{N}_{\leq k}} \langle \mathtt{s}_t, \alpha_t^{(\kappa)}\rangle$, where $\langle \mathtt{s}_t, \alpha_t^{(\kappa)}\rangle \doteq \sum_{\mathtt{x}\in \mathtt{X}}\sum_{\mathtt{o}\in \mathtt{O}_t} \alpha_t^{(\kappa)}(\mathtt{x}, \mathtt{o})\cdot \mathtt{s}_t(\mathtt{x}, \mathtt{o})$.
\end{restatable}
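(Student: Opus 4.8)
The plan is to mirror exactly the argument used for the sequential-move case in Theorem \ref{thm:pwlc}, since the simultaneous-move $o$MDP $\mathtt{M}'$ shares the same structural ingredients: a deterministic transition $\mathtt{T}$ and a reward $\mathtt{R}$ that are both \emph{linear} in the occupancy state. The proof has two movements: first establish that the value function under any \emph{fixed} joint policy is linear \wrt occupancy states, and then take a maximum over a finite policy class to obtain piecewise linearity and convexity.

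First I would show, by backward induction on $t$, that for every deterministic joint policy $\mathtt{a}_{t:\ell-1}$ the state-value function $\upsilon_{\mathtt{M}',\gamma,t}^{\mathtt{a}_{t:\ell-1}}$ is linear \wrt occupancy states. The base case $t=\ell$ is immediate, since the terminal value vanishes and is therefore trivially linear. For the inductive step I would invoke the defining recursion $\upsilon_{\mathtt{M}',\gamma,t}^{\mathtt{a}_{t:\ell-1}}(\mathtt{s}_t) = \mathtt{R}(\mathtt{s}_t,\mathtt{a}_t) + \gamma\, \upsilon_{\mathtt{M}',\gamma,t+1}^{\mathtt{a}_{t+1:\ell-1}}(\mathtt{T}(\mathtt{s}_t,\mathtt{a}_t))$. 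By the definitions of $\mathtt{R}$ and $\mathtt{T}$ in $\mathtt{M}'$, the reward $\mathtt{R}(\cdot,\mathtt{a}_t)$ is linear in $\mathtt{s}_t$ and the transition $\mathtt{T}(\cdot,\mathtt{a}_t)$ is linear in $\mathtt{s}_t$; composing the inductively linear next-step value with $\mathtt{T}$ and adding the linear reward yields a linear function of $\mathtt{s}_t$. This is the simultaneous-move analogue of Lemma \ref{lem:linear:policy:value:fct}.

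Next I would appeal to the classical fact that any Markov decision process admits an optimal deterministic policy \citep{bellman,puterman}, so the maximisation defining $\upsilon^*_{\mathtt{M}',\gamma,t}$ can be restricted to the finite set of deterministic joint policies $\mathtt{a}_{t:\ell-1}\in \mathtt{A}_{t:\ell-1}$ with no loss in optimality. Each such policy contributes one linear function, which I would represent by its coefficient vector $\alpha_t^{(\kappa)}\colon \mathtt{X}\times\mathtt{O}_t\to\mathbb{R}$ so that $\upsilon_{\mathtt{M}',\gamma,t}^{\mathtt{a}_{t:\ell-1}}(\mathtt{s}_t)=\langle \mathtt{s}_t,\alpha_t^{(\kappa)}\rangle$. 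Collecting these vectors into a finite set $V_{\mathtt{M}',\gamma,t}$, the optimal value function becomes the upper envelope $\upsilon^*_{\mathtt{M}',\gamma,t}(\mathtt{s}_t)=\max_{\kappa}\langle \mathtt{s}_t,\alpha_t^{(\kappa)}\rangle$, which is exactly the claimed piecewise linear and convex representation.

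The only subtle point — the main obstacle, such as it is — is the linearity of $\mathtt{R}$ and $\mathtt{T}$ in the occupancy state, on which the entire induction rests. This follows directly from their definitions (both are sums over $\mathtt{x}$ with $\mathtt{s}_t(\mathtt{x},\mathtt{o})$ appearing as a linear weight), so no genuine difficulty arises; the whole argument is a faithful transcription of the proof of Theorem \ref{thm:pwlc}, substituting joint decision rules for the private ones of the sequential setting.
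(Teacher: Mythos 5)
Your proof is correct and takes essentially the same approach as the paper: although the paper states this simultaneous-move theorem without proof (deferring to \citet{Dibangoye:OMDP:2016}), its proof of the sequential analogue, Theorem~\ref{thm:pwlc} via Lemma~\ref{lem:linear:policy:value:fct}, is exactly the two-step argument you transcribe---backward induction establishing linearity of the value under a fixed policy from the linearity of $\mathtt{R}(\cdot,\mathtt{a}_t)$ and $\mathtt{T}(\cdot,\mathtt{a}_t)$ in $\mathtt{s}_t$, followed by an upper envelope over finitely many deterministic policies. The only cosmetic point is that the set $\mathtt{A}_{t:\ell-1}$ in the paper's definition of $\upsilon^*_{\mathtt{M}',\gamma,t}$ already ranges over finitely many deterministic joint decision rules, so your appeal to the existence of an optimal deterministic policy is harmless but not strictly needed.
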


With this property as a background, one can easily express the worst-case time complexity of a point-based backup using the max-plane representation of the value functions \wrt OCs.

\begin{restatable}[]{lem}{lempessimisticupdaterulecomplexitysimultaneous}
\label{lem:pessimistic:update:rule:complexity:simultaneous}
For any oMDP $\mathtt{M}'$, let $V_{\mathtt{M}',\gamma, t+1}$ be a finite collection of linear functions \wrt OCs providing the max-plane representation of state-value function $\upsilon_{\mathtt{M}',\gamma, t+1}$. If we let $\mathtt{U}^* \in \argmax_{\mathtt{U}^i\colon i \in \mathbb{N}_{\leq n}} |\mathtt{U}^i|$ and $\mathtt{Z}^* \in \argmax_{\mathtt{Z}^i\colon i \in \mathbb{N}_{\leq n}} |\mathtt{Z}^i|$, then
the complexity of performing a pessimistic update is about $\pmb{O}(|V_{\mathtt{M}',\gamma, t+1}||\mathtt{X}|^2(|\mathtt{U}^*||\mathtt{Z}^*|)^{nt}|\mathtt{U}^*|^{(|\mathtt{U}^*||\mathtt{Z}^*|)^{nt}} )$.
\end{restatable}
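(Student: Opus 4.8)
The plan is to mirror the structure of the proof of Lemma~\ref{lem:pessimistic:update:rule:complexity}, decomposing the cost of a point-based backup over the individual linear functions in $V_{\mathtt{M}',\gamma, t+1}$ and then bounding the per-$\alpha$ cost, with the crucial difference that in the simultaneous-move setting the greedy selection ranges over \emph{joint} decision rules rather than private ones. First I would write $\pmb{C}(V_{\mathtt{M}',\gamma, t+1}) = \sum_{\kappa} \pmb{C}(\alpha_{t+1}^{(\kappa)})$, using that the backup treats each linear function independently, so it suffices to bound the cost $\pmb{C}(\alpha_{t+1}^{(\kappa)})$ associated with a single $\alpha_{t+1}^{(\kappa)}$.

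Next I would analyze that per-$\alpha$ cost in two parts. The first part is the construction of the induced action-value linear function $\beta_t^{(\kappa)}$ over $\mathtt{X}\times\mathtt{O}_t\times\mathtt{U}$, which, as in the sequential case, costs on the order of $\pmb{O}(|\mathtt{X}|^2|\mathtt{O}_t||\mathtt{Z}||\mathtt{U}|)$. The second---and decisive---part is the greedy selection of the maximizing joint decision rule $\mathtt{a}_t$. Here I would invoke the simultaneous-move analog of Theorem~\ref{thm:lower:bound} from \citet{Dibangoye:OMDP:2016}: because each $\mathtt{a}_t$ must factor into private decision rules $\mathtt{a}^i_t\colon \mathtt{O}^i_t\to\mathtt{U}^i$ that depend only on each agent's own history, the objective does not decompose per joint history, and so the worst-case selection enumerates the entire set $\mathtt{A}_t$ of joint decision rules.

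I would then supply the two combinatorial bounds that produce the final expression. Since a joint history at simultaneous step $t$ records $t$ actions and $t$ observations for each of the $n$ agents, I would bound $|\mathtt{O}_t| \le (|\mathtt{U}^*||\mathtt{Z}^*|)^{nt}$ using the largest individual action and observation sets $\mathtt{U}^*$ and $\mathtt{Z}^*$. A joint decision rule assigns one action to each joint history, so the number of such rules is bounded by $|\mathtt{U}^*|^{|\mathtt{O}_t|} \le |\mathtt{U}^*|^{(|\mathtt{U}^*||\mathtt{Z}^*|)^{nt}}$, the double-exponential factor. Multiplying the enumeration cost by the $|\mathtt{X}|^2$ and $|\mathtt{O}_t|=(|\mathtt{U}^*||\mathtt{Z}^*|)^{nt}$ factors incurred per evaluated rule, and then summing over the $|V_{\mathtt{M}',\gamma, t+1}|$ linear functions, would yield $\pmb{O}(|V_{\mathtt{M}',\gamma, t+1}||\mathtt{X}|^2(|\mathtt{U}^*||\mathtt{Z}^*|)^{nt}|\mathtt{U}^*|^{(|\mathtt{U}^*||\mathtt{Z}^*|)^{nt}})$.

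The main obstacle is justifying the double-exponential enumeration factor, i.e., arguing that the greedy joint-decision-rule selection genuinely cannot be factorized per joint history in the worst case. This is precisely the decision entanglement problem: the decentralization constraint couples the per-agent mappings, so that---unlike the sequential reformulation, where Theorem~\ref{thm:lower:bound} selects one private decision rule at a time and pays a cost merely linear in the occupancy-state size (hence polynomial in the sizes of $\mathtt{s}_t$ and $V$)---the simultaneous backup must range over all of $\mathtt{A}_t$, whose cardinality is exponential in $|\mathtt{O}_t|$. Making this contrast precise, rather than the routine counting of histories and rules, is the crux of the argument.
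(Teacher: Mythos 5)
Your proposal follows the paper's own proof essentially verbatim: both mirror the sequential-case Lemma~\ref{lem:pessimistic:update:rule:complexity}, argue that the decentralization constraint prevents per-agent (local) greedy selection so that all joint decision rules in $\mathtt{A}_t$ must be enumerated, bound the number of such rules by $|\mathtt{U}^*|^{(|\mathtt{U}^*||\mathtt{Z}^*|)^{nt}}$ and the per-rule evaluation cost by $\pmb{O}(|\mathtt{X}|^2(|\mathtt{U}^*||\mathtt{Z}^*|)^{nt})$, and multiply by $|V_{\mathtt{M}',\gamma,t+1}|$. The informality you flag about justifying the worst-case enumeration is present in the paper's proof as well, so there is no gap relative to the published argument.
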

\begin{proof}
    The proof proceeds similarly to that of Lemma \ref{lem:pessimistic:update:rule:complexity} except that we are now in the simultaneous-move case. Unlike the sequential-move case, in the simultaneous-move case, we cannot select the action for each agent based solely on the local information. Instead, we need to enumerate all possible joint decision rules, then evaluate each of them, before selecting the best one. The number of joint decision rules is the size of the set of all mappings from agent histories to actions. If we let $\mathtt{U}^* \in \argmax_{\mathtt{U}^i\colon i \in \mathbb{N}_{\leq n}} |\mathtt{U}^i|$ and $\mathtt{Z}^* \in \argmax_{\mathtt{Z}^i\colon i \in \mathbb{N}_{\leq n}} |\mathtt{Z}^i|$, then the number of joint decision rules is about $\pmb{O}(|\mathtt{U}^*|^{(|\mathtt{U}^*||\mathtt{Z}^*|)^{nt}})$. For each linear value function in $V_{\mathtt{M}',\gamma, t+1}$, we need to select the best joint decision rule. The evaluation complexity of a single decision rule is about $\pmb{O}(|\mathtt{X}|^2(|\mathtt{U}^*||\mathtt{Z}^*|)^{nt})$. So the overall, complexity of the update rule in the simultaneous-move case is about $\pmb{O}(|V_{\mathtt{M}',\gamma, t+1}||\mathtt{X}|^2(|\mathtt{U}^*||\mathtt{Z}^*|)^{nt}|\mathtt{U}^*|^{(|\mathtt{U}^*||\mathtt{Z}^*|)^{nt}} )$. Which ends the proof.
\end{proof}

\section{Exploiting PWLC Lower Bounds}

\subsection{Proof of Corollary \ref{cor:lower:bound}}
\label{proofcorlowerbound}

\corlowerbound*
\begin{proof}
The proof follows directly from the definition of a piecewise linear and convex function along with Theorem \ref{thm:pwlc}, which shows piecewise linearity and convexity of optimal state- and action-value functions of sequential occupancy Markov decision process $M'$.
\end{proof}

\subsection{Proof of Theorem \ref{thm:lower:bound}}
\label{proofthmlowerbound}

\thmlowerbound*
\begin{proof}
The proof builds upon \citeauthor{bellman}'s optimality equations, \ie for any point in time $\tau$, and sequential occupancy state $s_\tau$, 
\begin{align}
    v^{\mathtt{greedy}}_\tau &= 
    \max_{a_\tau}~ 
    \left( 
    r(s_\tau,a_\tau) +
    \upsilon_{M',\lambda,\tau+1}(p(s_\tau,a_\tau))
    \right)
\end{align}
The application of Corollary \ref{cor:lower:bound} onto $\upsilon_{M',\lambda,\tau+1}$, we obtain:
\begin{align}
    v^{\mathtt{greedy}}_\tau &= 
    \max_{a_\tau}~ 
    \left( 
    r(s_\tau,a_\tau) +
    \max_{\kappa\in \mathbb{N}_{\leq k}}~
    \left\langle p(s_\tau,a_\tau), \alpha^{(\kappa)}_{\tau+1} \right\rangle 
    \right)
\end{align}
Re-arranging terms in the above equation leads to the following expression, \ie 
\begin{align}
    v^{\mathtt{greedy}}_\tau &= 
    \max_{a_\tau}~ 
    \max_{\kappa\in \mathbb{N}_{\leq k}}~
    \left( 
    r(s_\tau,a_\tau) +
    \left\langle p(s_\tau,a_\tau), \alpha^{(\kappa)}_{\tau+1} \right\rangle 
    \right)
\end{align}
Exploiting the bilinearity of both reward and transition functions $r$ and $p$ results in the following expression, \ie %
\begin{align}
    v^{\mathtt{greedy}}_\tau &= 
    \max_{a_\tau}~ 
    \max_{\kappa\in \mathbb{N}_{\leq k}}~
    \sum_{x_\tau,o_\tau, u_\tau}
    (\hat{s}_\tau\odot\hat{a}_\tau)(x_\tau,o_\tau, u_\tau)  \\ 
    &\quad 
    \cdot \left( 
    \lambda(\tau) \cdot
    r_{x_\tau,u_\tau}(\tau) +
    \sum_{x_{\tau+1},z_{\tau+1}}~
    p_{x_\tau,x_{\tau+1}}^{u_\tau,z_{\tau+1}}(\tau) \cdot 
    \alpha^{(\kappa)}_{\tau+1}(x_{\tau+1},\langle o_\tau, u_\tau, z_{\tau+1} \rangle)
    \right)
\end{align}
The substitution of quantity in the bracket of the hand-right side of the above equation by $\beta^{(\kappa)}_\tau(x_\tau, o_\tau,u_\tau )$ gives,
\begin{align}
    v^{\mathtt{greedy}}_\tau &= 
    \max_{a_\tau}~ 
    \max_{\kappa\in \mathbb{N}_{\leq k}}~
    \sum_{x_\tau,o_\tau, u_\tau}
    (\hat{s}_\tau\odot\hat{a}_\tau)(x_\tau,o_\tau, u_\tau) \cdot
    \beta^{(\kappa)}_\tau(x_\tau, o_\tau,u_\tau )
\end{align}
Again re-arranging terms to isolate histories $o_\tau^{\rho(\tau)}$ from others yields
\begin{align}
    v^{\mathtt{greedy}}_\tau &= 
    \max_{a_\tau}~ 
    \max_{\kappa\in \mathbb{N}_{\leq k}}~
    \sum_{o^{\rho(\tau)}_\tau}
    \left(
    \sum_{x_\tau,o^{-\rho(\tau)}_\tau}
    s_\tau(x_\tau,o_\tau) \cdot 
    \beta^{(\kappa)}_\tau(x_\tau, o_\tau,a_\tau(o^{\rho(\tau)}_\tau) )
    \right)
\end{align}
It is worth noticing that the value in the bracket quantity of the right-hand side in the above equation depends on $a_\tau$ only through $\kappa$ and $o^{\rho(\tau)}_\tau$, \ie $a^{(\kappa)}_\tau(o^{\rho(\tau)}_\tau)$.
Which means one can move the choice of $u_\tau = a^{(\kappa)}_\tau(o^{\rho(\tau)}_\tau)$ inside the second $\max$ operator of the above equation, \ie 
\begin{align}
    v^{\mathtt{greedy}}_\tau &= 
    \max_{\kappa\in \mathbb{N}_{\leq k}}~
    \sum_{x_\tau,o_\tau}
    s_\tau(x_\tau,o_\tau) \cdot 
    \beta^{(\kappa)}_\tau(x_\tau, o_\tau,a^{(\kappa)}_\tau(o^{\rho(\tau)}_\tau) )\\
    a^{(\kappa)}_\tau(o^{\rho(\tau)}_\tau) &\in 
    \argmax_{u_\tau}~ 
    \sum_{x_\tau,o^{-\rho(\tau)}_\tau}
    s_\tau(x_\tau,o_\tau) \cdot 
    \beta^{(\kappa)}_\tau(x_\tau, o_\tau,u_\tau ).
\end{align}
The substitution of $\beta^{(\kappa)}_\tau(x_\tau, o_\tau,u_\tau )$ by $\alpha_\tau^{(\kappa),a^{(\kappa)}_\tau}$ directly leads to the following results
\begin{align}
    (a^{(\kappa)}_\tau,\alpha^{\mathtt{greedy}}_\tau) &= 
    \argmax_{(a^{(\kappa)}_\tau,\alpha_\tau^{(\kappa),a^{(\kappa)}_\tau})\colon \kappa\in \mathbb{N}_{\leq k}}~
    \left\langle s_\tau, 
    \alpha_\tau^{(\kappa),a^{(\kappa)}_\tau} \right\rangle\\
    a^{(\kappa)}_\tau(o^{\rho(\tau)}_\tau) &\in 
    \argmax_{u_\tau}~ 
    \sum_{x_\tau,o^{-\rho(\tau)}_\tau}
    s_\tau(x_\tau,o_\tau) \cdot 
    \beta^{(\kappa)}_\tau(x_\tau, o_\tau,u_\tau ).
\end{align}
Pair $(a^{(\kappa)}_\tau,\alpha^{\mathtt{greedy}}_\tau)$ is guaranteed to achieve performance $v^{\mathtt{greedy}}_\tau$ at sequential occupancy state $s_\tau$ and finite collection $V_{M',\lambda,\tau}$. 
Which ends the proof.
\end{proof}

\section{The $o$SARSA Algorithm}
 \label{sec:alg:osarsa}

\begin{algorithm}[!h]
    \caption{The $\epsilon$-greedy decision-rule selection with a portfolio of heuristics.}
    \label{algo:oSARSA_portfolio}
      {
      	    Set $\epsilon$.\\
	    Set current state $s_\tau$.\\
	    Set next-step state-value linear function $\alpha_{\tau+1}$.\\
	    Set the portfolio, \eg $\Pi \gets \{\pi^{\mathtt{random}}, \pi^{\mathtt{mdp}},  \pi^{\mathtt{blind}}\}$. \\
	    Set the distribution over $\Pi$, \eg $\nu(\Pi) \gets \{0.5.,0.25,0.25\}$.\\
     	    \BlankLine
    
	    \uIf{$\epsilon < \mathtt{random(0,1)} $}{
        			$a^{s_\tau}_\tau \gets$ greedy decision-rule selection \wrt $\alpha_{\tau+1}$.\\
    	     }
 
	     \uElse{
        		Sample a heuristic policy $\pi\sim \nu$ from the portfolio. \\
        		$a^{s_\tau}_\tau \gets $ extract a valid decision rule from $\pi$ at state $s_\tau$.\\
    	     }   

	    \BlankLine
	    $\pmb{\mathtt{return}}$ the selected decision rule $a^{s_\tau}_\tau$.
     }     
\end{algorithm}

\begin{algorithm}[!h]
    \caption{The acceptance rule based on simulated annealing.}
    \label{algo:oSARSA_SA}
      	    Set $\epsilon$.\\
      	    Set $\varsigma>0$.\\
	    Set current state $s_\tau$.\\
	    Set best value so far $g_{\tau+1}$.\\
	    Set selected decision rule $a^{s_\tau}_\tau$.\\
	    Set next-step state-value linear function $\alpha_{\tau+1}$.\\
     	    \BlankLine

    Compute   $\mathtt{temperature} \gets \varsigma \cdot \epsilon$ and  $\tilde{g}_{\tau+1} \gets \alpha_{\tau+1}(T(s_\tau, a^{s_\tau}_\tau))$. \\
    $\pmb{\mathtt{return}}$ {$\tilde{g}_{\tau+1} \geq g_{\tau+1}$, or $\exp{\left( \frac{\tilde{g}_{\tau+1} - g_{\tau+1}}{ \mathtt{temperature}} \right)} > \mathtt{random(0,1)}$, or $\mathtt{temperature}=0$}. 
\end{algorithm}

Algorithm \ref{algo:oSARSA} describes an enhanced version of the $o$SARSA algorithm \citep{DibangoyeBICML18}.
This version of the $o$SARSA algorithm differs in the exploration strategy, the greedy-action selection rule, and the policy-improvement rule. The original $o$SARSA algorithm used a standard $\epsilon$-greedy exploration strategy, whereas the enhanced version exploits a portfolio of three policies: random policy, underlying MDP policy, and blind policy. Throughout the experiments, we initialize $\epsilon$ to $0.5$, \cf \Cref{algo:oSARSA}. Furthermore, the distribution over the portfolio is set to $\{0.5, 0.25, 0.25\}$, \cf \Cref{algo:oSARSA_portfolio}. In such a setting, the random action selection in the standard $\epsilon$-greedy exploration strategy is replaced by a selection of the next action according to the distribution over the portfolio. The greedy-action selection rule differs according to the single-agent reformulation at hand. We used mixed-integer linear programs to perform the greedy action selection when facing simultaneous-move reformulation $\mathtt{M}'$. The resulting algorithm is noted as $o$SARSA$^\mathtt{sim}$. On the other hand, we used the sequential greedy action selection, \cf Theorem \ref{thm:lower:bound}, when facing sequential-move reformulation $M'$. The resulting algorithm is named $o$SARSA$^\mathtt{seq}$. The last  of the differences with the original $o$SARSA algorithm is the policy-improvement rule. The enhanced version uses a metaheuristic, an acceptance rule based on Simulated Annealing \citep{rutenbar1989simulated}, in the selection of the next best policy. In particular, a modification of the current policy is kept not only if the performance improves but also in the opposite case, with a probability depending on the loss and on a temperature coefficient decreasing with time, \cf \Cref{algo:oSARSA_SA}.  The temperature coefficient was set to $4$ across all experiments. 

\section{Many-Agent Domains} \label{sec:multi:agent:benchmarks}

This section describes the many-agent domains used in our experiments. These domains were previously extended from the two-agent domains in \citet{peralez2024solving}. We report them here for the sake of completeness. We extended another two-agent domain, namely gridsmall, to a many-agent version. 

\subsection{Multi-Agent Tiger}
The two-agent tiger problem describes a scenario where agents face two closed doors, one of which conceals a treasure while the other hides a dangerous tiger. Neither agent knows which door leads to the treasure and which one to the tiger, but they can receive partial and noisy information about the tiger's location by listening. At any given time, each agent can choose to open either the left or right door, which will either reveal the treasure or the tiger, and reset the game. To gain more information about the tiger's location, agents can listen to hear the tiger on the left or right side, but with uncertain accuracy.  We have extended this problem to an $n$-agent version.
Each agent keeps the same individual actions and observations; as a consequence, the total number of actions and observations grows exponentially with $n$: $|U| = 3^n$, $|Z| = 2^n$. The state space, corresponding to the position of the tiger, is unchanged, i.e. $|S| = 2$.
The reward function was adapted. Denoting $r_l$ and $r_g$ the ratio of agents listening and opening the good door respectively, in the absence of wrong openings the reward $r$ is set to $r = -2 \times r_l + 20 \times r_g$. Hence, as in the original $2$-agent game, the maximum reward is $20$ (when all agent open the good door) while the simple policy consisting in listening whatever is observed still accumulates $-2$ at each time-step. In the presence of wrong openings, the reward is set to $r = -2 \times r_l + 20 \times r_g -100 / c $, with $c = 1 + (|wrongDoors| - 1) / (n - 1)$, encouraging agents to synchronize.

\subsection{Many-Agent GridSmall}
In this scenario, $n$ agents want to meet as quickly as possible on a two-dimensional grid. Each agent’s
possible actions include moving north, south, west, east and staying at the same place. The actions of a
given agent do not affect the other agents. After taking an action, each agent can sense partial information about its own location: the agent is informed about its column in the grid but not about its row. The state of the problem is the position of the agents on the $2\times 2$ grid, leading to $|S| = 4^n$. Each agent keeps the same individual actions and observations as in the original $2$-agent setting; in consequence the total number of actions and observations also grows exponentially with $n$: $|U| = 5^n$, $|Z| = 2^n$. When an agent try to move in a direction, its action is misinterpreted with a probability of $0.5$, while after choosing the stay action the agent always stays in the same cell. The reward is set to $1$ when \emph{all} agents are in the same cell and to $0$ otherwise.

\subsection{Many-Agent Recycling Robot}
The recycling robot task is as a single-agent problem. Later on,  \citet{amato2012optimizing} generalized as a two-agent version in 2012. The multi-agent formulation requires robots to work together to recycle soda cans. In this problem, both robots have a battery level, which can be either high or low. They have to choose between collecting small or big wastes and recharging their own battery level. Collecting small or big wastes can decrease the robot's battery level, with a higher probability when collecting the big waste. When a robot's battery is completely exhausted, it needs to be picked up and placed onto a recharging spot, which results in a negative reward. The coordination problem arises as robots cannot pick up a big waste independently. To solve this problem, an $n$-agent Dec-POMDP was derived by allowing a big soda to be picked up only when all robots try to collect it simultaneously. The other transition and observation probabilities come from the $n$ independent single-agent models. 

\subsection{Many-Agent Broadcast Channel}
In 1996, \citet{Ooi96} introduced a scenario in which a unique channel is shared by $n$ agents, who aim at transmitting packets. The time is discretized, and only one packet can be transmitted at each time step. If two or more agents attempt to send a packet at the same time, the transmission fails due to a collision. In 2004, \citet{HansenBZ04} extended this problem to a partially observable one, focusing on two agents. We used similar adaptations to define a partially observable version of the original $n$-agent broadcast channel.

\section{Experiments}



\subsection{Scalability}

This section studies how do the increase in the planning horizons or team sizes affect the performance of both $o$SARSA$^\mathtt{seq}$ and $o$SARSA$^\mathtt{sim}$ on two-agent as well as many-agent domains.

\paragraph{Planning Horizons.} Figure \ref{fig:iter_MarsGrid3x3} shows  $o$SARSA$^\mathtt{seq}$ outperforms $o$SARSA$^\mathtt{sim}$ for different planning horizons on two-agent mars and grid3x3. The complexity of a single episode grows exponentially using $o$SARSA$^\mathtt{sim}$, whereas it grows polynomially using $o$SARSA$^\mathtt{seq}$. Perhaps surprisingly, $o$SARSA$^\mathtt{seq}$ does not suffer from increasing the planning horizon. This was unexpected since the planning horizon $\ell'$ in the sequential central planning reformulation is $n$ times longer than the planning horizon $\ell$ in the simultaneous central planning reformulation. In general, for small team sizes, the increase in the planning  horizon does not seem to be a major bottleneck in the scalability of the $o$SARSA$^\mathtt{seq}$ algorithm.

\begin{figure}[!h]	
    \caption{Average iteration time for two-agent mars and grid3x3 problems with different planning horizons. Note that for the larger planning horizon ($\ell=100$) on grid3x3, $o$SARSA$^\mathtt{sim}$ is unable to complete even one iteration within the one-hour period.}
    \label{fig:iter_MarsGrid3x3}
    \centering
    \includegraphics[width=.5\columnwidth]{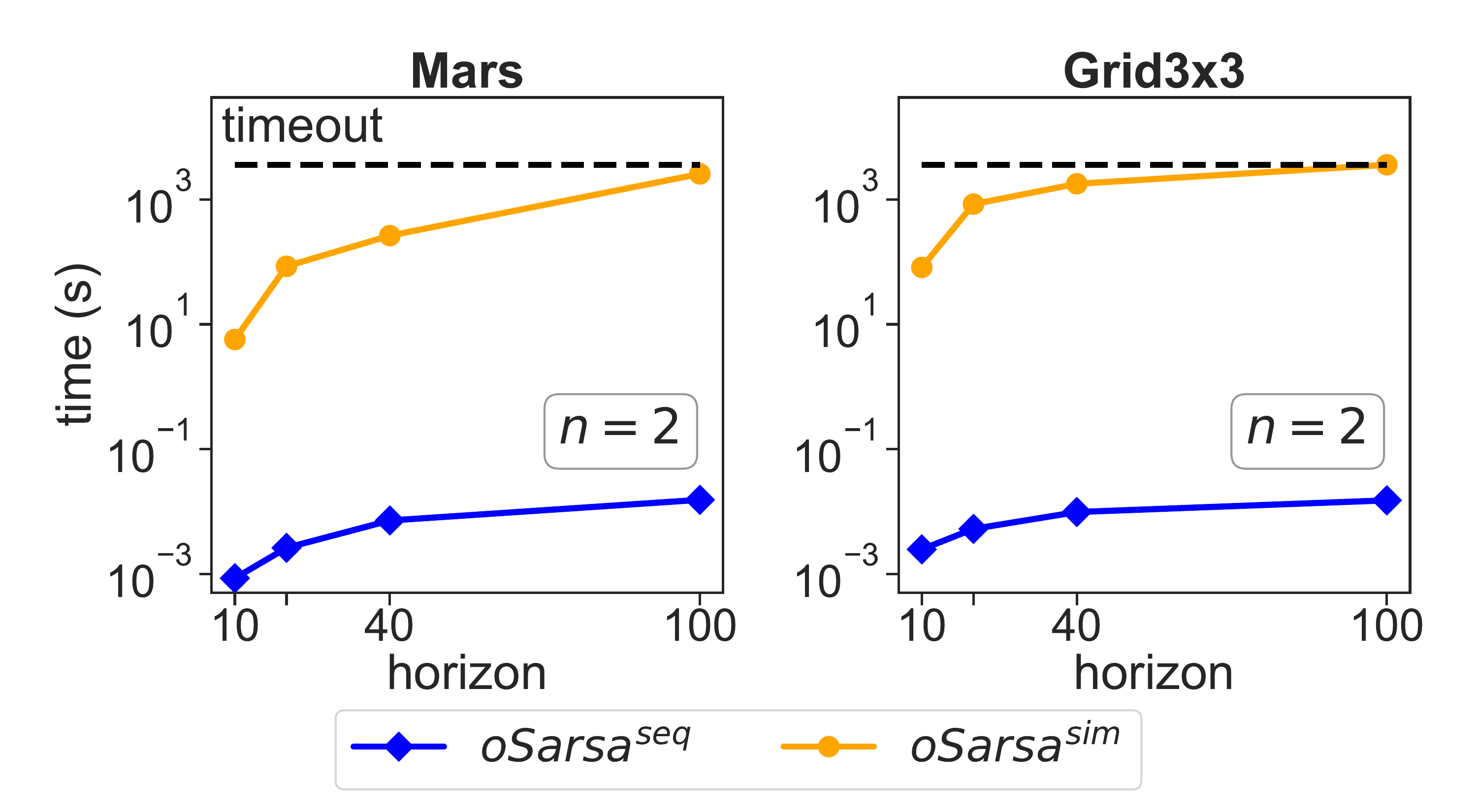}
\end{figure}

\paragraph{Team Sizes.} Figure \ref{fig:iter_TigerRecycling}  shows  $o$SARSA$^\mathtt{seq}$ outperforms $o$SARSA$^\mathtt{sim}$ for different team sizes on recycling and tiger for a fixed (simultaneous-move) planning horizon $\ell=10$. It appears that $o$SARSA$^\mathtt{sim}$ quickly runs out of time as the team size increases. For the many-agent tiger problem, $o$SARSA$^\mathtt{sim}$ runs out of time at $n=3$ whereas $o$SARSA$^\mathtt{seq}$ runs out of time at $n=6$. The reason why the increase in the team size does still somehow affect $o$SARSA$^\mathtt{seq}$ is mainly because state, action and observation spaces increase exponentially with increasing $n$ in the many-agent tiger and recycling problems.  Consequently, even loading these many-agent problems become cumbersome, let alone optimally solving them. There are two main avenues to address this limitation. Factored problems enable us to deal with larger teams while preserving manageable state, action and observation spaces. A notable example includes the network distributed Markov decision process framework \citep{Jillesaamas14}. The second avenue pertains to searching for approximate solutions instead of optimal ones. Neural approaches could help addressing the exponential increase in state, action and observation spaces, yet at the expense of losing the theoretical guarantees.

\begin{figure}[!h]	
    \caption{Average iteration time for many-agent recycling and tiger problems with different team sizes.}
    \label{fig:iter_TigerRecycling}
    \centering
    \includegraphics[width=.5\columnwidth]{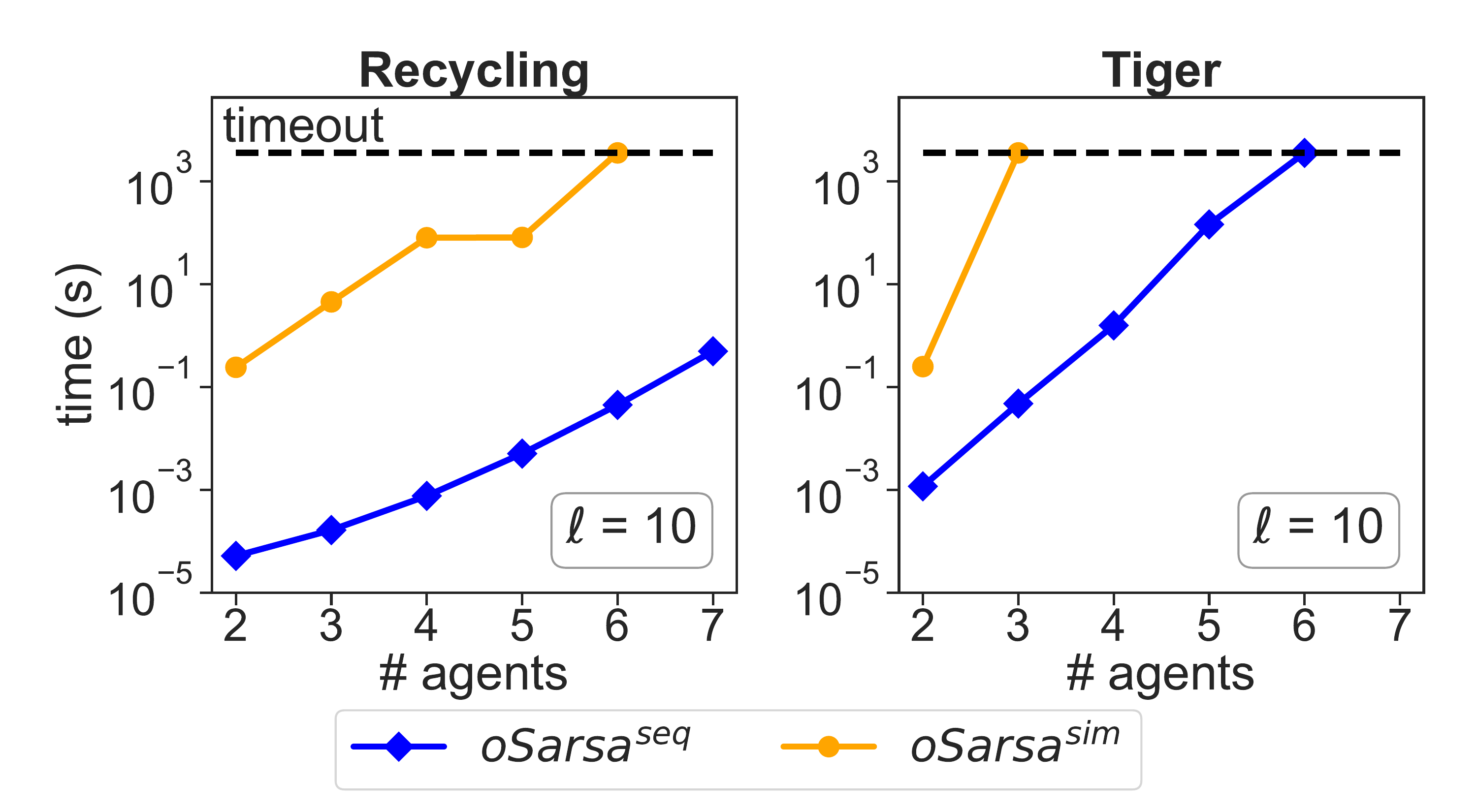}
\end{figure}

\subsection{Anytime Performances}

This section studies the speed of convergence for both $o$SARSA$^\mathtt{seq}$ and $o$SARSA$^\mathtt{sim}$ on two-agent as well as many-agent domains. For the sake of completeness, we also reported the performance of the state-of-the-art algorithm for Dec-POMDPs, FB-HSVI, as well as reinforcement learning approaches, A2C and IQL.

\paragraph{Two-Agent Domains.} Figure \ref{fig:anytime_mars} shows both $o$SARSA$^\mathtt{seq}$ and $o$SARSA$^\mathtt{sim}$ converge towards the best-known values for the two-agent mars problem at planning horizons $\ell=20$ and $\ell=40$. It is worth noticing that at planning horizons $\ell=20$ and $\ell=40$, $o$SARSA$^\mathtt{seq}$ exhibits a faster convergence rate than $o$SARSA$^\mathtt{sim}$. Non-surprisingly, reinforcement learning approaches, A2C and IQL, stabilized at local optima, far away from the best-known values. This is not surprising because these algorithms are not geared to find an optimal solution, unless the problem is weakly coupled, \eg two-agent recycling problem. In contrast, $o$SARSA$^\mathtt{seq}$ and $o$SARSA$^\mathtt{sim}$ are global methods capable of finding optimal solution asymptotically. When the available resources are enough, \eg time budget, they will eventually find an optimal solution. When, instead, the resources are scarce, they may fail to find an optimal solution. A surprising effect in the anytime performance is that the modified policy-improvement rule with the metaheuristic does not seem to negatively affect the overall performance of neither $o$SARSA$^\mathtt{seq}$ nor $o$SARSA$^\mathtt{sim}$. To gain a better understanding on this insight, we refer the reader to our ablation study. 

\begin{figure}[!h]	
    \caption{Anytime values for the two-agent mars problem with planning horizons $\ell=20$ and $\ell=40$}.
    \label{fig:anytime_mars}
    \centering
    \includegraphics[width=.5\columnwidth]{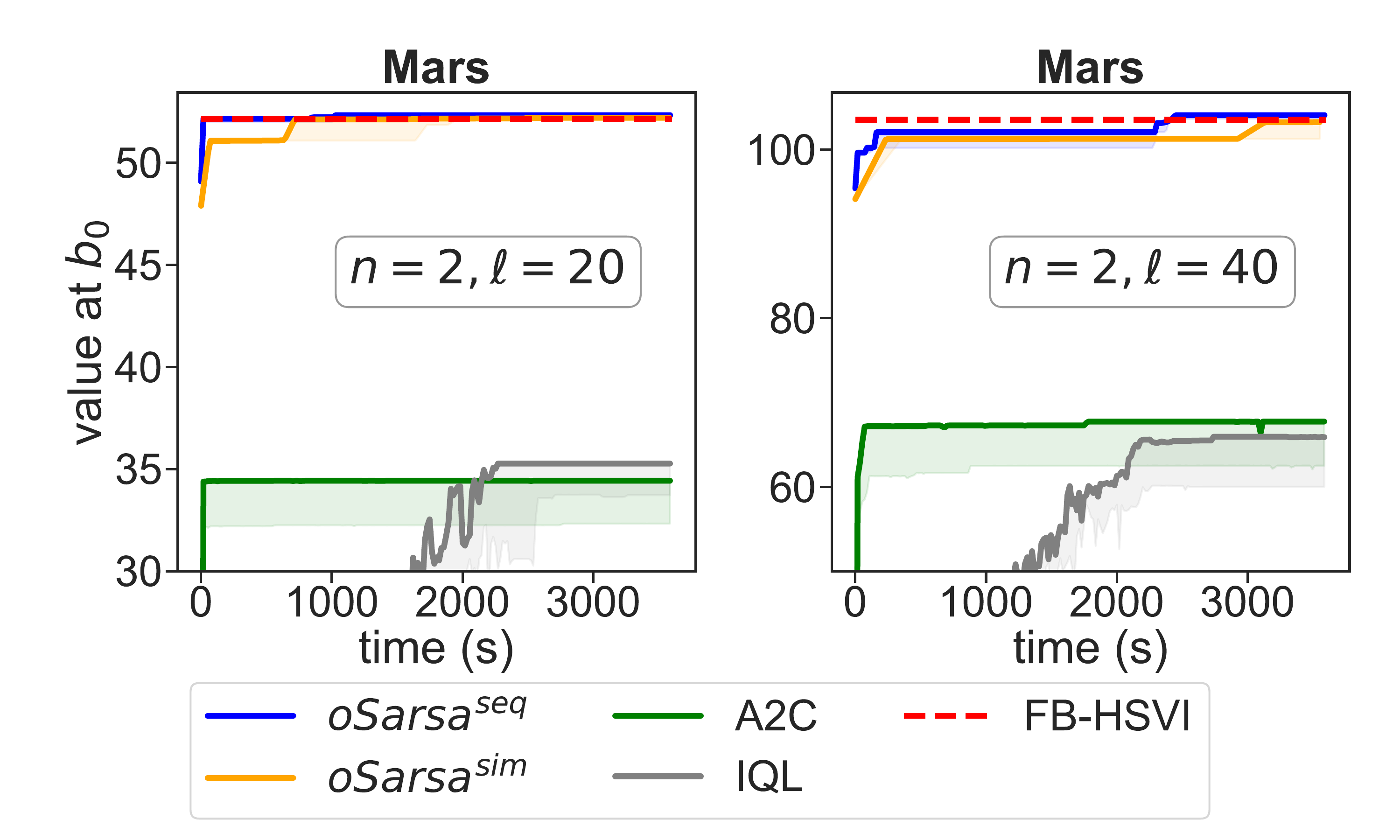}
\end{figure}

\paragraph{Many-Agent Domains.} Figure \ref{fig:anytime_recycling} shows both  $o$SARSA$^\mathtt{seq}$ and $o$SARSA$^\mathtt{sim}$ quickly converge toward the best-known values for the two-agent recycling problem at planning horizon $\ell=10$ and team size $n=2$.  Unsurprisingly, reinforcement learning approaches, A2C and IQL, stabilized at local optima, far away from the best-known values for the two-agent recycling problem at planning horizon $\ell=10$ and team sizes $n=2$ and $n=4$. $o$SARSA$^\mathtt{sim}$ fails to converge to the best-known value for the two-agent recycling problem at planning horizon $\ell=10$ and team size $n=4$. The main reason for this is the scarce resource available for this task. Indeed, as the team size increases, performing even a single episode becomes cumbersome for $o$SARSA$^\mathtt{sim}$.
In constast, $o$SARSA$^\mathtt{seq}$ quickly converges to the best-known value for the two-agent recycling problem at planning horizon $\ell=10$ and team size $n=4$. This is mainly because the sequential backups are polynomial instead of double exponential. So, performing a single episode when using  $o$SARSA$^\mathtt{seq}$ is not prohibitive in comparison to when using $o$SARSA$^\mathtt{sim}$. 

\begin{figure}[!h]	
    \caption{Anytime values for many-agent recycling problem with $n=2$ and $n=4$ agents.}
    \label{fig:anytime_recycling}
    \centering
    \includegraphics[width=.5\columnwidth]{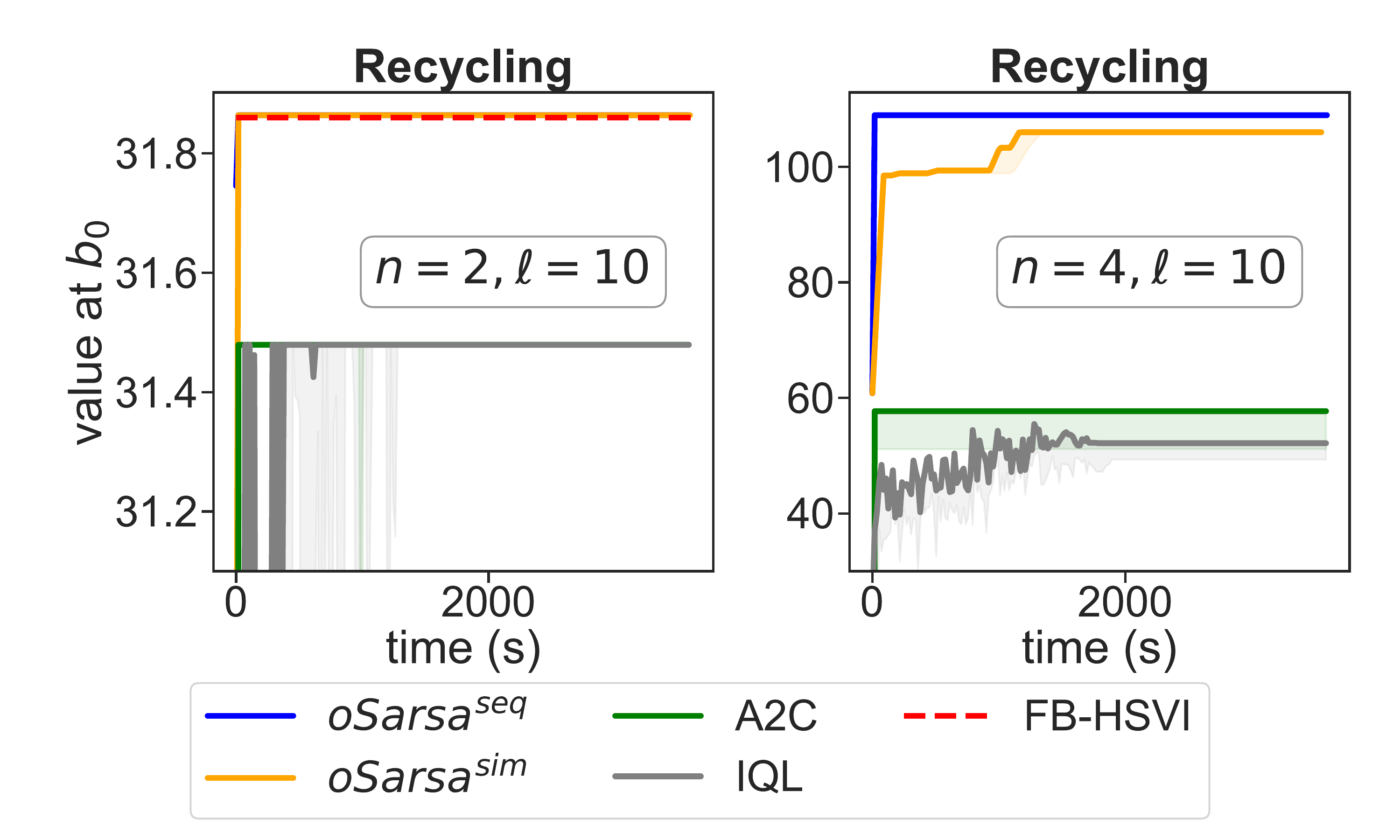}
\end{figure}

\subsection{Ablation study}

We conducted an ablation study to evaluate gains provided by the use of a portfolio of policies and the simulated annealing metaheuristic. We distinguish between two- and many-agent domains, reported in Tables \ref{table:ablation_2agents} and \ref{table:ablation_multiagents}  respectively. We compared $o$SARSA$^\mathtt{seq}$ against three variants: $o$SARSA$^\mathtt{seq}_1$, $o$SARSA$^\mathtt{seq}_2$, and $o$SARSA$^\mathtt{seq}_3$. If we disable from $o$SARSA$^\mathtt{seq}$ the utilisation of the portfolio, it becomes $o$SARSA$^\mathtt{seq}_1$. Instead, if we disable from $o$SARSA$^\mathtt{seq}$ the utilisation of simulated annealing, it becomes $o$SARSA$^\mathtt{seq}_2$. Finally, if we disable from $o$SARSA$^\mathtt{seq}$ the utilisation of both the portfolio and  simulated annealing, it becomes $o$SARSA$^\mathtt{seq}_3$.

\paragraph{Two-Agent Domains.} Table \ref{table:ablation_2agents} shows that overall $o$SARSA$^\mathtt{seq}$ outperforms  $o$SARSA$^\mathtt{seq}_1$, $o$SARSA$^\mathtt{seq}_2$, and $o$SARSA$^\mathtt{seq}_3$ on most tested two-agent domains. It also demonstrates that the utilisation of the portfolio has far positive impact on the performance than the utilisation of simulated annealing. Indeed, the margins between $o$SARSA$^\mathtt{seq}$ and $o$SARSA$^\mathtt{seq}_2$ versus $o$SARSA$^\mathtt{seq}$ and $o$SARSA$^\mathtt{seq}_1$ are largely in favor of $o$SARSA$^\mathtt{seq}_2$. In boxpushing(2) at planning horizon $\ell=100$, for instance, the margin of $o$SARSA$^\mathtt{seq}$  \wrt $o$SARSA$^\mathtt{seq}_1$ is about $1575.53$ while it is only about $5.71$ \wrt $o$SARSA$^\mathtt{seq}_2$. Yet, $o$SARSA$^\mathtt{seq}_1$ outperforms $o$SARSA$^\mathtt{seq}_3$ in most tested two-agent domains. This insight suggests that simulated annealing has a positive effect on the original version of the $o$SARSA algorithm. 


\begin{table}[!h]
 \centering%
\caption{Ablation study for $o$SARSA$^\mathtt{seq}$. Ablations: i) no portfolio ($o$SARSA$^\mathtt{seq}_1$), ii) no simulated annealing ($o$SARSA$^\mathtt{seq}_2$), iii) no portfolio nor simulated annealing ($o$SARSA$^\mathtt{seq}_3$).}
\begin{tabular}{@{}ll r r r r}

\toprule
 \bfseries Domain($n$) & \bfseries planning horizon $\ell$
 & \bfseries $o$SARSA$^\mathtt{seq}$
 & \bfseries $o$SARSA$^\mathtt{seq}_1$
 & \bfseries $o$SARSA$^\mathtt{seq}_2$
 & \bfseries $o$SARSA$^\mathtt{seq}_3$
 \\

\cmidrule[0.4pt](r{0.125em}){1-2}%
\cmidrule[0.4pt](lr{0.125em}){3-3}%
\cmidrule[0.4pt](lr{0.125em}){4-4}%
\cmidrule[0.4pt](l{0.125em}){5-5}%
\cmidrule[0.4pt](l{0.125em}){6-6}%

tiger(2) & $10$ & $\highest{15.18}$ & $\highest{15.18}$ & $\highest{15.18}$ &$13.57$ \\ 
\myrowcolour
&$20$ &$\highest{30.37}$ &$27.14$ &$\highest{30.37}$ &$27.14$\\ 
&$40$  &$\highest{67.09}$ &$65.48$ &$\highest{67.09}$ &$65.48$\\ 
\myrowcolour
&$100$ &$\highest{170.91}$  &$169.30$ &$\highest{170.91}$ &$169.30$\\ 
gridsmall(2) & $10$  &$\highest{6.03}$ &$\highest{6.03}$ &$\highest{6.03}$ &$\highest{6.03}$\\ 
\myrowcolour
&$20$ &$\highest{13.96}$ &$\highest{13.96}$ &$13.90$  &$13.90$\\ 
&$40$ &$\highest{30.93}$  &$\highest{30.93}$ &$29.91$ &$29.91$\\ 
\myrowcolour
&$100$ &$78.37$ &$\highest{82.14}$ &$78.39$ &$78.41$\\ 
boxpushing(2) & $10$ &$\highest{224.26}$ &$199.98$ &$223.57$ &$209.67$\\ 
\myrowcolour
&$20$ &$\highest{470.43}$ &$319.38$ &$466.79$ &$307.87$\\ 
&$40$ &$941.07$ &$512.43$ &$\highest{941.57}$ &$415.91$\\ 
\myrowcolour
&$100$ &$\highest{2366.21}$ &$790.68$ &$2360.50$ &$778.22$ \\
mars(2) & $10$ &$\highest{26.31}$ &$25.74$ &$\highest{26.31}$ &$\highest{26.30}$\\ 
\myrowcolour
&$20$  &$\highest{52.32}$ &$52.19$ &$\highest{52.32}$ &$52.20$\\ 
&$40$  &$\highest{104.07}$ &$101.89$ &$103.57$ &$98.29$ \\ 
\myrowcolour
&$100$  &$255.18$ &$209.14$ &$\highest{256.77}$ &$209.70$ \\ 

\bottomrule
\end{tabular}
\label{table:ablation_2agents}
\end{table}

\paragraph{Many-Agent Domains.} Table \ref{table:ablation_multiagents} shows that overall $o$SARSA$^\mathtt{seq}$ outperforms  $o$SARSA$^\mathtt{seq}_1$, $o$SARSA$^\mathtt{seq}_2$, and $o$SARSA$^\mathtt{seq}_3$ on most tested many-agent domains. It also demonstrates that the utilisation of the portfolio has far positive impact on the performance than the utilisation of simulated annealing. Indeed, the margins between $o$SARSA$^\mathtt{seq}$ and $o$SARSA$^\mathtt{seq}_2$ versus $o$SARSA$^\mathtt{seq}$ and $o$SARSA$^\mathtt{seq}_1$ are largely in favor of $o$SARSA$^\mathtt{seq}_2$. It is also worth noticing that $o$SARSA$^\mathtt{seq}_1$ outperforms $o$SARSA$^\mathtt{seq}_3$ in most tested many-agent domains. This insight suggests that simulated annealing has a positive effect on the original version of the $o$SARSA algorithm. 

\begin{table}[!h]
 \centering%
\caption{Ablation study for the sequential version of $o$SARSA$^\mathtt{seq}$. Ablations: i) no portfolio ($o$SARSA$^\mathtt{seq}_1$), ii) no simulated annealing (oSARSA$^{seq}_2$), iii) no portfolio nor simulated annealing (oSARSA$^{seq}_3$)}
\begin{tabular}{@{}ll r r r r}

\toprule%
 \bfseries Domain($n$) & \bfseries planning horizon $\ell$
 & \bfseries $o$SARSA$^\mathtt{seq}$
 & \bfseries $o$SARSA$^\mathtt{seq}_1$
 & \bfseries $o$SARSA$^\mathtt{seq}_2$
 & \bfseries $o$SARSA$^\mathtt{seq}_3$
 \\

\cmidrule[0.4pt](r{0.125em}){1-2}%
\cmidrule[0.4pt](lr{0.125em}){3-3}%
\cmidrule[0.4pt](lr{0.125em}){4-4}%
\cmidrule[0.4pt](l{0.125em}){5-5}%
\cmidrule[0.4pt](l{0.125em}){6-6}%

tiger(3) & $10$ &$\highest{11.29}$ &$\highest{11.29}$ &$\highest{11.29}$ &$8.17$ \\ 
\myrowcolour
tiger(4) & $10$ &$\highest{6.80}$ &$1.94$ &$\highest{6.80}$ &$1.94$ \\ 
tiger(5) & $10$ &$2.41$ &$-4.15$ &$\highest{3.69}$ &$-4.15$ \\
\myrowcolour
recycling(3) & $10$ &$\highest{85.23}$ &$\highest{85.23}$ &$\highest{85.23}$ &$85.02$ \\ 
recycling(4) & $10$ &$\highest{108.92}$ &$\highest{108.92}$ &$\highest{108.92}$ &$\highest{108.92}$ \\ 
\myrowcolour
recycling(5) & $10$ &$\highest{133.84}$ &$122.50$ &$\highest{133.84}$ &$\highest{133.84}$ \\ 
recycling(6) & $10$ &$\highest{159.00}$ &$112.25$ &$\highest{159.00}$ &$146.70$ \\ 
\myrowcolour
recycling(7) & $10$ &$\highest{185.50}$ &$130.96$ &$\highest{185.50}$ &$130.96$ \\ 
gridsmall(3) & $10$ &$\highest{5.62}$ &$\highest{5.62}$ &$\highest{5.62}$ &$5.17$\\ 
\myrowcolour
gridsmall(4) & $10$ &$\highest{4.09}$ &$4.04$ &$4.04$ &$4.04$\\ 

\bottomrule
\end{tabular}
\label{table:ablation_multiagents}
\end{table}

\end{document}